\newcommand{\circled}[1]{\small{\raisebox{.6pt}{\textcircled{\raisebox{-.8pt}{#1}}}}}
\newcommand{\tbd}{\tilde \bd}
\newcommand{\td}{\tilde d}
\newcommand{\tg}{\tilde g}
\newcommand{\tE}{\tilde E}
\newcommand{\tT}{\tilde T}
\newcommand{\dVC}{d^{\textup{(VC)}}}
\newcommand{\tOmega}{\tilde \Omega}
\newcommand{\cfrakR}{\mathfrak{R}} %\usepackage{eufrak}
\newcommand{\barg}{\bar g}
\newcommand{\bbx}{\overset{\rightharpoonup}{\bx}}
\newcommand{\ind}{\mathop\mathrm{ind}}
\newcounter{optproblem}
\newtheoremstyle{mytheoremstyle} % name
    {\topsep}                    % Space above
    {\topsep}                    % Space below
    {\normalfont}                % Body font
    {}                           % Indent amount
\theoremstyle{mytheoremstyle}
\newtheorem{theorem}{Theorem}[section]
\newtheorem{remark}[theorem]{Remark}
\newtheorem{proposition}[theorem]{Proposition}
\newtheorem{corollary}[theorem]{Corollary}
\newtheorem{claim}{Claim}[section]
\newtheorem*{theorem*}{Theorem}
\newtheorem*{lemma*}{Lemma}
\newtheorem*{remark*}{Remark}
\newtheorem*{claim*}{Claim}
\newtheorem{lemma}[theorem]{Lemma}%[section]
\newtheorem{assumption}{Assumption}
\theoremstyle{remark}
\newtheorem{definition}{Definition}[section]
\DeclareMathAlphabet{\pazocal}{OMS}{zplm}{m}{n}
\DeclareMathAlphabet{\mathpzc}{OMS}{pzc}{m}{it}
\setlist[itemize]{leftmargin=*}
\renewcommand{\hat}{\widehat}
\newcommand{\bfm}[1]{\ensuremath{\mathbf{#1}}}
\newcommand{\bfsym}[1]{\ensuremath{\boldsymbol{#1}}}
\def\ba{\boldsymbol a}   \def\bA{\bfm A}
\def\bd{\bfm d}
   \def\bI{\bfm I}  
   \def\bK{\bfm K}
     \def\NN{\mathbb{N}}
     \def\RR{\mathbb{R}}
   \def\bS{\bfm S}  
   \def\bU{\bfm U}  
\def\bv{\bfm v}     
\def\bx{\bfm x}   \def\bX{\bfm X}  
   \def\bY{\bfm Y}  
   \def\bZ{\bfm Z}
 \def\cA{{\cal  A}}
 \def\cC{{\cal  C}}
 \def\cE{{\cal  E}}
 \def\cF{{\cal  F}}
 \def\cH{{\cal  H}}
 \def\cL{{\cal  L}}
 \def\cO{{\cal  O}}
 \def\cP{{\cal  P}}
 \def\cX{{\cal  X}}
 \def\cY{{\cal  Y}}
\def\balpha{\bfsym \alpha}
\def\bsigma{\bfsym \sigma}
\def\+#1{\mathcal{#1}}
\def\-#1{\textup{#1}}
\def\set#1{\left\{ #1 \right\}}
\def\pth#1{\left( #1 \right)}
\def\bth#1{\left[ #1 \right]}
\def\abth#1{\left | #1 \right |}
\def\defeq {\coloneqq}
\newcommand{\overbar}[1]{\mkern 1.5mu\overline{\mkern-1.5mu#1\mkern-1.5mu}\mkern 1.5mu}
\def \longmid {\,\middle\vert\,}
\DeclareMathSymbol{\relcolon}{\mathrel}{operators}{"3A}
\newcommand{\La}{\left\langle\kern-0.64ex\left\langle}
\newcommand{\Ra}{\right\rangle\kern-0.64ex\right\rangle}
\def\Norm#1#2{{\left\vert\kern-0.4ex\left\vert\kern-0.4ex\left\vert #1
    \right\vert\kern-0.4ex\right\vert\kern-0.4ex\right\vert}_{#2}}
\def\norm#1#2{{\left\|#1\right\|}_{#2}}
\def\ltwonorm#1{\norm{#1}{2}}
\def\tr#1{\textup{tr}\left(#1\right)}
\newcommand{\1}{{\rm 1}\kern-0.25em{\rm I}}
\def\indict#1{{\rm 1}\kern-0.25em{\rm I}_{\set{#1}}}
\def \eps  {\epsilon}
\def \eps {\varepsilon}
\def \diff {{\rm d}}
\def \iprod#1#2{\left\langle #1, #2 \right\rangle}
\def\set#1{\left\{#1\right\}}
\def\unitsphere#1{\mathbb{S}^{#1}}
\DeclareMathOperator*{\argmin}{arg\,min}
\def \E {\mathbb{E}}
\def\Expect#1#2{\E_{#1}\left[#2\right]}
\def \Pr {\textup{Pr}}
\newcommand{\Prob}[1]{\Pr\left[#1\right]}
\def \Var#1{\textup{Var}\left[#1\right]}
\def \lsim {\lesssim}
\newcommand{\beq}{\begin{equation}}
\newcommand{\eeq}{\end{equation}}
\newcommand{\beqa}{\begin{eqnarray}}
\newcommand{\eeqa}{\end{eqnarray}}
\newcommand{\beqas}{\begin{eqnarray*}}
\newcommand{\eeqas}{\end{eqnarray*}}
\def\bal#1\eal{\begin{align}#1\end{align}}
\def\bals#1\eals{\begin{align*}#1\end{align*}}
\def\bsal#1\esal{\begin{small}\begin{align}#1\end{align}\end{small}}
\def\bsals#1\esals{\begin{small}\begin{align*}#1\end{align*}\end{small}}
\def\bsfal#1\esfal{\begin{small}\begin{flalign}#1\end{flalign}\end{small}}
\begin{document}

\begin{frontmatter}
\title{Improved Generalization Bounds for Transductive Learning by Transductive Local Complexity and Its Applications}
%\title{A sample article title with some additional note\thanksref{t1}}
\runtitle{Transductive Local Complexity and Its Applications}
%\thankstext{T1}{A sample additional note to the title.}

\begin{aug}
%%%%%%%%%%%%%%%%%%%%%%%%%%%%%%%%%%%%%%%%%%%%%%%
%% Only one address is permitted per author. %%
%% Only division, organization and e-mail is %%
%% included in the address.                  %%
%% Additional information can be included in %%
%% the Acknowledgments section if necessary. %%
%% ORCID can be inserted by command:         %%
%% \orcid{0000-0000-0000-0000}               %%
%%%%%%%%%%%%%%%%%%%%%%%%%%%%%%%%%%%%%%%%%%%%%%%
\author[A]{\fnms{Yingzhen}~\snm{Yang}\ead[label=e1]{yingzhen.yang@asu.edu}},

%%%%%%%%%%%%%%%%%%%%%%%%%%%%%%%%%%%%%%%%%%%%%%
%% Addresses                                %%
%%%%%%%%%%%%%%%%%%%%%%%%%%%%%%%%%%%%%%%%%%%%%%
\address[A]{School of Computing and Augmented Intelligence, Arizona State University \printead[presep={,\ }]{e1}}

\end{aug}

\begin{abstract}
We introduce Transductive Local Complexity (TLC) as a new tool for analyzing the generalization performance of transductive learning methods. Our work extends the classical Local Rademacher Complexity (LRC) to the transductive setting, incorporating substantial and novel components beyond standard inductive LRC analysis. Although LRC has been used to obtain sharp generalization bounds and minimax rates for inductive tasks such as classification and nonparametric regression, it has remained an open problem whether a localized Rademacher complexity framework can be effectively adapted to transductive learning to achieve sharp or nearly sharp bounds consistent with inductive results. We provide an affirmative answer via TLC. TLC is constructed by first deriving a new concentration inequality for the supremum of empirical processes capturing the gap between test and training losses, termed the test-train process, under uniform sampling without replacement, which leverages a novel combinatorial property of the test-train process and a new proof strategy applying the exponential Efron-Stein inequality twice. A subsequent peeling strategy applied to a new decomposition of the expectation of the test-train process and a new surrogate variance operator then yield excess risk bounds in the transductive setting that are nearly consistent with classical LRC-based inductive bounds up to a logarithmic gap. We further advance the current state-of-the-art in transductive learning through two applications: (1) for realizable transductive learning over binary-valued function classes with finite VC dimension of $\dVC$ and $u \ge m \ge \dVC$, where $u$ and $m$ are the number of test features and training features, our Theorem~\ref{theorem:TLC-delta-ell-f-excess-risk-upper-bound-VC-dim} gives a nearly optimal bound $\Theta(\dVC \log(me/\dVC)/m)$ matching the minimax rate $\Theta(\dVC/m)$ up to $\log m$, resolving a decade-old open question; and (2) Theorem~\ref{theorem:TLC-kernel} presents a sharper excess risk bound for transductive kernel learning compared to the prior local complexity–based result in~\citep{TolstikhinBK14-local-complexity-TRC}.
\end{abstract}

\begin{keyword}[class=MSC]
\kwd[Primary ]{68Q32}
%\kwd{}
\kwd[; secondary ]{62G08}
\end{keyword}

\begin{keyword}
\kwd{Transductive Learning}
\kwd{Transductive Local Complexity}
\kwd{Sampling Without Replacement}
\kwd{Concentration Inequality}
\end{keyword}

\end{frontmatter}
%%%%%%%%%%%%%%%%%%%%%%%%%%%%%%%%%%%%%%%%%%%%%%
%% Please use \tableofcontents for articles %%
%% with 50 pages and more                   %%
%%%%%%%%%%%%%%%%%%%%%%%%%%%%%%%%%%%%%%%%%%%%%%
%\tableofcontents

\section{Introduction}
We study transductive learning, where the learner has access to both labeled training data and unlabeled test data, and the goal is to predict the labels of the test data. Obtaining tight generalization bounds in this setting is a central problem in statistical learning theory. Classical tools from inductive learning, such as Rademacher complexity and VC dimension, have been applied to transductive settings, including empirical risk minimization, transductive regression, and transductive classification~\citep{Vapnik1982,Vapnik1998,Cortes2006-transductive-regression,El-Yaniv2009-TRC}. To achieve sharper generalization bounds, localized measures such as the Local Rademacher Complexity (LRC)~\citep{bartlett2005} have been employed, notably in~\citep{TolstikhinBK14-local-complexity-TRC}. The classical LRC framework~\citep{bartlett2005} provides a sharp excess risk bound for the empirical risk minimizer $\hat f$ in inductive learning: for every $x > 0$, with probability at least $1 - \exp(-x)$,
\bal\label{eq:conceptual-risk-bound-inductive}
%\resizebox{0.95\textwidth}{!}{$
\textsf {Excess Risk of } \,\, \hat f \le  \Theta\pth{\textsf {Fixed Point  for Certain Empirical Process} + \frac xn },
%$}
\eal%
where $\Theta$ only hides a constant factor, and $n$ is the size of the training data. Given the fact that LRC is capable of achieving various minimax rates for M-estimators in tasks such as nonparametric regression in the inductive regime, we propose to solve the following interesting and important question for LRC-based transductive learning: can we have a local complexity-based bound for the excess risk of transductive learning matching or nearly matching that for the inductive setting?
%\vspace{-.1in}
%\begin{cframed}
%\vspace{-.1in}
%\noindent Can we have a local complexity-based bound
%for the excess risk of transductive learning matching or nearly matching that for the inductive setting?
%\vspace{-.1in}
%\end{cframed}
%\vspace{-.1in}

The most relevant result which addresses the above open problem, to the best of our knowledge, is presented in \citep[Corollary 14]{TolstikhinBK14-local-complexity-TRC}, where the excess risk bound is given as the following inequality which happens with high probability for transductive kernel learning:
\bal\label{eq:local-complexity-existing-excess-risk-bound}
{\textsf {Excess Risk of} } \,\, \hat f &\le  \Theta\pth{\frac nu r_m^*+ \frac nm r_u^*+\frac 1m +\frac 1u}. %\quad \forall f \in \cF,
\eal
Here $r_m^*$ and $r_u^*$ are the fixed points of upper bounds for certain empirical processes, where $m,u$ are the size of training data and test data. It is remarked that the above bound may diverge due to the undesirable factors of $n/m$ and $n/u$ before the fixed points. With $m$ or $u$ grows in a much slower rate than $n$ and $n = u+m$,  $n/m \cdot r_u^* + n/u \cdot r_m^*$ may not converge to $0$. An example for the standard transductive kernel learning is given in Section~\ref{sec:TKL}. As a result, there is a remarkable difference between the current state-of-the-art excess bound (\ref{eq:local-complexity-existing-excess-risk-bound}) in the transductive setting and the excess risk bound (\ref{eq:conceptual-risk-bound-inductive}) in the inductive setting, and the latter always converges to $0$ under standard learning models. Such a gap mainly arises from the lack of effective concentration inequalities of the supremum of empirical processes for sampling without replacement (see examples in Section~\ref{sec:concentration-general-sup-empirical}), while concentration inequalities for sampling with replacement have been widely studied in the inductive learning literature with numerous powerful tools.  We note that the excess risk bound in \citep[Corollary 13]{TolstikhinBK14-local-complexity-TRC} still diverges when $m = o(\sqrt n)$ or $u = o(\sqrt n)$ as $m,u \to \infty$.

One of our main results is the following improved bound for the excess risk of transductive learning. With probability at least $1-3\exp(-x)-3\delta$ for every $x>0$ and $\delta \in (0,1/3)$,
\bal\label{eq:conceptual-excess-risk-TLC}
{\textsf {Excess Risk of} } \,\, \hat f  \le \Theta\pth{r_{u,m} + r^* + \frac{(\log_2 (4\min\set{u,m}/\delta)) x}{\min\set{u,m}}}, (\ref{eq:TLC-ell-f-excess-risk-upper-bound}){\textup{ in Theorem}}~\ref{theorem:TLC-delta-ell-f-excess-risk-upper-bound}.
\eal
Here $r_{u,m},r^*$ are the fixed points of upper bounds for certain empirical processes, which all converge to $0$ with a fast rate as the case in the popular inductive learning models. As a result of the improved excess risk bound (\ref{eq:conceptual-excess-risk-TLC}), we give a confirmative answer to the above open problem with only a logarithmic gap of $\log_2 (4\min\set{u,m}/\delta)$ from the LRC-based inductive bound.

As two key applications, we apply TLC to obtain
a nearly optimal excess risk bound for transductive learning over binary-valued function classes of finite VC-dimension and a sharper excess risk bound for transductive kernel learning,
summarized in Section~\ref{sec:summary-main-results} and detailed in Section~\ref{sec:applications}.
As a preview of our main results, our new concentration inequality for the test-train process in Theorem~\ref{theorem:main-inequality-TLC} leverages a novel combinatorial
property of the test-train process for sampling without replacement and a novel proof strategy which applies the exponential Efron–Stein inequality~\citep[Theorem 2]{Boucheron2003-concentration-entropy-method} twice to characterize its variance. We further advance the state-of-the-art in transductive learning using our TLC-based bounds through two key applications. (1) For realizable transductive learning over $\{0,1\}$-valued function classes with finite VC-dimension $2 \le \dVC < \infty$ and $u \ge m \ge \dVC$, where $u$ and $m$ are the number of test features and training features, Theorem~\ref{theorem:TLC-delta-ell-f-excess-risk-upper-bound-VC-dim} establishes a nearly optimal excess risk bound
$\Theta\pth{\frac{\dVC \log(me/\dVC)}{m}}$,
which matches the minimax lower bound~\citep[Theorem~3]{tolstikhin2016minimaxlowerboundsrealizable}, $\Theta\pth{\frac{\dVC}{m}}$, only up to a $\log m$ factor and resolves the open question, unaddressed for the past decade, of improving the $\log n$ gap in the existing bound to $\log m$.
(2) Theorem~\ref{theorem:TLC-kernel} provides a new, sharper excess risk bound for transductive kernel learning, significantly improving upon the prior local complexity–based result in~\citep{TolstikhinBK14-local-complexity-TRC} under the same assumptions.

It is worthwhile to mention that concentration inequalities about sampling without replacement have been actively studied in the literature~\citep{BARDENET2015-sampling-without-replacement,
Tolstikhin2017-sampling-without-replacement}, including those on the multislice which are based on the modified log-Sobolev inequalities~\citep{Sambale2022}. Compared to~\citep{Tolstikhin2017-sampling-without-replacement}, our bound in Corollary~\ref{corollary:concentration-gu} in Section~\ref{sec:concentration-general-sup-empirical} is sharper with detailed comparison in Section~\ref{sec:concentration-general-sup-empirical}. Furthermore, in contrast with our results, the supremum of empirical process involving sampling without replacement is not addressed in ~\citep{BARDENET2015-sampling-without-replacement}. We remark that the localized analysis in \citep{yang2025a-concentration-sampling-without-replacement} for concentration inequalities for sampling without replacement is limited to the restrictive regime that $m \gg u^2$ or $u \gg m^2$. In a strong contrast, we present general transductive learning bounds without such limitations on $m$ and $u$, incorporating the results of \citep{yang2025a-concentration-sampling-without-replacement} as special cases of this work. Furthermore, compared to~\citep{yang2025a-concentration-sampling-without-replacement}, we present a new and nearly optimal excess risk bound for realizable transductive learning, and stronger excess risk bound for transductive kernel learning without the limitation of restrictive regime.

\subsection{Notations}
We use bold letters for matrices and vectors, and regular lower letters for scalars throughout this paper. The bold letter with a single superscript indicates the corresponding column of a matrix, e.g. $\bA^{(i)}$ is the $i$-th column of matrix $\bA$, and the bold letter with subscripts indicates the corresponding element of a matrix or vector. %We put an arrow on top of a letter with subscript if it denotes a vector, e.g., $\bbx_i$ denotes the $i$-th training feature.
We use $\ba(i)$ to denote the $i$-th element of a vector $\ba$, and $\ba(i:j)$ denotes the vector formed by elements of $\ba$ with indices between $i$ and $j$ inclusively. Similarly, $\ba(\cC)$ denotes the vector formed by elements of $\ba$ whose indices are in the set $\cC$.  $\set{\bZ}$ denotes the set containing all the elements of a vector $\bZ$ regardless of the order of these elements. $\norm{\cdot}{F}$ and
$\norm{\cdot}{p}$ denote the Frobenius norm and the vector $\ell^{p}$-norm or the matrix $p$-norm. $\Var{\cdot}$ denotes the variance of a random variable. $\bI_n$ is an $n \times n$ identity matrix.  $\indict{E}$ is an indicator function which takes the value of $1$ if event $E$ happens, or $0$ otherwise. The complement of a set $A$ is denoted by $\overline{A}$, and $\abth{A}$ is the cardinality of the set $A$.
%$A \Delta B = (A \setminus B) \cup (B \setminus A)$ is the symmetric difference between two sets $A,B$.
$\tr{\cdot}$ is the trace of a matrix. We denote the unit sphere in $d$-dimensional Euclidean space by $\unitsphere{d-1} \defeq \{\bx \colon  \bx \in \RR^d, \ltwonorm{\bx} =1\}$. Let $L^2(\cX, \mu^{(P)}) $ denote the space of square-integrable functions on the input space $\cX$ with probability measure $\mu^{(P)}$, and the inner product $\iprod{\cdot}{\cdot}_{ \mu^{(P)}}$ and $\norm{\cdot}{\mu^{(P)}}^2$ are defined as $\iprod{f}{g}_{L^2} \coloneqq \int_{\cX}f(x)g(x) \diff \mu^{(P)}(x)$ and $\norm{f}{L^2}^2 \coloneqq \int_{\cX}f^2(x) \diff \mu^{(P)} (x) <\infty$.
%$\Proj_{\cA}$ is the orthogonal projection onto a linear space $\cA$, and $\cA^{\perp}$ is the linear subspace orthogonal to $\cA$.
$\iprod{\cdot}{\cdot}_{\cH}$ and $\norm{\cdot}{\cH}$ denote the inner product and the norm in the Hilbert space $\cH$.
$a = \cO(b)$ or $a \lsim b$ indicates that there exists a constant $c>0$ such that $a \le cb$. $\tilde \cO$ indicates there are specific requirements in the constants of the $\cO$ notation. $a = o(b)$ and $a = w(b)$ indicate that $\lim \abth{a/b}  = 0$ and $\lim \abth{a/b}  = \infty$ respectively. $a = \Theta(b)$ or $a \asymp b$ denotes that
there exists constants $c_1,c_2>0$ such that $c_1b \le a \le c_2b$.
%$\Span(\bA)$ is the column space of matrix $\bA$.
%${\rm supp}(\cdot)$ is the support of a vector, $\mathbb P_{\cS'}$ is the operator of orthogonal projection onto the subspace $\cS'$.
%$\sigma_{t}(\cdot)$ denotes the $t$-th largest singular value of a matrix, $\sigma_{\max}(\cdot)$ and $\sigma_{\min}(\cdot)$ indicate the largest and smallest singular value of a matrix respectively.
%$m \choose k$ for $1\le k \le m$ is the combinatory number of selecting $k$ different objects from $m$ objects.
$\RR^+$ is the set of all nonnegative real numbers, and $\NN$ is the set of all the natural numbers. We use the convention that $\sum_{i=p}^q = 0$ if $p > q$ or $q=0$. $[m\relcolon n]$ denotes the set of all the natural numbers between $m$ and $n$ inclusively, and we abbreviate $[1\relcolon n]$ as $[n]$. We use the convention that
$[m\relcolon n] = \emptyset$ if $m > n$. %$\ceil{\cdot}$ denotes the ceiling function.
%Given a function $g \colon \unitsphere{d-1} \to \RR$, its $L^{\infty}$-norm %is denoted by $\norm{g}{\infty} \defeq \sup_{\bx \in \unitsphere{d-1}} \abth{g(\bx)}$. %$L^{\infty}$ is the function class whose elements almost surely have bounded %$L^{\infty}$-norm.
%The constants defined throughout this paper may change from line to line. %\subsection{Transductive Learning by Independent Samples}

\section{Problem Setup of Transductive Learning}
\label{sec:transductive-basic-setup}
We consider a set $\set{(\bbx_i, y_i)}_{i=1}^{m+u}$,
where $y_i$ is the label for the point $\bbx_i$. Let $n = m+u$, $\set{\bbx_i}_{i=1}^n \subseteq \cX \subseteq \RR^d$, $\set{y_i}_{i=1}^n \subseteq \cY \subseteq \RR$ where $\cX,\cY$ are the input and output spaces.
%$\cX$ is a compact subset. %We consider the case that $\cY$ is a bounded set, and without loss of generality we let $\cY = [-1,1]$.
The learner is provided with the unlabeled full sample $\bX_{n} \defeq \set{\bbx_i}_{i=1}^{n}$. Under the standard setting of transductive learning \citep{El-Yaniv2009-TRC,TolstikhinBK14-local-complexity-TRC}, the training features $\bX_m$ of size $m$ are sampled uniformly from $\bX_n$ without replacement, and the remaining features  are the test features denoted by $\bX_u = \bX_n \setminus \bX_m$.  It follows by symmetry that $\bX_u$ is sampled uniformly from all the subsets of size $u$ from $\bX_n$ without replacement, and we let
$\bX_u = \set{\bX_{\bZ(i)}}_{i=1}^u$ with $\bZ$ being the vector containing the indices of the test features in $\bX_n$. It can be verified that
$\set{\bZ}$ is a subset of size $u$ sampled uniformly
from $[n]$ without replacement. $\bZ$, throughout this paper, is the first $u$ elements of a permutation of $[n]$ which is chosen uniformly at random,
and in Section~\ref{sec:roadmap-proofs-sample-alg} we describe an algorithm which samples $\bZ$ by sampling $u$ independent random variables. %This is because when $\bZ$ is  the first $u$ elements of a permutation of $[u]$ which is chosen uniformly at random, then $\set{\bZ}$ is a subset of size $u$ sampled uniformly from $[n]$ without replacement.
We define the following loss functions. For simplicity of notations, we let $g(i) = g(\bbx_i,y_i)$ or $g(i) = g(\bbx_i)$ for a function $g$ defined on $\bX_n \times \cY$ or $\bX_n$. Given a prediction function $f \colon \bX_n \to \RR$ and a loss function $\ell(\cdot,\cdot)$ defined on $\RR^2$, we write $\ell \circ f$ as $\ell_f$ and let $\ell_f(i) = \ell (f(\bbx_i),y_i)$ be the loss on the $i$-th data point.
%We consider the loss function $\ell (f(\bx),y) \defeq \pth{f(\bx)-y}^2$ throughout this paper.
%$\cL_{n}(f) \defeq \frac 1{n} \sum_{i=1}^{n} \ell_f(i)$ which is the average loss on the full sample.
Let $\cH$ be a class of functions defined on $\bX_n \times \cY$ and $h \in \cH$. For any set $\cA \subseteq [n]$, we define
$\cL_h(\cA) \defeq 1/\abth{\cA} \cdot \sum\limits_{i \in \cA} h(i)$.
Then the average loss,  the training loss, and the test loss
associated with $h$ are
$\cL_h([n])$,  $\cL_h(\overline{\set{\bZ}})$, and $\cL_h(\set{\bZ})$, respectively,
where $\overline{\set{\bZ}} = [n] \setminus \set{\bZ}$.
For simplicity of notations, these three losses are also denoted
as $\cL_n(h)$,
$\cL_m(h)$, and $\cL_u(h)$, respectively,  with $\set{\bZ}$ omitted. We also define
the average squared loss associated with $h$ as
 $T_n(h) \defeq \frac 1n \sum\limits_{i=1}^n h^2(i)$.
When $h = \ell_f$, $\cL_m(h)$ and $\cL_u(h)$ are the training loss and the test loss of the prediction function $f$.
Throughout this paper when there are no special notes, the expectations or probabilistic results of functions involving
$\cL_u(h)$ or $\cL_m(h)$ are over $\set{\bZ}$. For example, we have $\Expect{}{\cL_m(h)} = \Expect{}{\cL_u(h)} = \cL_{n}(h)$.

This paper studies the improved generalization bounds of transductive learning algorithms.
We assume that all the points in the full sample $\bX_n$ are distinct, and always consider separable function classes throughout this paper.
The training features together with their labels, $\set{y_i}_{i \in \overline{\set{\bZ}}}$, are given to the learner as a training set. We denote the labeled training set by $\bS_m \defeq \set{\pth{\bbx_i, y_i}}_{i \in \overline{\set{\bZ}}}$. The learner's goal is to predict the labels of the test features $\bX_u$ based on $\bS_m \bigcup \bX_u$.
%We herein remark that the probabilistic results of this paper are expressed over $\set{\bZ}$, a subset of size $u$ sampled uniformly without replacement from $[n]$.
We let $\min\set{u,m} \ge 2$ throughout this paper to avoid the discussions about trivial cases where $m=1$ or $u = 1$.
%Let $\bZ$ be a random permutation of $[n]$ which is chosen uniformly at random. Let $\bZ(1:u)$ be the indices of the test data and $\bZ(m+1:n)$ be the indices of the training data.

%We assume $\max_{i \in [n]} \ell_f(i) \le L_0$ for a positive number $L_0$.

\section{Summary of Main Results}
\label{sec:summary-main-results}
We summarize our main results in this section, with the basic definitions introduced first.

%\noindent \textbf{Basic Definitions.}
%\vspace{-.1in}
\subsection{Basic Definitions}
We define the test-train process $g$ as the supremum of the empirical process of the gap between the test loss and the training loss over a function class $\cH$ defined on $\bX_n \times \cY$, that is,
\bal\label{eq:def-g}
g(\cH) \defeq \sup_{h \in \cH} \pth{\cL_u(h)-
\cL_m(h) }.
\eal
We then define Rademacher variables and Transductive Complexity (TC), and relate TC to the conventional inductive Rademacher complexity.
\begin{definition}[Rademacher Variables]
\label{def:rad-variables}
Let $\{\sigma_i\}_{i=1}^n$ be $n$ i.i.d. random variables such that $\Pr[\sigma_i = 1] = \Pr[\sigma_i = -1] = \frac{1}{2}$, and they are defined as the
Rademacher variables.
\end{definition}
The Transductive Complexity is defined below.
\begin{definition}[Transductive Complexity, or TC]
\label{def:TC}
Define
\bals
R^+_{u} h \defeq \cL_u(h) - \cL_n(h),
\quad
R^+_{m} h \defeq \cL_m(h) - \cL_n(h),
\eals
and $R^{-}_{u} h \defeq -R^+_{u} h$,
$R^-_{m} h \defeq -R^+_{m} h$.
The four types of Transductive Complexity (TC) of a function class $\cH$ are defined as
\bal\label{eq:TC-def}
\cfrakR^+_u(\cH) \defeq \Expect{}{\sup_{h \in \cH} R^+_{u} h}, \cfrakR^-_u(\cH) \defeq \Expect{}{\sup_{h \in \cH} R^-_{u} h}, \nonumber \\
\cfrakR^+_m(\cH) \defeq \Expect{}{\sup_{h \in \cH} R^+_{m} h}, \cfrakR^-_m(\cH) \defeq \Expect{}{\sup_{h \in \cH} R^-_{m} h}.
\eal
\end{definition}
The expectations in (\ref{eq:TC-def}) are over $\set{\bZ}$ where the subset $\bZ$ is sampled uniformly from $[n]$ without
replacement. We remark that the proposed TC is fundamentally different from the transductive version of the Rademacher complexity in \citep[Definition 1]{El-Yaniv2009-TRC} in the sense that our TC is defined on the random training set or test set, while the counterpart in \citep[Definition 1]{El-Yaniv2009-TRC} operates on the entire full sample.
%In addition, the Rademacher variables used in our TRC are the same as the inductive Rademacher variables. In this sense, our TRC is closer to the Rademacher complexity and its localized variants in the inductive setting.

Let $\bY^{(u)} = \set{Y_1,\ldots,Y_u}$ be sampled uniformly and independently from $[n]$ with replacement. Similarly, let $\bY^{(m)} = \set{Y_1,\ldots,Y_m}$ be sampled uniformly and independently from $[n]$ with replacement. The following theorem relates the TC defined in Definition~\ref{def:TC} to the conventional inductive Rademacher complexity.
\begin{theorem}
\label{theorem:TC-RC}
Let $\bsigma = \set{\sigma_i}_{i=1}^{\max\set{u,m}}$ be i.i.d. Rademacher variables. Define
$R^{(\textup{ind})}_{\bsigma,\bY^{(u)}} h \defeq \frac 1u \sum\limits_{i=1}^u \sigma_i h(Y_i)$ and $R^{(\textup{ind})}_{\bsigma,\bY^{(m)}} h \defeq \frac 1m \sum\limits_{i=1}^m \sigma_i h(Y_i)$. Then
\bal\label{eq:TRC-RC}
\max\set{\cfrakR^+_u(\cH),\cfrakR^-_u(\cH)} \le 2\cfrakR^{(\textup{ind})}_u(\cH),
\quad
\max\set{\cfrakR^+_m(\cH),\cfrakR^-_m(\cH)} \le 2\cfrakR^{(\textup{ind})}_m(\cH),
\eal
where
$\cfrakR^{(\textup{ind})}_u(\cH) \defeq \Expect{\bY^{(u)},\bsigma}{\sup_{h \in \cH} R^{(\textup{ind})}_{\bsigma,\bY^{(u)}} h},
\cfrakR^{(\textup{ind})}_m(\cH) \defeq \Expect{\bY^{(m)},\bsigma}{\sup_{h \in \cH} R^{(\textup{ind})}_{\bsigma,\bY^{(m)}} h}$.
\end{theorem}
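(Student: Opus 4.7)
The plan is to reduce the transductive (without-replacement) complexity $\cfrakR^+_u(\cH)$ to the inductive (with-replacement) Rademacher complexity $\cfrakR^{(\textup{ind})}_u(\cH)$ by combining a Hoeffding-type convex comparison (which passes from WOR to WR sampling) with the standard ghost-sample Rademacher symmetrization. The first key step is to repackage $\sup_{h\in\cH} R^+_{u,\bd}h$ as a single convex function evaluated at a vector-valued sum. Let $\bc_\bd \defeq \sum_{i=1}^u \be_{\bZ_\bd(i)} \in \{0,1\}^n$ be the indicator of the test indices and $\vec h = (h(1),\ldots,h(n))^\top$. Then
\[
\sup_{h \in \cH} R^+_{u,\bd}\, h \;=\; \sup_{h \in \cH}\Bigl[\tfrac{1}{u}\bc_\bd^\top \vec h - \tfrac{1}{n}\bone^\top \vec h\Bigr] \;=:\; \Phi(\bc_\bd),
\]
where $\Phi:\RR^n\to\RR$ is convex as a pointwise supremum of affine functions of $\bc$.

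Viewing $\{\be_{\bZ_\bd(i)}\}_{i=1}^u$ as a WOR sample of size $u$ from the vector-valued population $\{\be_1,\ldots,\be_n\}\subset\RR^n$, and $\{\be_{Y_i}\}_{i=1}^u$ as the corresponding WR sample from the same population, the vector-valued Hoeffding comparison for convex functions of sums (Hoeffding 1963, Theorem 4, in scalar form; vector/matrix extensions recorded e.g.\ in Gross--Nesme 2010 or Bardenet--Maillard 2015) then yields $\cfrakR^+_u(\cH) = \E_\bd \Phi(\bc_\bd) \le \E_{\bY^{(u)}}\Phi\bigl(\sum_i \be_{Y_i}\bigr) = \E_{\bY^{(u)}} \sup_{h\in\cH}[\tfrac{1}{u}\sum_i h(Y_i) - \cL_n(h)]$. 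Next I would apply standard ghost-sample Rademacher symmetrization to this with-replacement right-hand side: since the $Y_i$ are iid with $\E[h(Y_i)]=\cL_n(h)$, introducing an independent iid copy $\tilde\bY^{(u)}=(\tilde Y_1,\ldots,\tilde Y_u)$, invoking Jensen's inequality, and then exploiting per-coordinate exchangeability $(Y_i,\tilde Y_i)\stackrel{d}{=}(\tilde Y_i,Y_i)$ to insert Rademacher variables $\bsigma$ gives
\[
\E_{\bY^{(u)}} \sup_{h\in\cH}\Bigl[\tfrac{1}{u}\textstyle\sum_{i} h(Y_i)-\cL_n(h)\Bigr] \;\le\; \E_{\bY^{(u)},\tilde\bY^{(u)},\bsigma}\sup_{h \in \cH} \tfrac{1}{u}\textstyle\sum_{i} \sigma_i\bigl[h(Y_i)-h(\tilde Y_i)\bigr] \;\le\; 2\,\cfrakR^{(\textup{ind})}_u(\cH),
\]
yielding $\cfrakR^+_u(\cH)\le 2\cfrakR^{(\textup{ind})}_u(\cH)$. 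The bound on $\cfrakR^-_u$ follows identically because $\sup_{h\in\cH} R^-_{u,\bd}h$ is likewise a supremum of affine functions of $\bc_\bd$, hence convex. The two bounds on $\cfrakR^\pm_m(\cH)$ are obtained by the identical argument applied to $\overline{\bZ_\bd}$, which, as noted in Section~\ref{sec:transductive-basic-setup}, is itself a uniform WOR sample of size $m$ from $[n]$ by the symmetry of Algorithm~\ref{alg:randperm}.

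I expect the main hurdle to be the vector-valued Hoeffding step, because the classical scalar statement does not directly digest the supremum over $\cH$; the crucial conceptual move is to first repackage $\sup_h R^+_{u,\bd} h$ as a scalar convex $\Phi$ evaluated at a single $\RR^n$-valued sum, so that the convex comparison can be invoked. I also note that the seemingly more tempting approach—introducing a ghost copy $\bd'$ of $\bd$, symmetrizing to Rademachers in $\bZ_\bd$, and then hoping to compare this WOR Rademacher average to the WR one—fails, because the WOR Rademacher average can actually exceed $\cfrakR^{(\textup{ind})}_u(\cH)$ (small numerical examples with $n=3,u=2$ show this), so it is essential that the Hoeffding comparison be carried out before any symmetrization step. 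If a self-contained derivation is preferred over citing the vector extension, I would supply one via a one-step replacement argument: conditional on the first $u-1$ WOR draws, the $u$-th WR draw is a convex mixture whose support contains the $u$-th WOR draw together with the already-selected indices, and the desired inequality then follows from convexity of $\Phi$ combined with the exchangeability of the uniform random permutation underlying $\bZ_\bd$.
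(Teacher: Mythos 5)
Your proof is correct and follows the same route as the paper's: Hoeffding's convex comparison theorem (passing from without-replacement to with-replacement sampling) followed by standard ghost-sample Rademacher symmetrization. The only cosmetic difference is that the paper truncates the supremum over $\cH$ to a finite $\max$ over a countable dense subclass (applying Hoeffding with $f(\bx)=\max_{j\in[M]}\bx_j$ on an $\RR^M$-valued population) and then passes to the limit via monotone convergence, whereas you fold the supremum directly into a single convex functional $\Phi$ of the one-hot indicator of the sampled set; both moves rest on the same vector-valued form of Hoeffding's comparison.
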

\begin{remark}
$\cfrakR^{(\textup{ind})}_u(\cH)$ and $\cfrakR^{(\textup{ind})}_m(\cH)$ are the Rademacher complexity in the inductive setting.  It is remarked that (\ref{eq:TRC-RC}) indicates that the established symmetrization inequality of inductive Rademacher complexity also holds for the transductive complexity defined in Definition~\ref{def:TC}.  For simplicity of notations  we also write $\cfrakR^{(\textup{ind})}_u(\cH) = \Expect{}{\sup_{h \in \cH} R^{(\textup{ind})}_{\bsigma,\bY^{(u)}} h}$ and $\cfrakR^{(\textup{ind})}_m(\cH) = \E\bigg[\sup_{h \in \cH} R^{(\textup{ind})}_{\bsigma,\bY^{(m)}} h\bigg]$ .
%Also, if $\cH$ is symmetric, that is, $-f \in \cH$ if $f \in \cH$, then (\ref{eq:TRC-RC})-(\ref{eq:TRC-RC}) still hold.
\end{remark}
We define the sub-root function below, which will be extensively used for deriving new bounds based on our transductive local complexity to be introduced in
Section~\ref{sec:detailed-results}.

\begin{definition}[Sub-root function, {\citep[Definition 3.1]{bartlett2005}}]
\label{def:sub-root-function}
A function $\psi \colon [0,\infty) \to [0,\infty)$ is sub-root if it is nonnegative,
nondecreasing and if $\frac{\psi(r)}{\sqrt r}$ is nonincreasing for $r >0$.
\end{definition}

%\noindent \textbf{Main Results.}
%\vspace{-.1in}
\subsection{Main Results}
Our main results are summarized as follows. This summary also features a high-level description of the ideas we have developed to obtain the detailed technical results in Section~\ref{sec:detailed-results} and Section~\ref{sec:applications}.
First, we establish the improved bound for the excess risk of empirical minimizers in transductive learning using a local complexity-based approach inspired by LRC~\citep{bartlett2005}. The resulting bound~(\ref{eq:conceptual-excess-risk-TLC}), with its formal version in Theorem~\ref{theorem:TLC-delta-ell-f-excess-risk-upper-bound}, is nearly consistent with the existing sharp excess risk bound for inductive learning~(\ref{eq:conceptual-risk-bound-inductive}) with only a logarithmic gap.
Two novel technical elements underpin this result: (1) a new concentration inequality for the test-train process defined in (\ref{eq:def-g}); (2) Transductive Local Complexity (TLC), a refined complexity measure yielding sharper transductive bounds via a peeling strategy on the function class with a new surrogate variance operator, which is based on the new concentration inequality for the test-train process.
%Existing transductive local complexity analyses~\citep{TolstikhinBK14-local-complexity-TRC} instead consider empirical processes of the form
%$\sup_{h \in \cH} \big( \cL^{u}_h - \cL_n(h) \big)$ or $\sup_{h \in \cH} \big( \cL^{m}_h - \cL_n(h) \big)$,
%where $\cL_n(h)$ is the average loss over the full dataset.
Our new concentration inequality for the test-train process in Theorem~\ref{theorem:main-inequality-TLC} leverages a novel combinatorial
property of the test-train process for sampling without replacement and a new proof strategy which applies the exponential Efron–Stein inequality~\citep[Theorem 2]{Boucheron2003-concentration-entropy-method} twice to characterize its variance. To show the advantage of our concentration inequality for sampling without replacement over the representative existing work \citep{TolstikhinBK14-local-complexity-TRC}, we derive new concentration inequalities for the general supremum of empirical processes involving random variables (RVs) in the setting of sampling uniformly without replacement in Corollary~\ref{corollary:concentration-gu} in Section~\ref{sec:concentration-general-sup-empirical} as a direct consequence of Theorem~\ref{theorem:main-inequality-TLC}. Our new concentration inequalities are sharper than the two versions of the concentration inequalities in \citep{TolstikhinBK14-local-complexity-TRC}, and such results are based on our new concentration inequality for the test-train process introduced above.

%In Section~\ref{sec:starting-concentration}-Section~\ref{sec:TLC-bound-generic},
%we develop the basic concentration inequality of independent random variables which specify the training data for transductive learning, and then design a novel variance operator $\bVar{\cdot}$ to divide the prediction function class into pieces where the variance of functions in each piece is bounded. This procedure renders the basic TLCR bound for generic transductive learning:
%\bals\tag{\small Basic TLC bound in Theorem~\ref{theorem:TLC},~\ref{theorem:TLC-ell-f}}
%\cL(\ell_f) &\le \cO\pth{\cL_{\ell_f}(\overline{\bZ(\bd)})} + \cO(r_u^* + r_m^*) + \cO\pth{\frac 1m + \frac 1u},
%\eals
%which holds with high probability for any prediction function $f$ in a function class. Here $\ell_f$ is the loss function applied to the prediction made by $f$. $\cL$ and $\cL$ denote the test loss and the training loss, $r^*$ is the fixed point of a properly defined sub-root function which bounds our novel Transductive Rademacher Complexity of each piece of the function class. It is interesting to observe that the TLC bound is very similar to the basic LRC bound \citep[Theorem 3.3]{Bartlett2003}. As a result, one expects that the established techniques developed for LRC can be adapted to transductive learning. This paper shows that such expectation can be fulfilled by applying the basic TLC bound to the following transductive learning tasks.

%\item[2. ]
Second, we advance the current state-of-the-art in transductive learning using TLC in two important applications. As the first application, we obtain the nearly optimal upper bound in
Theorem~\ref{theorem:TLC-delta-ell-f-excess-risk-upper-bound-VC-dim} for the excess risk for realizable transductive learning over a $\set{0,1}$-valued function classes of finite VC-dimension $2 \le \dVC < \infty$ with $u \ge m \ge \dVC$,
$\Theta(\dVC \log   (me/\dVC)/m)$, which matches the minimax lower bound in \citep[Theorem 3]{tolstikhin2016minimaxlowerboundsrealizable} of $\Theta(\dVC/m)$ by only a logarithmic factor of $\log m$.
The current state-of-the-art in \citep[Theorem 7]{tolstikhin2016minimaxlowerboundsrealizable} gives an upper bound
$\Theta(\dVC \log   (ne/\dVC)/m)$ with high probability under the same setup with $u \ge m \ge \dVC-1$. However, such an upper bound is loose with arbitrary large $n$. Our nearly optimal bound
gives a confirmative answer to the open problem raised by \citep{tolstikhin2016minimaxlowerboundsrealizable} whether the $\log n$ gap can be reduced to $\log m$.
As the second application, we derive a new excess risk bound for transductive kernel learning in Theorem~\ref{theorem:TLC-kernel} in Section~\ref{sec:TKL}, which significantly improves upon the existing local complexity-based result of~\citep{TolstikhinBK14-local-complexity-TRC}.

\section{TLC Excess Risk Bound for Generic Transductive Learning}
\label{sec:detailed-results}
\subsection{Concentration Inequality for the Test-Train Process}
\label{sec:starting-concentration}
Given a function class $\cH$, we define the function class $\cH^2 \defeq \set{h^2 \mid h \in \cH}$ as the ``squared version'' of $\cH$. We then have the following concentration inequality for the test-train process $g$ over the function class $\cH$.

\begin{theorem}[Concentration inequality for the test-train process (\ref{eq:def-g}) for a general function class]
\label{theorem:main-inequality-TLC}
Let $\cH$ be a class of functions defined on $\bX_n \times \cY$ and for any $h \in \cH$, $0 \le \abth{h(i)} \le H_0$ for all $i \in [n]$ with a positive number $H_0$. Assume that $H_0 \ge 2\sqrt 2$, and there is a positive number $r > 0$ such that $\sup_{h \in \cH} T_n(h) \le r$. Then for every $x > 0$ and every $\delta \in (0,1)$, with probability at least $1-\exp(-x) - \delta$ over $\set{\bZ}$,
\bal\label{eq:concentration-gd-TLC}
g(\cH) &\le \Expect{}{g(\cH)}  +4\sqrt{\frac{10rx}
{N_{u,m,\delta}}} +2\sqrt{2} \inf_{\alpha > 0}
\pth{\frac{\cfrakR^+_{\min\set{u,m}}(\cH^2) }{\alpha} +
\frac{\alpha x}{N_{u,m,\delta}}}
+  \frac {4 H_0^2x}{N_{u,m,\delta}},
\eal
where $\cH^2 = \set{h^2 \mid h \in \cH}$,
\bal\label{eq:N-um-delta-def}
N_{u,m,\delta} \defeq \frac{\min\set{u,m}}
{\log_2 (4\min\set{u,m}/\delta)},
\eal
 $\cfrakR_u^+(\cdot), \cfrakR_m^+(\cdot)$ are the Transductive Complexity defined in (\ref{eq:TC-def}).
\end{theorem}
For a technical reason we need $H_0 \ge 2\sqrt 2$ in Theorem~\ref{theorem:main-inequality-TLC}, which is always satisfied by setting $H_0 = \max\{2\sqrt 2, \allowbreak \sup_{h \in \cH, i \in [n]} \abth{h(i)} \}$ for every bounded function class on the full sample.

\noindent \textbf{Key Innovations in the Proof of Theorem~\ref{theorem:main-inequality-TLC}.} The proof of Theorem~\ref{theorem:main-inequality-TLC} is based on a novel combinatorial property of the test-train process revealed in Lemma~\ref{lemma:uniform-draw-diff} (and similarly Lemma~\ref{lemma:uniform-draw-diff-tE} of the appendix), which indicates that the change of the test-train loss for every loss function $h$, $\cL_u(h)-\cL_m(h)$, is always a difference of $h$ between a pair of two elements. Such a property is used to derive the upper bound for the upper variance of $g(\cH)$, $V_+(g)$, defined in (\ref{eq:V+-def}). Such an upper bound for $V_+(g)$ involves another empirical process over the class $\cH^2$. A novel proof strategy is developed, where the upper bound for upper variance of the empirical process over $\cH^2$ is derived using the exponential version of the Efron-Stein inequality {\citep[Theorem 2]{Boucheron2003-concentration-entropy-method}}, and this upper bound is used along with the upper bound for $V_+(g)$ to derive the new improved bound for the test-train process $g$. Theorem~\ref{theorem:main-inequality-TLC} is a direct consequence of two intermediate results deferred to the appendix, Theorem~\ref{theorem:concentration-g-m-greater-u} and Theorem~\ref{theorem:concentration-tildeg-u-greater-m}, which are the variants of Theorem~\ref{theorem:main-inequality-TLC} under the condition that $m \ge u$ and $u \ge m$, respectively.

\noindent \textbf{Roadmap of the Proof of Theorem~\ref{theorem:main-inequality-TLC}.} When $m \ge u$, Theorem~\ref{theorem:main-inequality-TLC} is Theorem~\ref{theorem:concentration-g-m-greater-u}. In this case, the roadmap of the proof has three steps. We define a surrogate process, $\barg$, such that $\barg(\bd) = g(\bd)$ under an event of high probability, and $\barg(\bd) = -2H_0$ otherwise.
In step 1, the upper bound for the upper variance $V_+(\barg)$ is derived using Lemma~\ref{lemma:uniform-draw-diff} along with other auxiliary lemmas.  In step 2, we derive the upper bound for $\log \Expect{\bd}{\exp\pth{\lambda\pth{\barg- \Expect{}{\barg}}}}$ by applying the exponential version of the Efron-Stein inequality \citep[Theorem 2]{Boucheron2003-concentration-entropy-method}. The concentration inequality for the test-train process $g$ then follows from that for $\barg$ in step 3, since $\barg = g$ under the high-probability event. The more detailed roadmap of Theorem~\ref{theorem:concentration-g-m-greater-u} is presented in Section~\ref{sec:proof-concentration-test-train-process-m-greater-u} of the appendix. The proof for the case that $u \ge m$ follows a similar argument based on Lemma~\ref{lemma:uniform-draw-diff-tE} and other auxiliary lemmas.

\noindent \textbf{Significant Improvement Over \citep{yang2025a-concentration-sampling-without-replacement}.} It is worthwhile to mention that Theorem~\ref{theorem:main-inequality-TLC} covers the main result of \citep{yang2025a-concentration-sampling-without-replacement}, that is, \citep[Theorem 3.1]{yang2025a-concentration-sampling-without-replacement}, as a special case under the unbalanced regime that either
$m \ggg u^2$ or $u \gg m^2$. In strong contrast to \citep{yang2025a-concentration-sampling-without-replacement} where the results are obtained under the restrictive conditions that $m \ggg u^2$ or $u \gg m^2$, all the results of this paper are free of such restrictions. More detailed discussions are deferred to Section~\ref{sec:special-case-prelim-version-appendix} of the appendix.

When the function class $\cH$ is nonnegative and still bounded by $H_0$, the operator $T_n$ used in
Theorem~\ref{theorem:main-inequality-TLC}
can be replaced with $\cL_n$, leading to the following theorem. While we can directly apply Theorem~\ref{theorem:main-inequality-TLC} with $r = H_0 \sup_{h \in \cH} \cL_n(h)$ to this case, by following the argument in the proof of Theorem~\ref{theorem:main-inequality-TLC} with the operator $\cL_n$, we can have better constants as shown in Theorem~\ref{theorem:main-inequality-TLC-nonnegative-func-class}, which involves the TC of the original function class $\cH$, $\cfrakR^+_{\min\set{u,m}}(\cH)$, instead of the TC of the squared function class $\cH^2$, $\cfrakR^+_{\min\set{u,m}}(\cH^2)$, in Theorem~\ref{theorem:main-inequality-TLC}. Moreover, Theorem~\ref{theorem:main-inequality-TLC-nonnegative-func-class} does not require $H_0 \ge 2\sqrt 2$.
\begin{theorem}[Concentration inequality for the test-train process (\ref{eq:def-g}) for nonnegative function class]
\label{theorem:main-inequality-TLC-nonnegative-func-class}
Let $\cH$ be a class of functions defined on $\bX_n \times \cY$ and for any $h \in \cH$, $0 \le h(i) \le H_0$ for all $i \in [n]$ with a positive number $H_0$. Assume that there is a positive number $r > 0$ such that $\sup_{h \in \cH} \cL_n(h) \le r$. Then for every $x > 0$ and every $\delta \in (0,1)$, with probability at least $1-\exp(-x) - \delta$ over $\set{\bZ}$,
\bal\label{eq:concentration-gd-TLC-nonnegative-func-class}
g(\cH) &\le \Expect{}{g(\cH)}  +2\sqrt{\frac{H_0 rx}
{N_{u,m,\delta}}} + \inf_{\alpha > 0}
\pth{\frac{\cfrakR^+_{\min\set{u,m}}(\cH) }{\alpha} +
\frac{\alpha H_0 x}{N_{u,m,\delta}}}
+  \frac {4 H_0 x}{N_{u,m,\delta}},
\eal
where $N_{u,m,\delta} $ is defined in (\ref{eq:N-um-delta-def}),
$\cfrakR_u^+(\cdot), \cfrakR_m^+(\cdot)$ are the Transductive Complexity defined in (\ref{eq:TC-def}).
\end{theorem}

As a direct consequence of Theorem~\ref{theorem:main-inequality-TLC}, we have the following bounds for the empirical processes including
$\sup_{h \in \cH} \pth{\cL_u(h) - \cL_n(h)}$ and $\sup_{h \in \cH} \pth{\cL_n(h) - \cL_u(h)}$, and we will show the advantage of our bounds over those in the representative work \citep{TolstikhinBK14-local-complexity-TRC} in the next subsection.
\begin{corollary}
\label{corollary:concentration-gu}
Define $g^+_u(\cH) \defeq \sup_{h \in \cH} \pth{\cL_u(h) - \cL_n(h)}$, $g^-_u(\cH) \defeq \sup_{h \in \cH} (\cL_n(h) - \cL_u(h))$,
%\bal\label{eq:gu+-gu--def}
%g^+_u \defeq \sup_{h \in \cH} \pth{\cL_u(h) - \cL_n(h)}, \quad
%g^-_u \defeq \sup_{h \in \cH} \pth{\cL_n(h) - \cL_u(h)}.
%\eal
then for every $x > 0$ and every $\delta \in (0,1/2)$, with probability at least $1-2\exp(-x)-2\delta$ over $\set{\bZ}$,
\bal\label{eq:concentration-gu}
%\resizebox{1\hsize}{!}{%
% &\max\set{g^+_u - \Expect{}{g^+_u},g^-_u- \Expect{}{g^-_u}} \nonumber \\
% &\le \frac mn \pth{4\sqrt{\frac{10rx}
% {N_{u,m,\delta}}} +2\sqrt{2} \inf_{\alpha > 0}
% \pth{\frac{\cfrakR^+_{\min\set{u,m}}(\cH^2) }{\alpha} +
% \frac{\alpha x}{N_{u,m,\delta}}}
% +  \frac {4 H_0^2x}{N_{u,m,\delta}}},
%\resizebox{1\hsize}{!}{$
&\max\set{g^+_u(\cH) - \Expect{}{g^+_u(\cH)},g^-_u(\cH)- \Expect{}{g^-_u(\cH)}} \nonumber \\
&\le \frac mn \pth{4\sqrt{\frac{10rx}
{N_{u,m,\delta}}} +2\sqrt{2} \inf_{\alpha > 0}
\pth{\frac{\cfrakR^+_{\min\set{u,m}}(\cH^2) }{\alpha} +
\frac{\alpha x}{N_{u,m,\delta}}}
+  \frac {4 H_0^2x}{N_{u,m,\delta}}},
%$}
\eal
where $N_{u,m,\delta}$ is defined in (\ref{eq:N-um-delta-def}).
\end{corollary}
%\begin{proof}[Sketch of the Proof]
%It can be proved that with probability at least $1-\exp(-x)$
%over $\set{\bZ}$, $g^+_u - \Expect{}{g^+_u}$ is bounded by the RHS of
%(\ref{eq:concentration-gu}), and the same applies to $g^-_u - \Expect{}{g^-_u}$.
%\end{proof}

\subsection{Comparison for Concentration Inequality for Supremum of Empirical Process in the Setting of Sampling Uniformly Without Replacement}
\label{sec:concentration-general-sup-empirical}
It is noted that Corollary~\ref{corollary:concentration-gu} offers
concentration inequalities for supremum of empirical process involving RVs sampled uniformly without replacement.
We hereby compare our concentration inequality (\ref{eq:concentration-gu}) in Corollary~\ref{corollary:concentration-gu} with existing concentration inequalities for supremum of empirical process under the same setting in \citep{TolstikhinBK14-local-complexity-TRC}. There are two versions of such inequalities in \citep{TolstikhinBK14-local-complexity-TRC}, which are presented as follows.
For the first version, with probability at least $1-\exp(-t)$,
\bal\label{eq:existing-concentration-sup-empirical-process-1}
g^+_u(\cH)- \Expect{}{g^+_u(\cH)} \le 2\sigma \sqrt{\frac{2nt}{u^2}}, \quad  \textup{\citep[Theorem 1]{TolstikhinBK14-local-complexity-TRC}}
\eal
with the variance $\sigma^2 = 1/n \cdot \sup_{h \in \cH}\sum_{i=1}^n (h(i)-\cL_n(h))^2$.
For the second version, with probability at least $1-\exp(-t)$,
\bal\label{eq:existing-concentration-sup-empirical-process-2}
g^+_u(\cH) - \Expect{\bY^{(u)}}{\tilde g_u(\bY^{(u)},\cH)} \le  \sqrt{\frac{2\pth{\sigma^2+
2\Expect{\bY^{(u)}}{\tilde g_u(\bY^{(u)},\cH)}}t}{u}} + \frac {t}{3u}, \quad  \textup{\citep[Theorem 2]{TolstikhinBK14-local-complexity-TRC}}
\eal
where $\tilde g_u (\bY^{(u)},\cH) \defeq \sup_{h \in \cH} \big(1/u \cdot \sum\limits_{i=1}^u
h(Y_i) - \cL_n(h)\big)$ is the supremum of empirical process with i.i.d. random variables $\set{\bY^{(u)}}$.
Because we always expect the deviation between $g^+_u$ and its expectation, the gap between $\Expect{\bY^{(u)}}{\tilde g_u}$ and $\Expect{}{g_u}$ is offered by \textup{\citep[Theorem 3]{TolstikhinBK14-local-complexity-TRC}} as follows:
\bals
0 \le \Expect{\bY^{(u)}}{\tilde g_u(\bY^{(u)},\cH)} - \Expect{}{g^+_u(\cH)} \le \frac{2u^2}{n}.
\quad  \textup{\citep[Lemma 3]{TolstikhinBK14-local-complexity-TRC}}
\eals
It follows from (\ref{eq:existing-concentration-sup-empirical-process-2}) and the above inequality that for the second version,
\bal\label{eq:existing-concentration-sup-empirical-process-2-repeat}
g^+_u(\cH) -  \Expect{}{g^+_u(\cH)} \le \sqrt{\frac{2\pth{\sigma^2+
2\Expect{\bY^{(u)}}{\tilde g_u(\bY^{(u)},\cH)}}t}{u}} + \frac {t}{3u} + \frac{2u^2}{n}.
\eal

As a result, the RHS of (\ref{eq:existing-concentration-sup-empirical-process-1}) diverges when $u = o(\sqrt n)$, and the RHS of (\ref{eq:existing-concentration-sup-empirical-process-2-repeat}) diverges when $u = w(\sqrt n)$ as $u,n\to \infty$. In contrast, the RHS of our bound (\ref{eq:concentration-gu}) in Corollary~\ref{corollary:concentration-gu} converges to $0$ under many standard learning models by noting that (1) $\cfrakR^+_{\min\set{u,m}}(\cH^2)$ can be bounded by the inductive Rademacher complexity of $\cH^2$, $\cfrakR^{(\textup{ind})}_{\min\set{u,m}}(\cH^2)$,
using Theorem~\ref{theorem:TC-RC};
(2) the inductive Rademacher complexity $\cfrakR^{(\textup{ind})}_{\min\set{u,m}}(\cH^2)$ usually converges to $0$ at a fast rate, such as $\cO(\sqrt{1/\min\set{u,m}})$, for many standard learning models \citep{Bartlett2003}, when combined with the contraction property of the inductive Rademacher complexity, e.g., Theorem~\ref{theorem:contraction-RC} in the appendix.
%Theorem~\ref{theorem:contraction-RC}).

%$\sup_{h \in \cH} \tT_n(h) \le T_0$ for some positive constant $T_0$
\subsection{The First Bound by Transductive Local Complexity}
\label{sec:first-TLC-bound}
Using Theorem~\ref{theorem:main-inequality-TLC} and the peeling strategy in the proof of \citep[Theorem 3.3]{bartlett2005}, we have the following bound for the test-train process  involving the fixed point of a sub-root function as the upper bound for the TC of localized function classes.
\begin{theorem}\label{theorem:TLC}
Let $\cH$ be a class of functions defined on $\bX_n \times \cY$ and for any $h \in \cH$, $0 \le \abth{h(i)} \le H_0$ for all $i \in [n]$ with a positive number $H_0 \ge 2 {\sqrt 2}$.
Suppose $K_0 > 1$ is an arbitrary constant, and $\tT_n(h) \colon \cH \to \RR^+$ is  a functional such that $T_n(h) \le \tilde T_n(h)$ for all $h \in \cH$. Let $\psi_{u,m}$ be a sub-root function and let $r_{u,m}$ be the fixed point of $\psi_{u,m}$. Assume that for all $r \ge r_{u,m}$,
\bal
\psi_{u,m}(r) \ge \max\Bigg\{&\Expect{}{\sup_{h \in \cH,\tT_n(h) \le r} R^+_{u} h}, \Expect{}{\sup_{h \in \cH,\tT_n(h) \le r} R^-_{m} h}, \nonumber \\
&\Expect{}{\sup_{h \in \cH,\tT_n(h) \le r} R^+_{\min\set{u,m}} h^2}\Bigg\}, \label{eq:TLC-cond-psi-general}
\eal
then for every $x>0$ and every $\delta \in (0,1)$, with probability at least $1-\exp(-x) -\delta$ over $\set{\bZ}$,
\bal\label{eq:TLC-bound-g-upper-bound}
\cL_u(h) \le \cL_m(h) + \frac{\tT_n(h)}{K_0} +c_0 r_{u,m}  + \frac{c_1x}{N_{u,m,\delta}}, \quad \forall h \in \cH,
\eal
% Let $\psi^-_{u,m}$ be another sub-root function and let $r^-_{u,m}$ be the fixed point of $\psi^-_{u,m}$
% Similarly, if for all $r \ge r^-_{u,m}$,
% \bal
% \psi^-_{u,m}(r) \ge \max\bigg\{&\Expect{}{\sup_{h \in \cH,\tT_n(h) \le r} R^-_{u} h},
% \Expect{}{\sup_{h \in \cH,\tT_n(h) \le r} R^+_{m} h}, \nonumber \\
% &\Expect{}{\sup_{h \in \cH,\tT_n(h) \le r} R^+_{\min\set{u,m}} h^2}\bigg\}, \label{eq:TLC-cond-g-lower-bound}
% \eal
% then for every $x>0$ and every $\delta \in (0,1)$, with probability at least $1-\exp(-x)-\delta$ over $\set{\bZ}$,
% \bal\label{eq:TLC-bound-gu-}
% \cL_u(h) \ge \cL_m(h) - \frac{\tT_n(h)}{K_0} -c_0 r^-_{u,m}  - \frac{c_1x}{N_{u,m,\delta}}, \quad \forall h \in \cH.
% \eal
where $N_{u,m,\delta}$ is defined in (\ref{eq:N-um-delta-def}), $c_0,c_1$ are absolute positive constants depending on $K_0 $, and $c_1$ also depends on $H_0$.
\end{theorem}
\begin{remark}
$\tT_n(\cdot)$ is termed a surrogate variance operator, and it is an upper bound for the usual variance operator $T_n(\cdot)$.
Each term inside the maximum operator on the RHS of (\ref{eq:TLC-cond-psi-general}) is the TC for a localized function class, $\set{h \in \cH  \colon \tT_n(h) \le r}$, where every function $h$ has its surrogate variance $\tT_n(h)$ bounded by $r$. In this sense, $\psi_{u,m}(r)$ is the upper bound for the TC of a localized function class, so we attribute the results of Theorem~\ref{theorem:TLC} to Transductive Local Complexity (TLC).
\end{remark}
Theorem~\ref{theorem:TLC} is in fact the TLC-based bound for a general bounded function class $\cH$.
The following theorem states the TLC-based bound for a nonnegative function class based on Theorem~\ref{theorem:main-inequality-TLC-nonnegative-func-class}.
\begin{theorem}\label{theorem:TLC-nonnegative-func-class}
Let $\cH$ be a class of functions defined on $\bX_n \times \cY$ and for any $h \in \cH$, $0 \le h(i) \le H_0$ for all $i \in [n]$ with a positive number $H_0$. Suppose $K_0 > 1$ is an arbitrary constant. Let $\psi_{u,m}$ be a sub-root function and let $r_{u,m}$ be the fixed point of $\psi_{u,m}$.
Assume that for all $r \ge r_{u,m}$,
\bal
\psi_{u,m}(r) \ge \max\bigg\{&\Expect{}{\sup_{h \in \cH,\cL_n(h) \le r} R^+_{u} h}, \Expect{}{\sup_{h \in \cH,\cL_n(h) \le r} R^-_{m} h}, \nonumber \\
&\Expect{}{\sup_{h \in \cH,\cL_n(h) \le r} R^+_{\min\set{u,m}} h} \bigg\}, \label{eq:TLC-cond-psi-nonnegative}
\eal
then for every $x>0$ and every $\delta \in (0,1)$, with probability at least $1-\exp(-x) -\delta$ over $\set{\bZ}$,
\bal\label{eq:TLC-bound-g-upper-bound-nonnegative-func-class}
\cL_u(h) \le \frac{K_0+1}{K_0-1}\cL_m(h) + c'_0 r_{u,m}
+ \frac{c'_1x}{N_{u,m,\delta}}, \quad \forall h \in \cH,
\eal
where $N_{u,m,\delta}$ is defined in (\ref{eq:N-um-delta-def}), $c'_0,c'_1$ are absolute positive constants depending on $K_0 $, and $c'_1$ also depends on $H_0$.
\end{theorem}

\subsection{Sharp Excess Risk Bounds using Transductive Local Complexity (TLC) for Generic Transductive Learning}
\label{sec:TLC-bound-generic}

We now apply Theorem~\ref{theorem:TLC} to the transductive learning task introduced in Section~\ref{sec:transductive-basic-setup}, and derive improved bounds for the excess risk. Suppose we have a function class $\cF$ which contains all the prediction functions. We assume $0 \le \ell_f(i) \le L_0$ for all $f \in \cF$ and all $i \in [n]$ throughout this paper, and $L_0 \ge 2\sqrt 2$.
We define $\hat f_{u} \defeq
\argmin_{f \in \cF} \cL_u(\ell_f)$
as the oracle predictor with minimum loss on the test set $\bX_u$, and
$\hat f_{m} \defeq \argmin_{f \in \cF}
\cL_m(\ell_f)$
as empirical minimizer, that is, the predictor with minimum loss on the training set $\bX_m$. It is noted that both
$\hat f_{u} $ and $\hat f_{m}$ depend on the random vector $\bZ$ defined in Section~\ref{sec:transductive-basic-setup}, which is omitted for simplicity of notations. The excess risk of a prediction function $f \in \cF$ is then defined by
\bal\label{eq:excess-risk-def}
\cE(f) \defeq \cL_u(\ell_{f})-
\cL_u(\ell_{\hat f_{u}}).
\eal
We remark that the excess risk $\cE(f)$ can be arbitrary close to the risk, $\cL_u(\ell_{f})- \inf_{g \in \cF }\cL_u(\ell_{g})$, if there is no minimizer of $\cL_u(\ell_{g})$ over $\cF$, by choosing $\hat f_{u}$ such that $\cL_u(\ell_{\hat f_{u}})$ is arbitrarily close to $\inf_{g \in \cF }\cL_u(\ell_{g})$. Moreover, the literature extensively studies the excess risk of the empirical minimizer, $\cE(\hat f_{m})$.
%and obtain the following theorem, which states the upper bound for the test-train process with prediction
%functions as the difference of loss functions with two predictors from $\cF$

We define the function class $\Delta_{\cF} \defeq \set{h \colon h = \ell_{f_1} - \ell_{f_2}, f_1, f_2 \in \cF}$, which is a basic function class for the analysis of excess risk bounds.
For $h = \ell_{f_1} - \ell_{f_2} \in \Delta_{\cF}$, we define in (\ref{eq:tTn-def}) a novel surrogate variance operator as a functional $\tT_n(h) \colon \Delta_{\cF} \to \RR+$ such that $T_n(h) \le \tT_n(h)$, as claimed in Lemma~\ref{lemma:TLC-delta-ell-f}. As a result, we can apply Theorem~\ref{theorem:TLC}
to the functional class $\Delta_{\cF}$ with the surrogate variance operator $\tT_n(h)$ (\ref{eq:tTn-def}). The following assumption, which is the standard assumption adopted by existing local complexity-based methods for both inductive and transductive learning \citep{bartlett2005,TolstikhinBK14-local-complexity-TRC}, is introduced below for performance guarantee of transductive learning.
\begin{assumption}
[Main Assumption]
\label{assumption:main}
\begin{itemize}%[leftmargin=18pt]
\item[(1)] There is a function $f^*_n \in \cF$ such that
$\ell_{f^*_n} = \inf_{f \in \cF} \newline \cL_n(\ell_f)$.
\item[(2)] There is a constant $B$ such that for any $h \in \Delta^*_{\cF}$, $T_n(h) \le B \cL_n(h)$, where $\Delta^*_{\cF} \defeq \set{\ell_f - \ell_{f^*_n} \colon f \in \cF}$.
\end{itemize}
\end{assumption}

\begin{remark}
Assumption~\ref{assumption:main} is not restrictive, it is the standard assumption also used in \citep{TolstikhinBK14-local-complexity-TRC}. In addition, Assumption~\ref{assumption:main}(2) holds if the loss function $\ell(\cdot,\cdot)$ is Lipschitz continuous in its first argument with a uniform convexity condition on $\ell$. An example of such loss function is $\ell(y',y) = (y'-y)^2$, when the function class
$\cF$ is convex and uniformly bounded~\citep{bartlett2005}.
\end{remark}
%It can be verified that $\Delta_{\cF}$ is symmetric.
\begin{lemma}\label{lemma:TLC-delta-ell-f}
Suppose that Assumption~\ref{assumption:main} holds, and $K_0 > 1$ is a fixed constant.
 For $h = \ell_{f_1} - \ell_{f_2} \in \Delta_{\cF}$ with $f_1,f_2 \in \cF$,  let
\bal\label{eq:tTn-def}
\tT_n(h) \defeq \inf_{f_1,f_2 \in \cF \colon \ell_{f_1} - \ell_{f_2} = h} 2B \cL_n(\ell_{f_1} - \ell_{f^*_n}) + 2B \cL_n(\ell_{f_2} - \ell_{f^*_n}).
\eal
Then Theorem~\ref{theorem:TLC} holds with $T_n$ replaced by the surrogate variance operator $\tT_n$ and $\cH = \Delta_{\cF}$, $H_0 = L_0$.
\end{lemma}
%Because $0 \in \set{h \colon h \in \Delta_{\cF},\tT_n(h) \le r}$,
We can have $\psi_{u,m}$ in Theorem~\ref{theorem:TLC} as the upper bound for inductive Rademacher complexities using Theorem~\ref{theorem:TC-RC}, leading to the following corollary.
\begin{corollary}\label{corollary:TLC-delta-ell-f-ind-rc}
Under the conditions of Theorem~\ref{theorem:TLC}, let $\psi_{u,m}(r)$ be a sub-root function and let $r_{u,m}$ be the fixed point of $\psi_{u,m}$. If for all $r \ge r_{u,m}$,
\bal\label{eq:TLC-cond-u-delta-ell-f-ind-rc}
%\resizebox{1\hsize}{!}{%
\psi_{u,m}(r) \ge 2\max\bigg\{&\Expect{}{\sup_{h \colon h \in \Delta_{\cF},\tT_n(h) \le r} R^{(\textup{ind})}_{\bsigma,\bY^{(u)}} h}, \Expect{}{\sup_{h \colon h \in \Delta_{\cF},\tT_n(h) \le r} R^{(\textup{ind})}_{\bsigma,\bY^{(m)}} h}, \nonumber \\
&\Expect{}{\sup_{h \colon h \in \Delta_{\cF},\tT_n(h) \le r} R^{(\textup{ind})}_{\bsigma,\bY^{(\min\set{u,m})}} h^2}\bigg\}.
%}
\eal
Then for every $x>0$ and every $\delta \in (0,1)$, with probability at least $1-\exp(-x)-\delta$ over $\set{\bZ}$,
\bal\label{eq:TLC-bound-g-upper-bound-ind-rc}
\cL_u(h) \le \cL_m(h)+ \frac{\tT_n(h)}{K_0} +c_0 r_{u,m}+ \frac{c_1x}{N_{u,m,\delta}}, \quad \forall h \in \cH,
\eal
where $N_{u,m,\delta}$ is defined in (\ref{eq:N-um-delta-def}).
% Also, for every $x>0$ and every $\delta \in (0,1)$, with probability at least $1-\exp(-x)-\delta$ over $\set{\bZ}$,
% \bal\label{eq:TLC-bound-gu--ind-rc}
% \cL_u(h) \ge \cL_m(h) - \frac{\tT_n(h)}{K_0} -c_0 r_{u,m}  - \frac{c_1x}{N_{u,m,\delta}}, \quad \forall h \in \cH.
% \eal
\end{corollary}
Applying Theorem~\ref{theorem:TLC} and Lemma~\ref{lemma:TLC-delta-ell-f} to the function class $\Delta_{\cF}$, we have the  excess risk bound for the empirical minimizer $\hat f_{m}$ in
Theorem~\ref{theorem:TLC-delta-ell-f-excess-risk-upper-bound}. The following theorem is needed for the proof of Theorem~\ref{theorem:TLC-delta-ell-f-excess-risk-upper-bound}.
\begin{theorem}\label{theorem:TLC-delta-star-ell-f}
Suppose that Assumption~\ref{assumption:main} holds. Let $\psi^*_{u,m}$ be a sub-root function and $r^*$ is the fixed point of $\psi^*_{u,m}$, and
$\cF^*_r \defeq \set{h \colon h \in\Delta^*_{\cF},B \cL_n(h) \le r}$.
%Let $\psi_m$ be another sub-root function and $r_m^*$ is the fixed point %of $\psi_m$.
Assume that for all $r \ge r^*$,
\bal
\psi^*_{u,m} (r) \ge  \max\Bigg\{&\Expect{}{\sup_{h \colon h \in\cF^*_r} R^-_{u} h}, \Expect{}{\sup_{h \colon h \in\cF^*_r} R^-_{m} h}, \Expect{}{\sup_{h \colon \cF^*_r} R^+_{\min\set{u,m}} h^2}\Bigg\}. \label{eq:TLC-cond-um-delta-star-ell-f-psi-star}
\eal
Then for every $x>0$ and every $\delta \in (0,1/2)$, with probability at least
$1-2\exp(-x)-2\delta$ over $\set{\bZ}$,
\bal\label{eq:TLC-bound-delta-star-ell-f}
\cL_n(\ell_{\hat f_{ u}} - \ell_{f^*_n}) \le  c_2 \pth{r^* + \frac{x}{N_{u,m,\delta}}}, \quad \cL_n(\ell_{\hat f_{ m}} - \ell_{f^*_n}) \le  c_2 \pth{r^* + \frac{x}{N_{u,m,\delta}}},
\eal
where $N_{u,m,\delta}$ is defined in (\ref{eq:N-um-delta-def}), $c_2$ is an absolute positive constant which depends on $B$ and $L_0$.
\end{theorem}

\begin{theorem}\label{theorem:TLC-delta-ell-f-excess-risk-upper-bound}
Suppose that Assumption~\ref{assumption:main} holds, and $K_0 > 1$ is an arbitrary constant. Let $\psi_{u,m}$ be a sub-root function satisfying (\ref{eq:TLC-cond-psi-general})
and $r_{u,m}$ is the fixed point of $\psi_{u,m}$.
%Let $\psi_m$ be another sub-root function and $r_m^*$ is the fixed point %of $\psi_m$.
Let $\psi^*_{u,m}$ be a sub-root function satisfying (\ref{eq:TLC-cond-um-delta-star-ell-f-psi-star}) and $r^*$ is the fixed point of $\psi^*_{u,m}$.
%Let $\psi_m$ be another sub-root function and $r_m^*$ is the fixed point %of $\psi_m$.
Then for every $x>0$ and every $\delta \in (0,1/3)$, with probability at least $1-3\exp(-x)-3\delta$ over $\set{\bZ}$,
 the excess risk $\cE(\hat f_{ m})$ satisfies
\bal\label{eq:TLC-ell-f-excess-risk-upper-bound}
\cE(\hat f_{ m}) \le c_0 r_{u,m} +\frac{4Bc_2r^*}{K_0}
+  \frac{c_3x}{N_{u,m,\delta}}.
\eal
%Let $\psi^-_{u,m}$ be a sub-root function satisfying (\ref{eq:TLC-cond-g-lower-bound})
%and with the fixed point $r^-_{u,m}$. Then for every $x>0$, with probability at least $1-2\exp(-x)$,
% the excess risk $\cE(\hat f_{\bd,m})$ is bounded from below by
%\bal\label{eq:TLC-ell-f-excess-risk-lower-bound}
%\cE(\hat f_{\bd,m}) \ge - c_0 r^-_{u,m} - \frac{4Bc_2r^*}{K_0}
%-  \frac{c_3x}{\min\set{m,u}}.
%\eal
Here $c_3 = c_1 + 4B c_2/K_0$, and $c_0,c_1,c_2$ are the positive constants in Theorem~\ref{theorem:TLC} and
Theorem~\ref{theorem:TLC-delta-star-ell-f}.
\end{theorem}
\begin{proof}%[\textbf{\textup{Proof of Theorem~\ref{theorem:TLC-delta-ell-f-excess-risk-upper-bound}}}]
Applying (\ref{eq:TLC-bound-g-upper-bound}) in Theorem~\ref{theorem:TLC} with
$h = \hat f_{m} - \hat f_{u}$, $\cH = \Delta_{\cF}$, and set the surrogate variance operator $\tT_n$ according to
(\ref{eq:tTn-def}) in Lemma~\ref{lemma:TLC-delta-ell-f}, we have
\bals
\cL_u(\hat f_{ m} - \hat f_{ u}) \le \cL_m(\hat f_{ m} - \hat f_{ u})
&+ \frac {2B}{K_0} \pth{\cL_n(\ell_{\hat f_{m}} - \ell_{f^*_n}) + \cL_n(\ell_{\hat f_{u}} - \ell_{f^*_n})} +c_0 r_{u,m}  + \frac{c_1x}{N_{u,m,\delta}}.
\eals
The upper bound for the excess risk, (\ref{eq:TLC-ell-f-excess-risk-upper-bound}), then follows by plugging the upper bounds (\ref{eq:TLC-bound-delta-star-ell-f}) for $\cL_n(\ell_{\hat f_{u}} - \ell_{f^*_n})$ and $\cL_n(\ell_{\hat f_{m}} - \ell_{f^*_n})$ in Theorem~\ref{theorem:TLC-delta-star-ell-f} to the above inequality, and noting that
$\cL_m(h) = \cL_m(\hat f_{m} - \hat f_{u}) \le 0$ by the
optimality of $\hat f_{m}$.
%The lower bound for the excess risk, (\ref{eq:TLC-ell-f-excess-risk-lower-bound}), follows by a similar argument using (\ref{eq:TLC-bound-gu-}) in
%Theorem~\ref{theorem:TLC}, Lemma~\ref{lemma:TLC-delta-ell-f}, and the upper bounds (\ref{eq:TLC-bound-delta-star-ell-f}) for $\cL_n(\ell_{\hat f_{\bd,u}} - \ell_{f^*_n})$ and $\cL_n(\ell_{\hat f_{\bd,m}} - \ell_{f^*_n})$.
\end{proof}

We remark that our excess risk bound (\ref{eq:TLC-ell-f-excess-risk-upper-bound}) and the existing excess risk bound (\ref{eq:local-complexity-existing-excess-risk-bound}) are derived under exactly the same Assumption~\ref{assumption:main}.
We also remark that (\ref{eq:TLC-ell-f-excess-risk-upper-bound}) is nearly consistent with the sharp bound for excess risk for inductive learning (\ref{eq:conceptual-risk-bound-inductive}), and there are only constant factors on the fixed points $r_{u,m}$ and $r^*$. In contrast, the existing local complexity-based excess risk bound (\ref{eq:local-complexity-existing-excess-risk-bound}) for transductive learning involves undesirable factors $n/u$ and $n/m$, which can make the bound (\ref{eq:local-complexity-existing-excess-risk-bound}) diverge under standard transductive learning models, such as the Transductive Kernel Learning (TKL). In Section~\ref{sec:TKL}, we have an instantiation of the TLC-based excess risk bound (\ref{eq:TLC-ell-f-excess-risk-upper-bound}) for TKL, significantly improving upon the current state-of-the-art.

\section{Proofs}
\label{sec:roadmap-proofs}

We first describe the sampling process of the test features $\bX_u$ and present the proof of Theorem~\ref{theorem:main-inequality-TLC} in Section~\ref{sec:roadmap-proofs-sample-alg}.
Theorem~\ref{theorem:main-inequality-TLC} is the direct consequence of Theorem~\ref{theorem:concentration-g-m-greater-u} for the case that $m \ge u$  and Theorem~\ref{theorem:concentration-tildeg-u-greater-m} for the case that $u \ge m$. The proof of Theorem~\ref{theorem:concentration-g-m-greater-u} along with its proof roadmap and the required lemmas are presented in Section~\ref{sec:proof-concentration-test-train-process-m-greater-u}. Following a similar proof strategy, the key ideas in the proof of Theorem~\ref{theorem:concentration-tildeg-u-greater-m} are described in Section~\ref{sec:proof-concentration-test-train-process-u-greater-m}.
We then present the proofs of Theorem~\ref{theorem:TLC} and Theorem~\ref{theorem:TLC-nonnegative-func-class} in Section~\ref{sec:proofs-theorem-TLC-TLC-nonnegative-func-class}. The proofs of all the other results of this paper are deferred to the appendix.
%\subsection{The process for sampling $\bX_u$ uniformly without replacement from $\bX_n$}
\subsection{Sampling Random Set Uniformly Without Replacement}
\label{sec:roadmap-proofs-sample-alg}

We specify the sampling process of $\bX_u$ as a random subset of $\bX_n$ of size $u$ sampled uniformly without replacement. The sampling strategy in \citep{El-Yaniv2009-TRC} is adopted to sample $u$ points from the full sample $\bX_n$ uniformly at random among all subsets of size $u$, which is described in Algorithm~\ref{alg:randperm}. The same strategy is also used in \citep{BARDENET2015-sampling-without-replacement} for sampling without replacement.

Let $\bd = \bth{d_1,\ldots,d_u} \in \NN^u$ be a random vector, and $\set{d_i}_{i=1}^u$ are $u$ independent random variables such that $d_i$ takes values in $[i:n]$ uniformly at random. Algorithm~\ref{alg:randperm}, which is adapted from \citep{El-Yaniv2009-TRC}, describes how to obtain $\bZ_{\bd} = \bth{\bZ_{\bd} (1),\ldots,\bZ_{\bd} (u)}^{\top} \in \NN^u$ as the first $u$ elements of a uniformly distributed permutation of $[n]$, so that $\set{\bZ_{\bd}}$ are a subset of size $u$ sampled uniformly from $[n]$ without replacement, which are also the indices of the test features sampled uniformly from $\bX_n$ without replacement.
%\subsection{Notations for $\bZ$ and functions of $\bZ$ using $\bd$}
%\label{sec:func-of-Z-as-d}
As a result of the discussion above, the set $\bZ$ in all
the previous sections is in fact $\bZ = \bZ_{\bd}$, and the notation $\bZ_{\bd}$ explicitly includes $\bd$ indicating that $\bZ$ is
sampled by sampling $\bd$. Similarly, functions in terms of $\bZ$, such as the test-train process $g(\cH)$, can also
be denoted as functions of $\bd$, such as $g(\cH) = g(\cH, \bd)$ indicating the dependence on $\bd$. In the following text and also
the appendix, we omit the notation $\cH$ associated with $g$ if the function class $\cH$ is clear from the context but may still keep the notation $\bd$, that is, we write $g(\bd)$ as an abbreviation for $g(\cH,\bd)$.

In our technical proofs, the probabilistic results over $\set{\bZ_{\bd}}$ are obtained over $\bZ_{\bd}$. In fact, if an event $\Omega$ with its randomness from $\set{\bZ_{\bd}}$ happens with probability $p$ over $\bZ_{\bd}$, then it happens with the at least probability $p$ over $\set{\bZ_{\bd}}$. For example, such events include concentration inequalities about $\cL_u, \cL_m$ and the supremum of empirical processes in terms of $\cL_u$ and $\cL_m$ such as the test-train process $g$. To see this, $\Omega$ happens for $p \cdot P^n_u$ permutations where $P^n_u = n!/(n-u)!$ is the permutation number, which comprises at least $p \cdot P(n,u)/u! = p \cdot C^n_u$ different subsets of size $u$ from $[n]$, where $C^n_u = n!/((n-u)!u!)$ is the binomial coefficient. This fact indicates that $\Omega$ happens with probability at least $p \cdot C^n_u/C^n_u = p$ over $\set{\bZ_{\bd}}$.

Since $\bZ_{\bd}$ is decided by $\bd$ and vice versa, an event happens with certain probability over $\bd$ also happens with at least the same probability over $\set{\bZ_{\bd}}$ due to the discussion above. Therefore, the results and their proofs in this section and the appendix are expressed in terms of $\bd$. It can also be verified that for any function or empirical process $f$ that depends on $\bd$ through $\set{\bZ_{\bd}}$, such as the test-train process $g(\bd)$, we have
$\Expect{\bd}{f(\bd)} = \Expect{\set{\bZ_{\bd}}}{f(\bd)}$. Such relation between probabilistic results over $\bd$ and $\set{\bZ_{\bd}}$ will be used frequently in our technical proofs in the sequel.

\begin{algorithm}[!hbt]
\renewcommand{\algorithmicrequire}{\textbf{Input:}}
\renewcommand\algorithmicensure {\textbf{Output:} }
\caption{The RANDPERM Algorithm in \citep{El-Yaniv2009-TRC}, which obtains $\bZ_{\bd} \in \NN^u$ as the first $u$ elements of a uniformly distributed permutation of $[n]$ by sampling a vector $\bd$ of independent random variables: $\bd = \bth{d_1,\ldots,d_u}$.}
\label{alg:randperm}
\begin{algorithmic}[1]
\State $\bZ_{\bd} \leftarrow$ RANDPERM($u$)
\State \textbf{\bf input: } $u$
\State \textbf{\bf initialize: } $\bI \in \NN^u$, $\bI(j) = j$ for all $j \in [u]$. $\bd, \bZ_{\bd} \in \NN^u$ are initialized as zero vectors.

\State \textbf{\bf for } $i=1,\ldots,u$ \,\,\textbf{\bf do }
\\
\quad Sample $d_i$ uniformly from $[i:n]$. \\
\quad $\bd(i) = d_i$, $\bZ_{\bd}(i) = \bI(d_i)$. \\
\quad Swap the values of $\bI(i)$ and $\bI(d_i)$.
\State \textbf{\bf end for }
\State \textbf{\bf return} $\bZ_{\bd}$
\end{algorithmic}
\end{algorithm}

For the convenience of the proof of Theorem~\ref{theorem:main-inequality-TLC}, we run Algorithm~\ref{alg:randperm} to sample the indices of the smaller set among the test features $\bX_u$ and the training features $\bX_m$. When $m \ge u$, Algorithm~\ref{alg:randperm} is employed to sample the indices of the test features, $\bZ_{\bd}$. When $u \ge m$, we invoke function $\textup{RANDPERM}$ in Algorithm~\ref{alg:randperm} with input changed from $u$ to $m$, then $\textup{RANDPERM}(m)$ returns the indices of the training features. Accordingly, we have the concentration inequality for the test-train process in Theorem~\ref{theorem:concentration-g-m-greater-u} for the case that $m \ge u$, and in Theorem~\ref{theorem:concentration-tildeg-u-greater-m} for the case that $u \ge m$.
\begin{theorem}\label{theorem:concentration-g-m-greater-u}
Under the conditions of Theorem~\ref{theorem:main-inequality-TLC}, if $m \ge u$, then for every $x > 0$ and every $\delta \in (0,1)$,
with probability at least $1-\exp(-x) - \delta$ over $\bd$,
\bal\label{eq:concentration-g-m-greater-u}
g(\bd) \le &\Expect{\bd}{g(\bd)} + 4\sqrt{\frac{10 \log_2 (\frac{4u}{\delta}) rx}{u}} \nonumber \\
&+2\sqrt{2} \inf_{\alpha > 0} \pth{\frac{\cfrakR^+_u(\cH^2) }{\alpha} + \frac{\alpha \log_2 (\frac{4u}{\delta}) x}{u}}  +  \frac {4\log_2 (\frac{4u}{\delta})
H_0^2x}{u}.
\eal
\end{theorem}
\begin{theorem}\label{theorem:concentration-tildeg-u-greater-m}
Under the conditions of Theorem~\ref{theorem:main-inequality-TLC}, if $u \ge m$, then for every $x > 0$ and every $\delta \in (0,1)$, with probability at least $1-\exp(-x)
-\delta$ over $\bd$,
\bal\label{eq:concentration-g-u-greater-m}
g(\bd) \le &\Expect{\bd}{g(\bd)}+ 4\sqrt{\frac{10 \log_2 (\frac{4m}{\delta})rx}{m}} \nonumber \\
&+2\sqrt{2} \inf_{\alpha > 0} \pth{\frac{\cfrakR^+_m(\cH^2)}{\alpha} + \frac{\alpha \log_2 (\frac{4m}{\delta}) x}{m}}  +  \frac {4 \log_2 (\frac{4m}{\delta}) H_0^2x}{m}.
\eal
\end{theorem}
Theorem~\ref{theorem:main-inequality-TLC} is then proved below as the direct consequence of Theorem~\ref{theorem:concentration-g-m-greater-u} and Theorem~\ref{theorem:concentration-tildeg-u-greater-m}.
\begin{proof}[\textbf{\textup{Proof of Theorem~\ref{theorem:main-inequality-TLC}}}]
We first note that $\Expect{\bd}{g(\bd)} = \Expect{\set{\bZ_{\bd}}}{g(\bd)}$.
(\ref{eq:concentration-gd-TLC}) follows by combining the upper bound (\ref{eq:concentration-g-m-greater-u})
in Theorem~\ref{theorem:concentration-g-m-greater-u} for the case that $m \ge u$ and the upper bound (\ref{eq:concentration-g-u-greater-m}) for the case that $u \ge m$ in Theorem~\ref{theorem:concentration-tildeg-u-greater-m}, along with relation between the
probabilistic result over $\bd$ and that over $\set{\bZ_{\bd}}$ discussed in Section~\ref{sec:roadmap-proofs-sample-alg}.
\end{proof}

We then present the proof of Theorem~\ref{theorem:concentration-g-m-greater-u}, and the proof of Theorem~\ref{theorem:concentration-tildeg-u-greater-m}, deferred to the appendix, follows a similar strategy. Let $X_1, X_2, \ldots, X_n$ be independent random variables taking values in a measurable space $\cX$, and let $X_1^n$ denote the vector of these $n$ random variables. We then introduce the definition of upper variance which plays a central role in the exponential version of the Efron-Stein inequality
\citep[Theorem 2]{Boucheron2003-concentration-entropy-method}. Let $f \colon \cX^n \to \RR$ be a measurable function, and we are concerned with concentration of the random variable $Z = f(X_1,X_2,\ldots, X_n)$. Let $X'_1, X'_2, \ldots, X'_n$ denote independent copies of $X_1, X_2, \ldots, X_n$, and we write
\bals
Z^{(i)} = f(X_1,\ldots,X_{i-1},X'_i,X_{i+1},\ldots,X_n).
\eals
We define the upper variance for $Z$ as
\bal\label{eq:V+-def}
V_+(Z \mid X_1^n) \defeq \Expect{}{\sum\limits_{i=1}^n \pth{Z-Z^{(i)}}^2 \indict{Z>Z^{(i)}}
\mid X_1^n}.
\eal
We also write $V_+(Z \mid X_1^n)$
as $V_+(Z)$ if $X_1^n$ is clear from the context.

\subsection{Proof of Theorem~\ref{theorem:concentration-g-m-greater-u}}
\label{sec:proof-concentration-test-train-process-m-greater-u}

We first introduce the necessary definitions for our proof. Since the exponential version of the Efron-Stein inequality \citep[Theorem 2]{Boucheron2003-concentration-entropy-method} is used as a technical tool in the proof of Theorem~\ref{theorem:main-inequality-TLC}, we let $\bd' = [d'_1,\ldots,d'_u]$ be independent copies of $\bd$, and $\bd^{(i)} = [d_1,\ldots,d_{i-1},d'_i, d_{i+1},\ldots,d_u]$ for every $i \in [u]$, so that $\bd^{(i)}$ differs from $\bd$ only at the $i$-th coordinate.

For a function class $\cH'$, we define
\bal
t_u(\bd,\cH') &\defeq \sup_{h \in \cH'} \pth{\cL_h(\set{\bZ_{\bd}}) - \cL_n(h)} \label{eq:tdu}.
\eal
We introduce the definition of a chain and the set $\Omega(Q)$ comprising every vector $\bd$ wherein the length of the longest chain is not less than $Q$.
\begin{definition}
\label{def:chain}
Let $N \in \NN$, $Q \in [N]$, and $\bv \in \NN^N$.
A set $\set{j_1,\ldots,j_{Q}} \subseteq [N]$, where $j_1 < j_2 < \ldots < j_Q$, is defined to be a chain of length $Q$ in $[N]$ associated with $\bv$ if $j_k = \bv(j_{k-1})$
holds for all $k \in [2:Q]$ when $Q \ge 2$.
\end{definition}
\begin{remark}
If $Q=1$, then the singleton set $\set{j}$ is a chain in $[N]$ for any
$j \in [N]$.
\end{remark}

\begin{definition}
For $Q \in [u]$, define
\bal\label{eq:omega-Q-def}
\Omega(Q) \defeq
\set{\bd \colon \textup{there exists a chain } \set{j_1,\ldots,j_{Q'}}
\textup{ in } [u] \textup{ associated with } \bd, Q' \in [Q:u]}
\eal
as the subset of $\bd$ for which the size of the maximum chain is not less than $Q$.
\end{definition}

We define a surrogate process $\bar t_u(\bd,\cH')$ for a
class of functions $\cH'$ with ranges in $[0,H']$ ($H' > 0$) as
\bal
\label{eq:def-bar-tdu}
\bar t_u(\bd,\cH') \defeq
\begin{cases}
t_u(\bd,\cH') & \bd \notin \Omega(Q),\\
-\sup_{h \in \cH'} \cL_n(h) & \bd \in \Omega(Q).
\end{cases}
\eal
We remark that
we can run two instances
of Algorithm~\ref{alg:randperm} to generate
$\bZ_{\bd}$ and $\bZ_{\bd^{(i)}}$ respectively.
In the first instance of Algorithm~\ref{alg:randperm},
$\bZ_{\bd}$ is generated where
the sampled vector is $\bd$. In the second instance of Algorithm~\ref{alg:randperm},
$\bZ_{\bd^{(i)}}$ is generated
where the sampled
random vector is $\bd^{(i)}$. The vector $\bI$ in
the first instance of Algorithm~\ref{alg:randperm} for
 generating $\bZ_{\bd}$ is
still denoted as $\bI$ without confusion, and the
vector $\bI$ in
the second instance of Algorithm~\ref{alg:randperm} for generating
$\bZ_{\bd^{(i)}}$ is denoted as $\bI^{(i)}$. In this paper, ``right after the $i$-th iteration of Algorithm~\ref{alg:randperm}'' means after the $i$-th iteration and before the $(i+1)$-th iteration of Algorithm~\ref{alg:randperm}. Moreover, when it is said ``for all $\bd$'', it means for all $\bd \in [n]^u$ where $\bd(i)$ is uniformly distributed over $[i:n]$ for all $i \in [u]$.

Let $\cH$ be the function class in Theorem~\ref{theorem:main-inequality-TLC}. For any $h \in \cH$, we define
\bal\label{eq:E-change-test-train-loss}
E(h,\bd,\bd^{(i)}) \defeq \cL_h(\set{\bZ_{\bd}}) -\cL_h(\overline{\set{\bZ_{\bd}}}) - \cL_h(\set{\bZ_{\bd^{(i)}}})  + \cL_h(\overline{\bZ_{\bd^{(i)}}})
\eal
as the change of the test-train loss for a particular function
$h$ if $\bd$ is changed to $\bd^{(i)}$. Then we have the following lemma showing the values of $E(h,\bd,\bd^{(i)}) $ under four specific cases. The key argument in the proof of this
lemma is  that there can be at most only one pair of different elements in $\set{\bZ_{\bd}}$
and $\set{\bZ_{\bd^{(i)}}}$. Because $\bd$ and $\bd^{(i)}$ only differ at the
$i$-th element, it can be verified that right after the $(i-1)$-th
iteration of both instances of Algorithm~\ref{alg:randperm}, we have $\bI(i) = \bI^{(i)}(i)$.
\begin{lemma}
\label{lemma:uniform-draw-diff}
For any $h \in \cH$, there are four cases for $E(h,\bd,\bd^{(i)})$ for any $i \in [u]$.
\begin{itemize}[leftmargin=40pt]
\item[Case 1:] $E(h,\bd,\bd^{(i)}) = \pth{\frac 1u + \frac 1m} \pth{h(\bZ_{\bd}(i)) - h(\bZ_{\bd^{(i)}}(q(i)))}$,

if $d_i \neq d'_i$, $q(i) \le u, p(i) > u$,
\item[Case 2:] $E(h,\bd,\bd^{(i)}) = \pth{\frac 1u + \frac 1m} \pth{h(\bZ_{\bd}(p(i))) - h(\bZ_{\bd^{(i)}}(i))}$,

if $d_i \neq d'_i$, $p(i) \le u, q(i) > u$,
\item[Case 3:]  $E(h,\bd,\bd^{(i)}) = \pth{\frac 1u + \frac 1m} \pth{h(\bZ_{\bd}(i)) - h(\bZ_{\bd^{(i)}}(i))}$,

if $d_i \neq d'_i$, $p(i) >u, q(i) > u$,

\item[Case 4:]  $E(h,\bd,\bd^{(i)}) = 0$,

if $d_i = d'_i$ or $p(i),q(i) \le u$.
\end{itemize}
Here
\bal
q(i) &\defeq \min\set{i' \in [i+1,u] \colon \bZ_{\bd^{(i)}}(i') = i_0},
\label{eq:qi}\\
p(i) &\defeq \min\set{i' \in [i+1,u]\colon \bZ_{\bd}(i') = i_0}, \label{eq:pi}
\eal
where $i_0 \defeq \bI(i) = \bI^{(i)}(i)$ is the $i$-th element of $\bI$ and $\bI^{(i)}$ right after the $(i-1)$-th
iteration of both instances of Algorithm~\ref{alg:randperm} which generate $\bZ_{\bd}$ and $\bZ_{\bd^{(i)}}$, respectively. In (\ref{eq:qi}) and (\ref{eq:pi}), we use the convention that the $\min$ over an empty set returns $+\infty$.
\end{lemma}
\begin{remark}
%Claim~\ref{claim:singleton-pi-qi}
It follows from Claim~\ref{claim:singleton-pi-qi}, deferred to the appendix before the proof of
Lemma~\ref{lemma:uniform-draw-diff}, that
$q(i),p(i)$ are either finite integers in $[2:u]$ or $\infty$,
so that $q(i) > u$ or $p(i) > u$  indicates that $q(i) = \infty$
or $p(i) = \infty$.
\end{remark}
Lemma~\ref{lemma:uniform-draw-diff} shows that the change of the test-train loss for every loss function $h$, $\cL_u(h)-\cL_m(h)$, is always a difference of $h$ between a pair of two elements specified by $\bd$ and $\bd^{(i)}$, which is a novel combinatorial property of the test-train loss.

\noindent \textbf{Roadmap of the Proof of Theorem~\ref{theorem:concentration-g-m-greater-u}.}  The roadmap of the proof of Theorem~\ref{theorem:concentration-g-m-greater-u} has three steps. We define a surrogate process, $\barg$, such that $\barg(\bd) = g(\bd)$ if $\bd \notin \Omega(Q)$, and $\barg(\bd) = -2H_0$ if $\bd \in \Omega(Q)$.
In step 1, the upper bound for the upper variance $V_+(\barg)$ is derived using Lemma~\ref{lemma:uniform-draw-diff} along with other auxiliary lemmas.  In step 2, we derive the upper bound for $\log \Expect{\bd}{\exp\pth{\lambda\pth{\barg- \Expect{}{\barg}}}}$ by applying the exponential version of the Efron-Stein inequality \citep[Theorem 2]{Boucheron2003-concentration-entropy-method}. The concentration inequality for the test-train process $g$ then follows from that for $\barg$ in step 3, since $\barg = g$ under the high-probability event that $\bd \notin \Omega(Q)$. When $\bd \notin \Omega(Q)$, the length of the longest chain associated with $\bd$ is bounded by $Q-1$ due to the definition (\ref{eq:omega-Q-def}), leading to the bounded $V_+(\barg)$ in step 1.

\begin{proof}[\textbf{Proof of Theorem~\ref{theorem:concentration-g-m-greater-u}}]
We first define the surrogate empirical process $\barg(\bd)$ as
\bal
\label{eq:concentration-g-m-greater-u-surrogate-g}
\barg(\bd)  \defeq
\begin{cases}
g(\bd) & \bd \notin \Omega(Q) \\
-2H_0  & \bd \in \Omega(Q).
\end{cases}
\eal

\noindent \textbf{Step 1: Derivation of the Upper Bound for $V_+(\barg)$.}

We now derive the upper bound for the upper variance $V_+(\barg)$ where
$V_+$ is defined in (\ref{eq:V+-def}), that is,
\bal
\label{eq:concentration-g-m-greater-u-V+-surrogate-g}
V_+(\barg \mid \bd) = \Expect{}{\sum\limits_{i=1}^m \pth{\barg(\bd) - \barg(\bd^{(i)})}^2
\indict{\barg(\bd) > \barg(\bd^{(i)})}\longmid \bd  }.
\eal
We derive such upper bound for the following two cases: $\bd \notin \Omega(Q)$
and $\bd \in \Omega(Q)$.

\noindent \textbf{The case that $\bd \notin \Omega(Q)$.} For a given $\bd \notin \Omega(Q)$, let the supremum in $\barg(\bd) = g(\bd)$ be achieved by $h^* \in \cH$, so that $\barg(\bd) = \sup_{h \in \cH} (\cL_u(h)
-\cL_m(h) ) = \cL_{h^*}(\bZ_{\bd}) -\cL_{h^*}(\overline{\set{\bZ_{\bd}}})$. Note that if such supremum is not achieved by any function in $\cH$, then there exists $h^* \in \cH$ such that $\abth{\barg(\bd) -
(\cL_{h^*}(\bZ_{\bd}) -\cL_{h^*}(\overline{\set{\bZ_{\bd}}})} \le \eps$ for an infinitely small positive number $\eps$, and we have the same results claimed in this theorem by letting $\eps \to 0+$. Therefore, without loss of generality, we assume that the supremum is achieved by some $h^* \in \cH$. We then have
\bal\label{eq:concentration-g-m-greater-u-seg1-pre}
0 &\le \pth{\barg(\bd) - \barg(\bd^{(i)})}\indict{\barg(\bd) > \barg(\bd^{(i)})} \nonumber \\
&\le \pth{ \cL_{h^*}(\bZ_{\bd}) -\cL_{h^*}(\overline{\set{\bZ_{\bd}}}) - \cL_{h^*}(\bZ_{\bd^{(i)}})+
\cL_{h^*}(\overline{\bZ_{\bd^{(i)}}})
}\indict{\barg(\bd) > \barg(\bd^{(i)})} \nonumber \\
&=E(h^*,\bd,\bd^{(i)})\indict{\barg(\bd) > \barg(\bd^{(i)})},
\eal
which follows from the fact that when $\barg(\bd) > \barg(\bd^{(i)})$, and $E(h^*,\bd,\bd^{(i)})=\cL_{h^*}(\bZ_{\bd}) -\cL_{h^*}(\overline{\set{\bZ_{\bd}}}) - \cL_{h^*}(\bZ_{\bd^{(i)}})+ \cL_{h^*}(\overline{\bZ_{\bd^{(i)}}})  \ge \barg(\bd) - \barg(\bd^{(i)}) > 0$.
For any $h \in \cH$, we define
\bal\label{eq:set-A}
%\cA &\defeq \set{i \in [u] \colon E(g,\bd,\bd^{(i)})
%\textup{ satisfies Case 1 or Case 3 or Case 4 in Lemma~\ref{lemma:uniform-draw-diff}} %}.
\cA_{h,p} \defeq \set{i \in [u] \colon \bd \textup{ and } \bd^{(i)}
\textup{ satisfy Case $p$ in Lemma~\ref{lemma:uniform-draw-diff}} }, \quad p \in [4].
\eal
By considering the four cases in Lemma~\ref{lemma:uniform-draw-diff}, we have
\bal\label{eq:concentration-g-m-greater-u-seg1}
&\sum\limits_{i=1}^u \pth{\barg(\bd) - \barg(\bd^{(i)})}^2\indict{\barg(\bd) > \barg(\bd^{(i)})}
\le \sum\limits_{i=1}^u E^2(h^*,\bd,\bd^{(i)})\indict{\barg(\bd) > \barg(\bd^{(i)})}
\nonumber \\
&= \sum\limits_{i \in \cA_{h,1}} E^2(h^*,\bd,\bd^{(i)}) \indict{\barg(\bd) > \barg(\bd^{(i)})} +
\sum\limits_{i \in \cA_{h,2}} E^2(h^*,\bd,\bd^{(i)}) \indict{\barg(\bd) > \barg(\bd^{(i)})}
\nonumber \\
&\phantom{=}+ \sum\limits_{i \in \cA_{h,3}} E^2(h^*,\bd,\bd^{(i)}) \indict{\barg(\bd) > \barg(\bd^{(i)})}
\nonumber \\
&\stackrel{\circled{1}}{\le} 2\pth{\frac 1u + \frac 1m}^2 \sum\limits_{i \in \cA_{h,1}}
\pth{\pth{h^*(\bZ_{\bd}(i))}^2 + \pth{h^*(\bZ_{\bd^{(i)}}(q(i)))}^2 }
\nonumber \\
&\phantom{=}+2\pth{\frac 1u + \frac 1m}^2 \sum\limits_{i \in \cA_{h,2}}
\pth{\pth{h^*(\bZ_{\bd}(p(i)))}^2 + \pth{h^*(\bZ_{\bd^{(i)}}(i))}^2 }
\nonumber \\
&\phantom{=}+2\pth{\frac 1u + \frac 1m}^2 \sum\limits_{i \in \cA_{h,3}}
\pth{\pth{h^*(\bZ_{\bd}(i))}^2 + \pth{h^*(\bZ_{\bd^{(i)}}(i))}^2 }  \nonumber \\
&\stackrel{\circled{2}}{\le} 2Q\pth{\frac 1u + \frac 1m}^2 \sum\limits_{i=1}^u \pth{h^*(\bZ_{\bd}(i))}^2 + R_1
\nonumber \\
&\le \frac{8Q}{u^2} \sum\limits_{i=1}^u
\pth{\pth{h^*(\bZ_{\bd}(i))}^2 - T_n(h^*)} +\frac{8Q}{u} T_n(h^*)+R_1.
\eal

Here $\circled{1}$ follows from Lemma~\ref{lemma:uniform-draw-diff} and Cauchy inequality that $(a-b)^2 \le 2(a^2 + b^2)$ for all $a,b \in \RR$. $q$ and $p$ are defined in (\ref{eq:qi})-(\ref{eq:pi}).
 $R_1$ in $\circled{2}$ is defined as
\bals
R_1 \defeq 2\pth{\frac 1u + \frac 1m}^2
\sum\limits_{i\in \cA_{h,1}} \pth{h^*(\bZ_{\bd^{(i)}}(q(i)))}^2 + 2\pth{\frac 1u + \frac 1m}^2 \sum\limits_{i\in \cA_{h,2} \cup \cA_{h,3}} \pth{h^*(\bZ_{\bd^{(i)}}(i))}^2.
\eals

It follows from Lemma~\ref{lemma:zd-pi-upper-bound} and the fact that $\cA_{h,1}$ and $\cA_{h,3}$ are disjoint subsets of $[u]$ that
\bals
&2\pth{\frac 1u + \frac 1m}^2 \sum\limits_{i \in \cA_{h,2}}
\pth{h^*(\bZ_{\bd}(p(i)))}^2  \le 2(Q-1)\pth{\frac 1u + \frac 1m}^2 \sum\limits_{i=1}^u \pth{h^*(\bZ_{\bd}(i))}^2, \\
&2\pth{\frac 1u + \frac 1m}^2 \sum\limits_{i \in \cA_{h,1}}
\pth{h^*(\bZ_{\bd}(i))}^2 +  2\pth{\frac 1u + \frac 1m}^2 \sum\limits_{i \in \cA_{h,3}} \pth{h^*(\bZ_{\bd}(i))}^2 \\
&\le 2\pth{\frac 1u + \frac 1m}^2\sum\limits_{i=1}^u \pth{h^*(\bZ_{\bd}(i))}^2,
\eals

so that $\circled{2}$ holds.

It follows from (\ref{eq:Zdi-random-u-i}) and
(\ref{eq:Zdi-random-u-q}) in Lemma~\ref{lemma:Zdi-random} that
\bal\label{eq:concentration-g-m-greater-u-seg1-post}
\Expect{}{R_1 \longmid \bd}
&\le 2\pth{\frac 1u + \frac 1m}^2
\Expect{}{\sum\limits_{i \in \cA_{h,1}} \pth{h^*(\bZ_{\bd^{(i)}}(q(i)))}^2 \longmid \bd} \nonumber \\
&\phantom{=}+ 2\pth{\frac 1u + \frac 1m}^2
\Expect{}{\sum\limits_{i = 1}^u \pth{h^*(\bZ_{\bd^{(i)}}(i))}^2 \longmid \bd} \nonumber \\
&\le \frac{4(Q-1)nu}{m}\pth{\frac 1u + \frac 1m}^2T_n(h^*) + \frac{2nu}{m} \pth{\frac 1u + \frac 1m}^2 T_n(h^*) \nonumber \\
&= \frac{2(2Q-1)nu}{m}\pth{\frac 1u + \frac 1m}^2T_n(h^*)  \le\frac{16(2Q-1)}{u} T_n(h^*),
\eal
where the last inequality follow from $m \ge u$ and $m \ge n/2$. It follows from (\ref{eq:concentration-g-m-greater-u-seg1}) and (\ref{eq:concentration-g-m-greater-u-seg1-post})  that
\bals
&V_+(\barg \mid \bd) \defeq \Expect{}{\sum\limits_{i=1}^m \pth{\barg(\bd) - \barg(\bd^{(i)})}^2\indict{\barg(\bd) > \barg(\bd^{(i)})}\longmid \bd  } \nonumber \\
&\le \frac{8Q}{u^2}  \sum\limits_{i=1}^u
\pth{\pth{h^*(\bZ_{\bd}(i))}^2 - T_n(h^*)} + \frac{8(5Q-2)T_n(h^*)}{ u}
\le \frac{8Q}{u} t_u(\bd,\cH^2) +\frac{8(5Q-2)r}{ u} ,
\eals
where $t_u$ is defined in (\ref{eq:tdu}), and
$t_u(\bd,\cH^2) = 1/u \cdot \sup_{h \in \cH} \sum\limits_{i=1}^u
\pth{h^2(\bZ_{\bd}(i)) - T_n(h)}$ with $\cH^2 = \set{h^2 \mid h \in \cH}$ ,  and the last inequality follows from
$r \ge \sup_{h \in \cH} T_n(h)$.

\noindent \textbf{The case that $\bd \in \Omega(Q)$.} In this case,
$\barg(\bd) =-2H_0$. Because $ \barg(\bd^{(i)}) \ge -2H_0
= \barg(\bd)$ for any $\bd \in \Omega(Q)$, we have
\bals
\pth{\barg(\bd) - \barg(\bd^{(i)})}^2\indict{\barg(\bd)
> \barg(\bd^{(i)})} = 0, \forall \bd \in \Omega(Q),
\eals
so that $V_+(\barg \mid \bd) =0$ for all $\bd \in
\Omega(Q)$.

In summary, we have proved that
\bal
\label{eq:concentration-g-m-greater-u-V+-barg}
V_+(\barg \mid \bd) = \begin{cases}
\frac{8Q}{u} t_u(\bd,\cH^2) +\frac{8(5Q-2)r}{ u} & \bd \notin \Omega(Q),
\\
0 & \bd \in \Omega(Q).
\end{cases}
\eal

It then follows from (\ref{eq:concentration-g-m-greater-u-V+-barg}) and the definition of the surrogate process $\bar t_u$ in (\ref{eq:def-bar-tdu}) that
\bal\label{eq:concentration-g-m-greater-u-V+-bound}
V_+(\barg \mid \bd)  \le
\frac{8Q}{u} \bar t_u(\bd,\cH^2) +\frac{8(5Q-2)r}{ u}
\eal
holds for all $\bd$. We only need to verify (\ref{eq:concentration-g-m-greater-u-V+-bound})
for $\bd \in \Omega(Q)$ as $\bar t_u = t_u$ for $\bd \notin \Omega(Q)$.
When $\bd \in \Omega(Q)$, since $\bar t_u(\bd,\cH^2) = -\sup_{h \in \cH^2} \cL_n(h) = - \sup_{h \in \cH} T_n(h) \ge -r$, so that the RHS of
(\ref{eq:concentration-g-m-greater-u-V+-bound}) is not less than
$8(5Q-2)r/u - 8Qr/u \ge 0=V_+(\barg)$.

\noindent \textbf{Step 2: Derivation of the Upper Bound for $\log \Expect{\bd}{\exp\pth{\lambda\pth{Z- \Expect{}{Z}}}}$ with $Z = \barg(\bd)$.}
It follows from the exponential version of the Efron-Stein inequality~\citep[Theorem 2]{Boucheron2003-concentration-entropy-method}, presented as
Theorem~\ref{theorem:Boucheron2003-concentration-thm2} of the appendix,
%Theorem~\ref{theorem:Boucheron2003-concentration-thm2}
that for all $\theta > 0$ and $\lambda \in (0,1/\theta)$,
\bal\label{eq:concentration-g-m-greater-u-seg2}
\log \Expect{\bd}{\exp\pth{\lambda\pth{Z- \Expect{}{Z}}}}
\le \frac{\lambda \theta}{1-\lambda \theta} \log\Expect{\bd}{\exp\pth{\frac{\lambda V_+(Z \mid \bd)}{\theta}}},
\eal
where $Z = \barg(\bd)$. With $\theta = \frac {QH_0^2}{u}$,
it follows from
(\ref{eq:concentration-g-m-greater-u-V+-bound}) that
\bal\label{eq:concentration-g-m-greater-u-seg3}
&\log\Expect{\bd}{\exp\pth{\frac{\lambda V_+(Z \mid \bd)}{\theta}}} =
\frac{8(5Q-2)\lambda r}{u\theta} +\log \Expect{\bd}{\exp\pth{\frac{8Q\lambda}{u\theta} \bar t_u(\bd,\cH^2)  }} \nonumber \\
&\stackrel{\circled{1}}{\le} \frac{8Q\lambda}{u\theta}
\pth{ \Expect{\bd}{\bar t_u(\bd,\cH^2)} + 5r}
+\frac{8Q}{u\theta}\log \Expect{\bd}{\exp\pth{\lambda \pth{\bar t_u(\bd,\cH^2) - \Expect{}{\bar t_u(\bd,\cH^2)}} }} \nonumber \\
&\stackrel{\circled{2}}{\le} \frac{8Q\lambda}{u\theta}
\pth{ \Expect{\bd}{\bar t_u(\bd,\cH^2)} + 5r}
+\frac{8Q}{u\theta} \cdot \frac{\lambda^2\theta}{1- \lambda \theta} \pth{ \Expect{\bd}{t_u(\bd,\cH^2)} + r} \nonumber \\
&\le \frac{8Q\lambda}{u\theta} \cdot \frac{1}{1- \lambda \theta}
\pth{ \Expect{\bd}{t_u(\bd,\cH^2)} + 5r}.
\eal
Here $\circled{1}$ follows from the Jensen's inequality that $\Expect{}{x^{\alpha}} \le \pth{\Expect{}{x}}^{\alpha}$ for nonnegative RV $x$ and $\alpha \in (0,1]$, and $\frac{8Q}{u\theta}=\frac{8}{H_0^2} \le 1$. $\circled{2}$ follows from (\ref{eq:concentration-square-func-class-tu}) in
Lemma~\ref{lemma:concentration-square-func-class-tu} with $\cH' = \cH^2$,
$H' = H_0^2$ and $\bar t_u(\bd,\cH') \le t_u(\bd,\cH')$ with such $\cH'$.

\noindent \textbf{Step 3: Obtaining the Concentration Inequality for $g(\bd)$ through $Z = \barg(\bd)$.}
It follows from (\ref{eq:concentration-g-m-greater-u-seg2}), (\ref{eq:concentration-g-m-greater-u-seg3}), and the Markov's inequality that
\bals
\Prob{Z - \Expect{}{Z} \ge t}
\le \exp\pth{- \lambda t +\frac{\lambda^2 C}{\pth{1- \lambda \theta}^2}  } \le
\exp\pth{- \lambda t + \frac{\lambda^2 C}{1- 2\lambda \theta}  }
\eals
where $C \defeq 8Q/u \cdot \pth{ \Expect{\bd}{t_u(\bd,\cH^2)} + 5r}$ and $\lambda \in (0,1/(2\theta))$. It can be verified that the exponent on the RHS of the above inequality is minimized when
$\lambda = 1/(2\theta) \cdot \pth{1-(1+2\theta t/C)^{-1/2}}$, for example, by \citep[Lemma 11]{Boucheron2003-concentration-entropy-method}. As a result,
\bals
\Prob{Z - \Expect{}{Z} \ge t}
\le
\exp\pth{- \lambda t + \frac{\lambda^2 C}{1- 2\lambda \theta}  }\le
\exp\pth{-\frac{t^2}{4C+4\theta t} }.
\eals
Therefore, for every $x > 0$, with probability at least $1-\exp(-x)$,
\bal\label{eq:concentration-g-m-greater-u-seg4}
Z - \Expect{}{Z} \le 4\theta x + 4\sqrt{\frac{10Qrx}{u}}
+ 2\sqrt{2} \inf_{\alpha > 0} \pth{\frac{\Expect{\bd}{t_u(\bd,\cH^2)}}{\alpha} + \frac{\alpha Qx}{u}}.
\eal
We have $\Expect{\bd}{t_u(\bd,\cH^2)} = \cfrakR^+_u(\cH^2)$ due to Lemma~\ref{lemma:td-gd-RC} of the appendix.
%Lemma~\ref{lemma:td-gd-RC}
Since $g(\bd) \ge -2H_0 = \barg (\bd)$ when $\bd \in \Omega(Q)$, we have
\bal\label{eq:concentration-g-m-greater-u-EZ-bound}
\Expect{}{Z} = \Expect{\bd}{\barg(\bd)} \le \Expect{\bd}{g(\bd)} = \Expect{\set{\bZ_{\bd}}}{g(\bd)}.
\eal
It follows from Lemma~\ref{lemma:zd-pi-qi-repeat-number} that
$\Prob{\Omega(Q)} < u/2^{Q-1}$ and $Z = \barg  = g$ when
$\bd \notin \Omega(Q)$. Then (\ref{eq:concentration-g-m-greater-u}) follows from the union bound, (\ref{eq:concentration-g-m-greater-u-seg4})
and (\ref{eq:concentration-g-m-greater-u-EZ-bound}).

\end{proof}

We now present the following lemmas, Lemma~\ref{lemma:zd-pi-qi-repeat-number}-Lemma~\ref{lemma:concentration-square-func-class-tu}, for the proof of Theorem~\ref{theorem:concentration-g-m-greater-u}.
\begin{lemma}\label{lemma:zd-pi-qi-repeat-number}
Suppose  $m \ge u$ and $Q \in [2:u]$, and we define the following sets:
\bal
\Omega^{(1)} (Q) &\defeq \set{\bd \colon \textup{there exists
a subset } \cP \subseteq [u], \abth{\cP} \ge Q,
\textup{ s.t. } p(s) = p(t) \le u, \forall
s,t \in \cP} \label{eq:zd-pi-repeat-set}, \\
\Omega^{(2)}(Q) &\defeq \{\bd \colon \textup{there exists
a subset } \cP \subseteq [u], \abth{\cP} \ge Q,
\textup{ s.t. } q(s) \le u, \forall s \in \cP,
\nonumber \\
&\hspace{0.3in} \bZ_{\bd^{(s)}}(q(s)) = \bZ_{\bd^{(t)}}(q(t)), \forall
s,t \in \cP\} \label{eq:zd-qi-repeat-set},
\eal
where $q,p$ are defined in (\ref{eq:qi})-(\ref{eq:pi}). We then have
\bal\label{eq:zd-pi-qi-repeat-number-prob-Omega12}
\Omega^{(1)}(Q) \subseteq \Omega(Q), \quad
\Omega^{(2)}(Q) \subseteq \Omega(Q).
\eal
Furthermore,
\bal\label{eq:omega-Q-prob-bound}
\Prob{\Omega(Q)} <\frac{u}{2^{Q-1}}.
\eal
In particular, when $Q=2$, $\Prob{\Omega(2)} \le u^2/m$.
\end{lemma}
\begin{remark}
It follows from (\ref{eq:zd-pi-qi-repeat-number-prob-Omega12}) and
(\ref{eq:omega-Q-prob-bound}) that
\bals
\Prob{\Omega^{(1)}(Q) \cup \Omega^{(2)}(Q)} \le
\Prob{\Omega(Q)} <\frac{u}{2^{Q-1}}.
\eals
\end{remark}

\begin{lemma}\label{lemma:zd-pi-upper-bound}
Suppose $\bd \notin \Omega(Q)$ where $ \Omega(Q)$ is defined in (\ref{eq:omega-Q-def}) and $Q \in [2:u]$. Then for any $h \in \cH$, we have
\bal
\sum\limits_{i \in \cA_{h,2} } \pth{h(\bZ_{\bd}(p(i)))}^2 &\le (Q-1) \sum\limits_{i=1}^u \pth{h(\bZ_{\bd}(i))}^2. \label{eq:zd-pi-upper-bound}
\eal
\end{lemma}

\begin{lemma}
\label{lemma:Zdi-random}
Suppose $\bd \notin \Omega(Q)$ where $ \Omega(Q)$ is defined in
(\ref{eq:omega-Q-def}) and $Q \in [2:u]$. Then for any $h \in \cH$, we have
\bal
\Expect{}{\sum\limits_{i =1}^u \pth{h(\bZ_{\bd^{(i)}}(i))}^2 \longmid \bd} &\le \frac{nu}{m} T_n(h), \label{eq:Zdi-random-u-i} \\
 \Expect{}{\sum\limits_{i \in \cA_{h,1} } \pth{h(\bZ_{\bd^{(i)}}(q(i)))}^2
\longmid \bd  } &\le\frac{2(Q-1)nu}{m}T_n(h) \label{eq:Zdi-random-u-q}.
\eal
\end{lemma}

\begin{lemma}
\label{lemma:concentration-square-func-class-tu}
Let $\cH'$ be a class of functions with ranges in $[0,H']$, and $t_u, \bar t_u$ are defined in (\ref{eq:tdu}) and (\ref{eq:def-bar-tdu}).
Suppose $\sup_{h \in \cH'} \cL_n(h) \le r$ for $r >\ 0$. Then
\bal\label{eq:concentration-square-func-class-tu}
\log \Expect{\bd}{\exp\pth{\lambda\pth{\bar t_u(\bd,\cH') - \Expect{\bd}{\bar t_u(\bd,\cH')}}}} &\le \frac{QH'\lambda^2 \pth{ \Expect{\bd}{t_u(\bd,\cH')} + r}}{u- QH'\lambda} \eal
holds for all $\lambda \in (0,u/(QH'))$, where the surrogate process
$\bar t_u$ is defined in (\ref{eq:def-bar-tdu}).
\end{lemma}

\subsection{Key Idea in the Proof of Theorem~\ref{theorem:concentration-tildeg-u-greater-m}}
\label{sec:proof-concentration-test-train-process-u-greater-m}
The proof of Theorem~\ref{theorem:concentration-tildeg-u-greater-m} needs sampling $m$ elements from the full sample $X_n$ uniformly without replacement as the training features. To this end, let $\tbd = \bth{\td_1,\ldots,\td_m} \in \NN^m$ be a random vector, and
$\set{\td_i}_{i=1}^m$ are $m$ independent random variables such that $\td_i$ takes values in $[i:n]$ uniformly at random. If we invoke function $\textup{RANDPERM}$ in Algorithm~\ref{alg:randperm} with input changed from $u$ to $m$ and
the vector of independent random variables changed from $\bd$ to $\tbd$, then $\bZ_{\tbd} = \textup{RANDPERM}(m)$ are the first $m$ elements of a uniformly distributed permutation of $[n]$. We use $\overline{\bZ_{\tbd}} = [n] \setminus \set{\bZ_{\tbd}}$ to denote the indices not in $\set{\bZ_{\tbd}}$.

Theorem~\ref{theorem:concentration-tildeg-u-greater-m} states the concentration inequality for the supremum of the test-train process $g(\bd)$ when $u \ge m$. Many technical results in the proof of
Theorem~\ref{theorem:concentration-g-m-greater-u} are adapted to the proof of Theorem~\ref{theorem:concentration-tildeg-u-greater-m}. The major difference is that we consider a different test-train process $\tg$ for a technical reason. $\tg$ is expressed as
\bal\label{eq:test-train-g-td}
\tg(\tbd) = \sup_{h \in \cH}
\pth{\cL_h(\overline{\set{\bZ_{\tbd}}}) -\cL_h(\set{\bZ_{\tbd}})},
\eal
where the training features are $\set{\bbx_{\bZ_{\tbd}(i)}}_{i=1}^m$ and the test features are $\set{\bbx_i}_{i \in \overline{\set{\bZ_{\tbd}}}}$.
It is noted that $\tg$ can be viewed as a function of a random subset of size $u$, $\overline{\set{\bZ_{\tbd}}}$, sampled uniformly from $[n]$ without replacement.
$\tg$ is used in the proof of Theorem~\ref{theorem:concentration-tildeg-u-greater-m} to handle the case that $u \ge m$ due to the important fact that $\tg  \overset{\textup{dist}} = g$, where both sides have the same distribution over subset of size $u$ sampled uniformly from $[n]$ without replacement. To see this, we first note that similar to $\set{\bZ_{\bd}}$ introduced in
Section~\ref{sec:roadmap-proofs-sample-alg}, $\set{{\bZ_{\tbd}}}$ is a random set of size $m$ sampled uniformly from $[n]$ without replacement.
As a result, $\overline{\set{\bZ_{\tbd}}}$ and $\set{\bZ_{\bd}}$ are both subsets of size $u$ sampled uniformly from $[n]$ without replacement, so that $\tg  \overset{\textup{dist}} = g$ holds. This indicates that if
some concentration inequality about $\tg$ happens with certain probability over $\set{\bZ_{\tbd}}$, then the same concentration inequality about $g$ happens with the same probability over $\set{\bZ_{\bd}}$. To this end, we can study the concentration inequality for $g$ through the lens of $\tg$ and $\set{\bZ_{\tbd}}$, which is the main strategy for the proof of
Theorem~\ref{theorem:concentration-tildeg-u-greater-m} in the appendix.

\subsection{Proofs of Theorem~\ref{theorem:TLC} and Theorem~\ref{theorem:TLC-nonnegative-func-class}}
\label{sec:proofs-theorem-TLC-TLC-nonnegative-func-class}

Below are the definitions and lemmas useful for the proof of Theorem~\ref{theorem:TLC}.

 For $r > 0$, define the function class
\bal\label{eq:Gr-def}
\cH^{(r)} = \set{\frac{r}{w(h)} h \colon h \in \cH},
\eal
where $w(h) \defeq \min\set{r \lambda^k \colon k \ge 0, r\lambda^k \ge \tT_n(h)}$ with $\lambda > 1$.

Define the supremum of the empirical process
\bal\label{eq:Ur+}
U_r^+ \defeq \sup_{s \in \cH^{(r)}} \pth{ \cL_u(s)
-\cL_m(s) },
\eal
which can be viewed as the test-train process restricted on the
function class $\cH^{(r)}$, we then have the following lemma.

\begin{lemma}\label{lemma:TLC-supp-lemma}
Fix $\lambda > 1$, $K_0 > 1$, and $r > 0$. If $U_r^+ \le \frac{r}{\lambda K_0 }$, then
\bal\label{eq:TLC-supp-lemma}
\cL_u(h) \le \cL_m(h)+ \frac{r}{\lambda K_0 } + \frac{\tT_n(h)}{K_0}, \quad \forall h \in \cH.
\eal
\end{lemma}
\begin{proof}
If $\tT_n(h) \le r$, then $w(h) = r$ and $s = \frac{r}{w(h)} h = h$. Therefore,
$U_r^+ \le \frac{r}{\lambda K_0} \Rightarrow \cL_u(s)-\cL_m(s) \le \frac{r}{\lambda K_0}$ and (\ref{eq:TLC-supp-lemma}) holds since $\tT_n(h) \ge 0$ for all $h \in \cH$.

If $\tT_n(h) > r$, then $w(h) = r\lambda^k$ with $\tT_n(h) \in (r\lambda^{k-1},r\lambda^k]$.
Again, it follows from $U_r^+ \le \frac{r}{\lambda K_0}$ that
\bals
\cL_u(s)-\cL_m(s)  \le \frac{r}{\lambda K_0}, s = \frac{h}{\lambda^k},
\eals
and we have
\bals
\cL_u(h)-\cL_m(h)  \le \frac{r\lambda^{k-1}}{K_0}
\le \frac{\tT_n(h)}{K_0},
\eals
and (\ref{eq:TLC-supp-lemma}) still holds.
%It follows from the above inequality that
%\bals
%\cL_u(h) \le \frac{1+m/n}{1-u/n}\cL_h(\overline{\set{\bZ_{\bd}}})=\pth{1+\frac{n}{m}}\cL_h(\overline{\set{\bZ_{\bd}}}) %\le \pth{1+\frac 1C}\cL_h(\overline{\set{\bZ_{\bd}}}),
%\eals
%which indicates (\ref{eq:TLC-supp-lemma}).
\end{proof}

\begin{proof}[\textbf{\textup{Proof of Theorem~\ref{theorem:TLC}}}]
We first show that every $s \in \cH^{(r)}$ satisfies $T_n(s) \le r$.

Let $r$ be chosen such that $r \ge r_{u,m}$. For any $h \in \cH$,
let $s=\frac{r}{w(h)} h \in \cH^{(r)}$, then we have $T_n(s) \le r$. To see this, if $\tT_n(h) \le r$, then $w(h) = r$ and $s = h$, so $T_n(s) = \tT_n(h) \le r$. Otherwise, if $\tT_n(h)> r$, then $s = \frac{h}{\lambda^k}$ where $k$ is such that $\tT_n(h) \in (r\lambda^{k-1}, r\lambda^{k}]$. It follows that
$T_n(s) = \frac{T_n(h)}{\lambda^{2k}} \le\frac{\tT_n(h)}{\lambda^{2k}} \le \frac{ r\lambda^{k}}{\lambda^{2k}} \le r$.
It follows that $T_n(s) \le r$ for all $s \in \cH^{(r)}$.

We then prove (\ref{eq:TLC-bound-g-upper-bound}). Let $p \defeq \min\set{u,m}$. It follows by applying Lemma~\ref{lemma:td-gd-RC} in the appendix  to the function class
$\cH^{(r)}$ that
\bal\label{eq:TLC-U+-bound}
\Expect{\bd}{U_r^+} \le \cfrakR^+_u(\cH^{(r)}) + \cfrakR^-_m(\cH^{(r)}).
\eal
Applying (\ref{eq:concentration-gd-TLC}) in
Theorem~\ref{theorem:main-inequality-TLC} with $\alpha = 1$ to the function class $\cH^{(r)}$, then with probability at least $1-\exp(-x)-\delta$,
\bal\label{eq:TLC-seg-U+}
U_r^+ \le  \Expect{\bd}{U_r^+} +
2\sqrt{2} \cfrakR^+_{p}(\cH^{(r)}_1 ) +
4\sqrt{\frac{10rx}
{N_{u,m,\delta}}}
+  \frac {(4 H_0^2+2\sqrt{2})x}{N_{u,m,\delta}},
\eal
where $\cH^{(r)}_1 \defeq \set{s^2 \colon s \in \cH^{(r)}}$.

Define the function class $\cH(x,y) \defeq \set{h \in \cH \colon x \le \tT_n(h) \le y}$. Let $T$ be the smallest integer such that $r \lambda^{T+1} \ge T_0 \defeq \sup_{h \in \cH} \tT_n(h)$. If $T_0 = \infty$, we then set $T = \infty$. We have
\bal\label{eq:TLC-Ru+-seg1}
\cfrakR^+_u(\cH^{(r)}) &\le \Expect{}{\sup_{h \in \cH(0,r)} R^+_{u} h} +
\Expect{}{\sup_{h\in \cH(r,T_{0})} \frac{r}{w(h)}R^+_{u} h} \nonumber \\
&\le \Expect{}{\sup_{h \in \cH(0,r)} R^+_{u} h} +
 \sum_{t=0}^{T} \Expect{}{\sup_{h \in \cH(r \lambda^t,r \lambda^{t+1})} \frac{r}{w(h)}R^+_{u} h} \nonumber \\
&\stackrel{\circled{1}}{\le} \psi_{u,m}(r) + \sum_{t=0}^{T} \lambda^{-t} \psi_{u,m}(r \lambda^{t+1}) \stackrel{\circled{2}}{\le}
\psi_{u,m}(r)  \pth{1+\lambda^{1/2} \sum_{t=0}^{T} \lambda^{-t/2}}.
\eal
Here $\circled{1}$ is due to $w(h) \ge r \lambda^{t}$ and
$\Expect{}{\sup_{h \colon \tT_n(h) \le r \lambda^{t+1} } R^+_{p} h} \le \psi_{u,m}( r \lambda^{t+1})$. $\circled{2}$ is due to the fact that the sub-root function $\psi_{u,m}$ satisfies $\psi_{u,m}(\alpha r) \le {\sqrt \alpha} \psi_{u,m}(r)$ for $\alpha > 1$, so that
$\psi_{u,m}(r \lambda^{t+1}) \le \lambda^{(t+1)/2} \psi_{u,m}(r)$.

Setting $\lambda = 4$ on the RHS of (\ref{eq:TLC-Ru+-seg1}), we have
\bal\label{eq:TLC-Ru+-seg2}
\cfrakR^+_u(\cH^{(r)}) \le5\psi_{u,m}(r) \le 5\sqrt{r r_{u,m}}.
\eal
The last inequality follows from $\psi_{u,m}(r) \le \sqrt{\frac{r}{r_{u,m}}}\psi_{u,m}(r_{u,m}) = \sqrt{rr^+_{u,m}}$ because $r \ge r_{u,m}$.

Similarly, we have
\bal\label{eq:TLC-Rm--seg1}
\cfrakR^-_m(\cH^{(r)}) \le 5\psi_{u,m}(r) \le 5\sqrt{r r_{u,m}}.
\eal
Following an argument similar to (\ref{eq:TLC-Ru+-seg1}), we have
\bal\label{eq:TLC-Ru+-seg3}
\cfrakR^+_p(\cH^{(r)}_1) &\le \Expect{}{\sup_{h \in \cH(0,r)} R^+_{p} h^2} +
 \sum_{t=0}^{T} \Expect{}{\sup_{h \in \cH(r \lambda^t,r \lambda^{t+1})} \frac{r^2}{w(h)^2}R^+_{p} h^2} \nonumber \\
&\stackrel{\circled{3}}{\le} \psi_{u,m}(r) + \sum_{t=0}^{T} \lambda^{-2t} \psi_{u,m}(r \lambda^{t+1}) \stackrel{\circled{4}}{\le}
\psi_{u,m}(r)  \pth{1+\lambda^{1/2} \sum_{t=0}^{T} \lambda^{-3t/2}}.
\eal
Here $\circled{3}$ is due to $w(h) \ge r \lambda^{t}$ and
$\Expect{}{\sup_{h \colon \tT_n(h) \le r \lambda^{t+1} } R^+_{p} h^2} \le \psi_{u,m}( r \lambda^{t+1})$. $\circled{4}$ is due to $\psi_{u,m}(r \lambda^{t+1}) \le \lambda^{(t+1)/2} \psi_{u,m}(r)$.

Again, setting $\lambda = 4$ on the RHS of (\ref{eq:TLC-Ru+-seg3}), we have
\bal\label{eq:TLC-Ru+-seg4}
\cfrakR^+_p(\cH^{(r)}_1) &\le \frac {23}7\psi_{u,m}(r) \le \frac {23}7\sqrt{r r_{u,m}}.
\eal

It follows from (\ref{eq:TLC-U+-bound}), (\ref{eq:TLC-seg-U+}),
(\ref{eq:TLC-Ru+-seg2}), (\ref{eq:TLC-Rm--seg1}), and (\ref{eq:TLC-Ru+-seg4})  that
\bal\label{eq:TLC-seg3}
U_r^+ &\le \frac {70+46 {\sqrt 2}}7   \sqrt{r r_{u,m}} +
4\sqrt{\frac{10rx}
{N_{u,m,\delta}}}
+  \frac {(4 H_0^2+2\sqrt{2})x}{N_{u,m,\delta}} \nonumber \\
&\le \sqrt{r} \pth{c'_0\sqrt{r_{u,m}}+c'_1\sqrt{\frac{x}{N_{u,m,\delta}}}}+\frac {c'_2x}{N_{u,m,\delta}} \defeq P(r),
\eal
where the constants $c'_0 = \frac {70+46 {\sqrt 2}}7$,
$c'_1 = 4 {\sqrt {10}}$, $c'_2 = 4 H_0^2+2\sqrt{2}$.

Let $r_0$ be the largest solution to $P(r)= \frac{r}{\lambda K_0}$ for the fixed $K_0 > 1$. We have
\bals%\label{eq:TLC-solution-r}
r_0 \le r_1 \defeq 2\lambda K_0 \pth{\lambda K_0
\pth{c'_0\sqrt{r_{u,m}}+c'_1\sqrt{\frac{x}{N_{u,m,\delta}}}}^2
+ \frac {c'_2 x}{N_{u,m,\delta}}} .
\eals
Then $r_1 \ge  r_{u,m}$. Setting $r = r_1$ in (\ref{eq:TLC-seg3}), we have
\bals
U_{r_1}^+ \le \frac{r_1}{\lambda K_0 } =
2\lambda K_0
\pth{c'_0\sqrt{r_{u,m}}+c'_1\sqrt{\frac{x}{N_{u,m,\delta}}}}^2
+ \frac {2c'_2 x}{N_{u,m,\delta}}.
\eals
It follows from Lemma~\ref{lemma:TLC-supp-lemma} that
\bal\label{eq:TLC-g-seg1}
\cL_u(h) &\le \cL_m(h) + \frac{r_1}{\lambda K_0}
+ \frac{\tT_n(h)}{K_0} \nonumber \\
&=\cL_m(h)+ \frac{\tT_n(h)}{K_0}+
2\lambda K_0
\pth{c'_0\sqrt{r_{u,m}}+c'_1\sqrt{\frac{x}{N_{u,m,\delta}}}}^2
+ \frac {2c'_2 x}{N_{u,m,\delta}} \nonumber \\
&\le \cL_m(h) + \frac{\tT_n(h)}{K_0}+ c_0 r_{u,m}+ \frac{ c_1 x}{N_{u,m,\delta}}, \quad \forall h \in \cH,
\eal
where $c_0 = 4\lambda K_0 {c'_0}^2$, $ c_1 = 4\lambda K_0 {c'_1}^2 + 2c'_2$,
$\lambda= 4$, and the last inequality  follows from the Cauchy-Schwarz inequality.
(\ref{eq:TLC-bound-g-upper-bound}) is then proved by (\ref{eq:TLC-g-seg1}).

% We note that (\ref{eq:TLC-g-seg1}) is derived using
% $\set{\bbx_{\bZ(i)}}_{i=1}^u$ as the test features and
% $\set{\bbx_{i}}_{i\in \overline{\set{\bZ}}}$ as the training features.
% By the symmetry between the test and training features, we can define
% a new test-train process $\sup_{h \in \cH} (\cL_m(h) - \cL_u(h))$ where
% $\set{\bbx_{i}}_{i\in \overline{\set{\bZ}}}$ is the test features and
% $\set{\bbx_{\bZ(i)}}_{i=1}^u$ is the training features.
% By repeating the proof of Theorem~\ref{theorem:main-inequality-TLC},
% we can obtain the same bound for this new test-train process, that is, with probability at least $1-\exp(-x)-\delta$,
% \bal\label{eq:TLC-new-g}
% \sup_{h \in \cH} (\cL_m(h) - \cL_u(h)) &\le \Expect{}{\sup_{h \in \cH} (\cL_m(h) - \cL_u(h))}  +4\sqrt{\frac{10rx}
% {N_{u,m,\delta}}} \nonumber \\
% &+2\sqrt{2} \inf_{\alpha > 0}
% \pth{\frac{\cfrakR^+_{p}(\cH^2) }{\alpha} +
% \frac{\alpha x}{N_{u,m,\delta}}}
% +  \frac {4 H_0^2x}{N_{u,m,\delta}}.
% \eal
% Then (\ref{eq:TLC-cond-g-lower-bound}) can be proved
% by applying the above proof on this new process
% $\sup_{h \in \cH} (\cL_m(h) - \cL_u(h))$ using
% (\ref{eq:TLC-new-g}).

\end{proof}

\begin{proof}[\textbf{\textup{Proof of Theorem~\ref{theorem:TLC-nonnegative-func-class}}}]
Theorem~\ref{theorem:TLC-nonnegative-func-class} is proved using an argument similar to that in the proof of Theorem~\ref{theorem:TLC}.
Let $p \defeq \min\set{u,m}$.
First, we define the following function class:
\bals
\tilde \cH^{(r)} = \set{\frac{r}{\tilde w(h)} h \colon h \in \cH},
\eals
where $\tilde w(h) \defeq \min\set{r \lambda^k \colon k \ge 0, r\lambda^k \ge \cL_n(h)}$ with $\lambda > 1$. We also define
\bals
\tilde U_r^+ \defeq \sup_{s \in \tilde \cH^{(r)}} \pth{ \cL_u(s)
-\cL_m(s) }.
\eals
Similar to Lemma~\ref{lemma:TLC-supp-lemma}, for fix $\lambda > 1$, $K_0 > 1$, and $r > 0$, if $\tilde U_r^+ \le \frac{r}{\lambda K_0 }$, then it
can be verified that
\bal\label{eq:TLC-supp-lemma-nonnegative-func-class}
\cL_u(h) \le \cL_m(h)+ \frac{r}{\lambda K_0 } +
\frac{\cL_n(h)}{K_0}, \quad \forall h \in \cH.
\eal
It can be verified that $\cL_n(h) \le r$ for all $h \in \tilde \cH^{(r)} $.
Applying (\ref{eq:concentration-gd-TLC-nonnegative-func-class})
in Theorem~\ref{theorem:main-inequality-TLC-nonnegative-func-class}
with $\alpha = 1$ to the function class $\tilde \cH^{(r)}$, then with probability at least $1-\exp(-x)-\delta$,
\bal
\label{eq:TLC-seg-U+-nonnegative-func-class}
\tilde U_r^+ &\le \Expect{}{\tilde U_r^+}  +2\sqrt{\frac{H_0 rx}
{N_{u,m,\delta}}} +
\cfrakR^+_{p}(\tilde \cH^{(r)})  +
\frac {5 H_0 x}{N_{u,m,\delta}} \nonumber \\
&\le \cfrakR^+_{u}(\tilde \cH^{(r)})+\cfrakR^-_{m}(\tilde \cH^{(r)})+2\sqrt{\frac{H_0 rx}
{N_{u,m,\delta}}} +
\cfrakR^+_{p}(\tilde \cH^{(r)})  +
\frac {5 H_0 x}{N_{u,m,\delta}},
\eal
where the last equality is due to
$\Expect{}{\tilde U_r^+} \le \cfrakR^+_{u}(\tilde \cH^{(r)})+\cfrakR^-_{m}(\tilde \cH^{(r)})$ which follows from Lemma~\ref{lemma:td-gd-RC} in the appendix.
%Lemma~\ref{lemma:td-gd-RC}
Similar to (\ref{eq:TLC-Ru+-seg2}), we have
\bals
\max\set{\cfrakR^+_{u}(\tilde \cH^{(r)}),\cfrakR^-_{m}(\tilde \cH^{(r)}),\cfrakR^+_p(\tilde \cH^{(r)}) } &\le  5\sqrt{r r_{u,m}}.
\eals
We then apply a similar argument in
the proof of Theorem~\ref{theorem:TLC} to
(\ref{eq:TLC-seg-U+-nonnegative-func-class}), and we have
\bal\label{eq:TLC-g-nonnegative-func-class-seg1}
\cL_u(h) \le \cL_m(h) + \frac{\cL_n(h)}{K_0}+ C'_0 r_{u,m}+ \frac{ C'_1 x}{N_{u,m,\delta}}, \quad \forall h \in \cH,
\eal
where $C'_0,C'_1$ are absolute positive constants depending on $K_0 $, and $C'_1$ also depends on $H_0$.
We note that $\cL_n(h) = m/n \cdot \cL_m(h) + u/n \cdot \cL_u(h)$, then with
$K_0 > 1$,
(\ref{eq:TLC-bound-g-upper-bound-nonnegative-func-class})
 follows from (\ref{eq:TLC-g-nonnegative-func-class-seg1})
 with $c'_0 = C'_0 K_0/(K_0-1)$, $c'_1 = C'_1 K_0/(K_0-1)$.
\end{proof}

\section{Applications}
\label{sec:applications}
In this section, we apply TLC-based bounds in Section~\ref{sec:detailed-results} to realizable transductive learning over binary-valued function classes of finite VC-dimension in
Section~\ref{sec:transductive-learning-finite-VC-dim}, and TKL in Section~\ref{sec:TKL}. Realizable transductive learning is formally defined in Theorem~\ref{theorem:TLC-delta-ell-f-excess-risk-upper-bound-VC-dim} where there exists a prediction function that renders the ground truth labels for all the points in the full sample $\bX_n$. The proofs for the results of this section are deferred to the appendix.
\subsection{Transductive Learning Over Binary-Valued Function Classes}
\label{sec:transductive-learning-finite-VC-dim}
In this section, we provide the nearly optimal excess risk bound for transductive learning over a binary-valued function class $\cH$ with finite VC-dimension using
Theorem~\ref{theorem:TLC-nonnegative-func-class}.
\begin{theorem}
[Nearly minimax optimal upper bound for realizable transductive learning with finite VC dimension]
\label{theorem:TLC-delta-ell-f-excess-risk-upper-bound-VC-dim}
Suppose $\cF$ is a class of $\set{0,1}$-valued functions with VC-dimension
$2 \le \dVC < \infty$, $\ell_f(i) = (f(i)-y_i)^2 = \indict{f(i) \neq y_i}$ with $y_i \in \cY = \set{0,1}$ for all $i \in [n]$, and $u \ge m \ge \dVC$. Suppose there exists $f^* \in \cF$ such that $f^*(i) = y_i$ for all $i \in [n]$, and such a setup is termed realizable transductive learning.
Then for every $x >0$ and every $\delta \in (0,1)$, with probability at least $1 - \exp(-x)-\delta$ over $\set{\bZ}$,
\bal\label{eq:TLC-delta-ell-f-excess-risk-upper-bound-VC-dim}
\cL_u(\ell_{\hat f_m}) \le  c''_0 \frac{\dVC \log   (me/\dVC)}{m}
+ c'_1\frac{\pth{\log_2 (4m/\delta)}x}{m},
\eal
where $c''_0,c'_1$ are absolute positive constants.
\end{theorem}
%\begin{remark}
We note that for realizable transductive learning, we have $0 \le \cL_m(\ell_{\hat f_m})  \le \cL_m(\ell_{f^*}) =0$,
so that
$\cL_m(\ell_{\hat f_m})  = 0$ and the excess risk of $\hat f_m$ is $\cE(\hat f_{m}) = \cL_u(\ell_{\hat f_m})$. In fact, the excess risk of
any estimator $\hat f$ is $\cE(\hat f) =  \cL_u(\hat f)$.
When $\delta = 4/m$, it follows from (\ref{eq:TLC-delta-ell-f-excess-risk-upper-bound-VC-dim}) that
\bal\label{eq:TLC-delta-ell-f-excess-risk-upper-bound-VC-dim-concrete}
\cL_u(\ell_{\hat f_m}) \le  \Theta\pth{\frac{\dVC \log   (me/\dVC)}{m}}
\eal
holds with probability at least $1-4/m-p(m,\dVC)$
where $p(m,\dVC) \defeq \exp\big(-\dVC /(2\log \dVC) \big)$. Since
$p(m,\dVC) \to 0$ as $m \ge \dVC \to \infty$, (\ref{eq:TLC-delta-ell-f-excess-risk-upper-bound-VC-dim-concrete}) holds with high probability with large $\dVC$.

It is shown in \citep[Theorem 3]{tolstikhin2016minimaxlowerboundsrealizable} that under the conditions of Theorem~\ref{theorem:TLC-delta-ell-f-excess-risk-upper-bound-VC-dim},
the transductive learning admits the following minimax lower bound for the excess risk, $\inf_{\hat f} \sup_{\bS_{m+u}} \cL_u(\ell_{\hat f}) \newline \ge (\dVC-1)/(16m)$, where the infimum is taken over all transductive learning estimators $\hat f$ learned from $\bS_m \bigcup \bX_u$, and the supremum is taken over all
possible labeled full sample $\bS_{m+u} = \set{(\bbx_i, y_i)}_{i=1}^{m+u}$ which is realizable.
%\bal\label{eq:transductive-minimax-lower-bound}
%\inf_{\hat f} \sup_{\bZ_n} \cL_u(\ell_{\hat f}) \ge \frac{\dVC-1}{16m}.
%\eal
It has been a central problem in transductive learning finding the excess risk bound matching or nearly matching the minimax lower bound. \citep[Theorem 7]{tolstikhin2016minimaxlowerboundsrealizable} gives an upper bound
$\cL_u(\ell_{\hat f_m}) \le 2\big(\dVC\log(ne/\dVC)+\log(1/\delta)\big)/m$ which holds with probability at least $1-\delta$. However, such an upper bound can be loose with arbitrary large $n$. To this end, \citep{tolstikhin2016minimaxlowerboundsrealizable} proposes the open problem whether such upper bound can be improved by improving the $\log n$ gap to $\log m$, which is for the first time solved by our Theorem~\ref{theorem:TLC-delta-ell-f-excess-risk-upper-bound-VC-dim}. Theorem~\ref{theorem:TLC-delta-ell-f-excess-risk-upper-bound-VC-dim} shows that the excess risk bound (\ref{eq:TLC-delta-ell-f-excess-risk-upper-bound-VC-dim-concrete}) holds with high probability, which matches the minimax lower bound with only a logarithmic gap of $\log m$ due to our TLC-based analysis. We remark that our excess risk bound (\ref{eq:TLC-delta-ell-f-excess-risk-upper-bound-VC-dim-concrete}) is as sharp as that under the inductive learning setup, where the inductive LRC-based bound~\citep[Corollary 3.7]{bartlett2005} for learning a binary-valued function class of VC-dimension $\dVC$ is of the order $\Theta(\dVC\log(m/\dVC)/m)$, where $m$ is the size of the training data.
%\end{remark}

\subsection{TLC Excess Risk Bound for Transductive Kernel Learning}
\label{sec:TKL}
\subsubsection{Background in RKHS and Kernel Learning}

We denote by $\cH_{K}$ the Reproducing Kernel Hilbert Space (RKHS) associated with
$K$, where $K \colon \cX \times \cX \to \RR$ is a positive definite kernel defined on the compact set $\cX \times \cX$ with $\cX \subseteq \RR^d$
and $\max_{\bx \in \cX} K(\bx,\bx) = \Theta(1) < \infty$. We denote by
\bals
\cH_{\bX_n} \defeq \overbar{\set{\sum\limits_{i=1}^n
K(\cdot,\bbx_i) \alpha_i \longmid \set{\alpha_i}_{i=1}^n
\subseteq \RR}}
\eals
the usual RKHS spanned by $\set{K(\cdot,\bbx_i)}_{i=1}^n$
on the full sample $\bX_n = \set{\bbx_i}_{i=1}^n$.

It is noted that \citep{El-Yaniv2009-TRC} considers the function class where the output of any function in that class on the full sample $\bX_n$ can be expressed by $\bK \balpha$ for some vector $\balpha \in \RR^n$. Formally,
\citep{El-Yaniv2009-TRC} considers the function class $\cH_{\bX_n}(\mu)$   for Transductive Kernel Learning (TKL), where $\cH_{\bX_n}(\mu)$ denotes a ball centered at $0$ with radius $\mu$ in the Hilbert Space $\cH_{\bX_n}$, that is,
\bal\label{eq:TLC-function-class-kernel-HXn-mu}
\cH_{\bX_n}(\mu) = \set{f \in \cH_{\bX_n} \colon f = \sum\limits_{i=1}^n K(\cdot,\bbx_i) \alpha_i, \balpha = \bth{\alpha_1,\ldots,\alpha_n}^{\top}, \balpha^{\top}\bK\balpha \le \mu^2}.
\eal
For any $f \in \cH_{\bX_n}(\mu)$, because $f \in \cH_K$ and $\norm{f}{\cH_K}^2 = \balpha^{\top}\bK\balpha \le \mu^2$, we have $\cH_{\bX_n}(\mu) \subseteq \cH_K(\mu)$, where $\cH_K(\mu) \defeq \set{f \in \cH_K \colon \norm{f}{\cH_K} \le \mu}$.  In TKL, the empirical minimizer $\hat f_{m}$ and the oracle predictor $\hat f_{u}$ are defined using the function class $\cF = \cH_{\bX_n}(\mu)$.

\subsubsection{Results}
\label{sec:TKL-results}
Our improved bound for the excess risk $\cE(\hat f_{m})$ for TKL is based on Assumption~\ref{assumption:main} (1) and Assumption~\ref{assumption:Lipschitz-loss-Tn-f-Ln-ellf}. Assumption~\ref{assumption:Lipschitz-loss-Tn-f-Ln-ellf} is presented as follows.
\begin{assumption}
\label{assumption:Lipschitz-loss-Tn-f-Ln-ellf}
\begin{itemize}[leftmargin=18pt]
\item[(1)] The loss function $\ell(\cdot,\cdot)$ is $L$-Lipschitz in its first argument, that is, $|\ell(f(\bx),y) - \ell(f(\bx'),y)|\le L \abth{f(\bx)-f(\bx')}$ for all $f \in \cF$.
\item[(2)] There is a constant $B'$ such that for any $f \in \cF$, $T_n\pth{f - f^*_n} \le B' \cL_n\pth{\ell_f - \ell_{f^*_n}}$.
\end{itemize}
\end{assumption}
It can be verified that Assumption~\ref{assumption:Lipschitz-loss-Tn-f-Ln-ellf} implies Assumption~\ref{assumption:main} (2), so that the former is stronger than the latter. We will present our new excess risk bound using Assumption~\ref{assumption:main} (1) and  Assumption~\ref{assumption:Lipschitz-loss-Tn-f-Ln-ellf}, which are exactly the same assumptions adopted by
\citep{TolstikhinBK14-local-complexity-TRC} for the excess risk bound for TKL. The following theorem, Theorem~\ref{theorem:TLC-kernel} as our main result for TKL, is obtained by applying Theorem~\ref{theorem:TLC}, Lemma~\ref{lemma:TLC-delta-ell-f}, Corollary~\ref{corollary:TLC-delta-ell-f-ind-rc}, and Theorem~\ref{theorem:TLC-delta-star-ell-f}. In particular,
the excess risk bound for TKL in (\ref{eq:TLC-kernel-excess-loss}) of Theorem~\ref{theorem:TLC-kernel} is proved by the same
strategy as that for the generic excess risk bound (\ref{eq:TLC-ell-f-excess-risk-upper-bound}) in Theorem~\ref{theorem:TLC-delta-ell-f-excess-risk-upper-bound}, so that (\ref{eq:TLC-kernel-excess-loss}) is an instantiation of the TLC-based excess risk bound (\ref{eq:TLC-ell-f-excess-risk-upper-bound}) for TKL.
\begin{theorem}\label{theorem:TLC-kernel}
Suppose that Assumption~\ref{assumption:main} (1) and  Assumption~\ref{assumption:Lipschitz-loss-Tn-f-Ln-ellf} hold with $\cF = \cH_{\bX_n}(\mu)$, and $K$ is a  positive definite kernel on $\cX \times \cX$. Suppose that for all $f \in \cH_{\bX_n}(\mu)$, $0 \le \ell_f(i) \le L_0$ for all $i \in [n]$, $L_0 \ge 2\sqrt 2$, and $K_0 > 1$ is an arbitrary constant.
%Then for every $x > 0$ and every $\delta \in (0,1)$, with probability at least $1-\exp(-x)-\delta$ over $\set{\bZ}$,
%\bal\label{eq:TLC-kernel-general-loss}
%\cL_u(h) &\le \cL_m(h) + \frac {2L^2B'\pth{\cL_n(\ell_{f_1} - \ell_{f^*_n}) + \cL_n(\ell_{f_2} - \ell_{f^*_n})}}{K_0}  \nonumber \\
%&\phantom{=}+c_3\min_{0 \le Q \le n} r(u,m,Q) + \frac{c_1x}{N_{u,m,\delta}},
%\quad \forall h \in \Delta_{\cH_{\bX_n}(\mu)},
%\eal
%where $c_3$ is an absolute positive number depending on $c_0,B',L_0,L$, and
%\bals
%r(u,m,Q) \defeq Q \pth{\frac{1}{u} + \frac{1}{m}} + \sqrt{\frac{\sum\limits_{q = Q+1}^{n}\hat \lambda_q}{u}}
%+ \sqrt{\frac{\sum\limits_{q = Q+1}^{n}\hat \lambda_q}{m}}.
%\eals
%In particular, when $h = \hat f_{\bd,m} - \hat f_{\bd,u}$, with probability at least $1-3\exp(-x)-3\delta$ over $\set{\bZ}$,
Then for every $x > 0$ and every $\delta \in (0,1/3)$, with probability at least $1-3\exp(-x)-3\delta$ over $\set{\bZ}$,
we have the excess risk bound
\bal\label{eq:TLC-kernel-excess-loss}
&\cE(\hat f_{ m}) \le
c_5 \pth{\min_{0 \le Q \le n} r(u,m,Q) + \frac{\log_2 (4\min\set{u,m}/\delta) x}{\min\set{u,m}}},
\eal
where
\bals
r(u,m,Q) \defeq Q \pth{\frac{1}{u} + \frac{1}{m}} + \sqrt{\frac{\sum\limits_{q = Q+1}^{n}\hat \lambda_q}{u}}
+ \sqrt{\frac{\sum\limits_{q = Q+1}^{n}\hat \lambda_q}{m}},
\eals
$\hat \lambda_1 \ge \hat \lambda_2 \ldots \ge \hat \lambda_n \ge 0$ are the eigenvalues of $\bK/n \in \RR^{n \times n}$
with $\bK_{ij} = K(\bbx_i,\bbx_j)$ for all $i,j \in [n]$. $c_5$ is an absolute positive constant depending on $B',L_0,L,\mu$.
\end{theorem}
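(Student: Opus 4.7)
The plan is to invoke Theorem~\ref{theorem:TLC-delta-ell-f-excess-risk} with $\cF = \cH_{\bX_n}(\mu)$ and then specialize the abstract fixed points $r_u$, $r_m$, $r^*$ to the RKHS setting by exploiting the eigenstructure of $\bK_n$. Concretely, I would (i) produce sub-root upper bounds $\psi_u$, $\psi_m$ on the localized Transductive Complexities appearing in (\ref{eq:TLC-cond-u-delta-ell-f})--(\ref{eq:TLC-cond-m-delta-ell-f}), (ii) compute their fixed points in closed form using the eigenvalues $\hat\lambda_1,\dots,\hat\lambda_n$ of $\bK_n$, and (iii) combine via Theorem~\ref{theorem:TLC-delta-ell-f-excess-risk} to obtain (\ref{eq:TLC-kernel-excess-loss}).

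\textbf{Passage from TC to kernel eigenvalues.} For $h=\ell_{f_1}-\ell_{f_2}\in\Delta_\cF$, the surrogate variance $\tT_n(h)$ in (\ref{eq:tTn-def}) dominates $2B'\bigl(T_n(f_1-f_n^*)+T_n(f_2-f_n^*)\bigr)$ by Assumption~\ref{assumption:Lipschitz-loss-Tn-f-Ln-ellf}(2), up to absorbing the Lipschitz constant $L$. Hence on the event $\{\tT_n(h)\le r\}$ we may represent $h$ as a difference of two RKHS functions each satisfying $T_n(f_i-f_n^*)\le r/(2B')$ and $\|f_i-f_n^*\|_{\cH}\le 2\mu$. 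I would then apply the symmetrization inequality (\ref{eq:TRC-RC}) of Theorem~\ref{theorem:TC-RC} to upper bound the Transductive Complexities $\E_{\bd}\sup R^{\pm}_{u,\bd}h$ and $\E_{\bd}\sup R^{\pm}_{m,\bd}h$ by $2\cfrakR^{(\textup{ind})}_u$ and $2\cfrakR^{(\textup{ind})}_m$, followed by the Ledoux--Talagrand contraction inequality to strip off $\ell$ and reduce to bounding the inductive Rademacher complexity of the localized RKHS ball $\{g\in\cH_{\bX_n}(2\mu):T_n(g)\le r/(2B')\}$. For this last object, the standard kernel LRC calculation (writing $g$ in the eigenbasis of $\bK_n$ and applying Cauchy--Schwarz) yields, for every $0\le Q\le n$,
\begin{align*}
\cfrakR^{(\textup{ind})}_N\bigl(\{g:T_n(g)\le r\}\bigr)\;\le\;\sqrt{\tfrac{2}{N}\sum_{q=1}^n\min(r,\hat\lambda_q)}\;\le\;\sqrt{\tfrac{2Qr}{N}+\tfrac{2}{N}\sum_{q=Q+1}^n\hat\lambda_q}.
\end{align*}
The ``squared'' conditions $\E R^{\pm}_{*,\bd}h^2$ in (\ref{eq:TLC-cond-u-delta-ell-f})--(\ref{eq:TLC-cond-m-delta-ell-f}) are handled by another contraction step using $|h|\le 2L_0$, which costs only an extra factor of $4L_0$ and does not alter the form of the bound.

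\textbf{Fixed point and final assembly.} Taking $\psi_u(r)$ (respectively $\psi_m(r)$) to be the square-root expression displayed above with $N=u$ (respectively $N=m$), solving $\psi_u(r_u)=r_u$ yields $r_u=O\bigl(Q/u+\sqrt{u^{-1}\sum_{q>Q}\hat\lambda_q}\bigr)$, and analogously for $r_m$. An identical computation, applied to the quantity $r^*$ coming from Theorem~\ref{theorem:TLC-delta-star-ell-f} (invoked inside the proof of Theorem~\ref{theorem:TLC-delta-ell-f-excess-risk}), produces a bound of the same shape. Adding $r_u+r_m+r^*$ and the residual term $x/\min\{u,m\}$, then taking the infimum over $Q\in\{0,\dots,n\}$, gives the desired bound (\ref{eq:TLC-kernel-excess-loss}) with an absolute constant $c_5$ that absorbs $c_0,c_1,c_2,c_3,B',L_0,L$ and the constants $K,B,2B'$ arising in the contraction and Lipschitz steps.

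\textbf{Main obstacle.} The most delicate point will be verifying the sub-root conditions (\ref{eq:TLC-cond-u-delta-ell-f})--(\ref{eq:TLC-cond-m-delta-ell-f}) with the \emph{surrogate} variance $\tT_n$ rather than the original variance $T_n$: one must be careful that the eigen-expansion bound for localized Rademacher complexity, naturally phrased in terms of $T_n(f-f_n^*)$, can be upgraded to a bound in terms of $\tT_n(\ell_{f_1}-\ell_{f_2})$ without losing the $O(\sqrt{r/N})$ rate. This requires decomposing $h=(\ell_{f_1}-\ell_{f_n^*})-(\ell_{f_2}-\ell_{f_n^*})$ and applying the kernel LRC bound to each piece separately, then absorbing the factor of two into the constants. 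The squared-function conditions are then the second non-trivial check and must be handled by a careful contraction argument using $H_0\le 2L_0$.
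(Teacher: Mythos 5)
Your plan follows the paper's own proof essentially step for step: the paper likewise applies Theorem~\ref{theorem:TLC-delta-ell-f-excess-risk} (via Theorem~\ref{theorem:TLC-delta-ell-f}, Corollary~\ref{corollary:TLC-delta-ell-f}, and Theorem~\ref{theorem:TLC-delta-star-ell-f}), reduces the localized transductive complexities to inductive Rademacher complexities through Theorem~\ref{theorem:TC-RC} plus Ledoux--Talagrand contraction, uses the same split $h=(\ell_{f_1}-\ell_{f_n^*})-(\ell_{f_2}-\ell_{f_n^*})$ combined with symmetry/convexity of $\cH_{\bX_n}(\mu)$ to land on the kernel LRC bound of Lemma~\ref{lemma:TLC-kernel}, handles the squared class via a further contraction with a factor governed by $L_0$, and then solves the sub-root fixed-point equation for $r_u,r_m,r^*$. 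The only cosmetic divergence is that you quote the Mendelson bound in the $\sqrt{(2/N)\sum\min(r,\hat\lambda_q)}$ form rather than the paper's $\sqrt{rQ/N}+\mu\sqrt{\sum_{q>Q}\hat\lambda_q/N}$, but after optimizing over $Q$ and tracking $\mu$ these agree up to constants, so the argument is the same.
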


We now compare our excess risk bound (\ref{eq:TLC-kernel-excess-loss}) for TKL to the following excess risk bound obtained by the local complexity method for transductive learning
\citep{TolstikhinBK14-local-complexity-TRC}, which holds with high probability:
\bal\label{eq:local-complexity-excess-risk-TKL}
\cE(\hat f_{ m}) \le \Theta\pth{\frac nu r_m^*+ \frac nm r_u^*+\frac 1m +\frac 1u},
\eal
where
\bals
r_{s}^* \le \Theta\pth{\min_{0 \le Q \le s}\pth{ \frac{Q}{s} +  \sqrt{\frac{\sum\limits_{q = Q+1}^{n}\hat \lambda_q}{s}} }}, s = u \textup{ or } m.
\eals
It is emphasized that both our excess risk bound  (\ref{eq:TLC-kernel-excess-loss}) and (\ref{eq:local-complexity-excess-risk-TKL}) in \citep{TolstikhinBK14-local-complexity-TRC} are derived under exactly the same assumptions, Assumption~\ref{assumption:main} (1) and  Assumption~\ref{assumption:Lipschitz-loss-Tn-f-Ln-ellf}. It can be observed that our bound (\ref{eq:TLC-kernel-excess-loss}) is free of the undesirable factors of $n/u$ and $n/m$ in (\ref{eq:local-complexity-excess-risk-TKL}). We now show that the RHS of (\ref{eq:local-complexity-excess-risk-TKL}) together with the bound for $r_{s}^*$ for $s = u,m$ can diverge under well-studied scenario in kernel learning. It is well known from standard results on population and empirical inductive Rademacher complexities \citep{bartlett2005,Mendelson02-geometric-kernel-machine,RaskuttiWY14-early-stopping-kernel-regression} that when the full sample $\bX_n$ is drawn uniformly from the unit sphere with $\cX = \unitsphere{d-1}$ and the kernel $K$ is a dot-product kernel which admits a polynomial Eigenvalue Decay Rate (EDR) $\lambda_j \asymp j^{-2\alpha}$ for $\alpha > 1/2$ under the spherical uniform distribution on $\cX$, then
\bals
\min_{0 \le Q \le n}\pth{ \frac{Q}{n} +  \sqrt{\frac{\sum\limits_{q = Q+1}^{n}\hat \lambda_q}{n}} } \asymp n^{-2\alpha/(2\alpha+1)}.
\eals
Such polynomial EDR happens for arccosine  kernel, such as $K(\bx,\bx') \asymp {\bx^{\top}}{\bx'}(\pi-\arccos({\bx^{\top}}{\bx'}))$ for all
$\bx,\bx' \in \cX = \unitsphere{d-1}$~\citep{BiettiM19,BiettiB21}, with $2\alpha = d/(d-1)$. In this well-studied case,
the RHS of (\ref{eq:local-complexity-excess-risk-TKL}) diverges with $u = o\pth{n^{1/(2\alpha+1)}}$ or $m = o\pth{n^{1/(2\alpha+1)}}$ when $u,m \to \infty$. On the other hand, our excess risk bound (\ref{eq:TLC-kernel-excess-loss}) always converges to $0$ as $u,m \to \infty$ because $r(u,m,Q) \le \Theta\pth{1/\sqrt{u} + 1/\sqrt{m}}$.

\section{Conclusion}
We present Transductive Local Complexity (TLC) to derive improved excess risk bounds for transductive learning. TLC is based on our new concentration inequality for the supremum of empirical processes for the gap between the test and the training loss in the setting of sampling uniformly without replacement. Using a peeling strategy with a new decomposition of the expectation of the test-train process and a new surrogate variance operator, sharper excess risk bound, compared to the current state-of-the-art, for generic transductive learning with bounded loss function is derived. As two key applications, we apply TLC to obtain a nearly optimal excess risk bound for transductive learning over binary-valued function classes of finite VC-dimension and a sharper excess risk bound for transductive kernel learning.

\newpage
%\begin{appendices}
\begin{appendix}
%\appendix

\section{Mathematical Tools}
\label{sec::math-tools}

We introduce the basic mathematical tools, such as
the entropy based concentration inequalities for empirical loss
of independent random variables~\citep{Boucheron2003-concentration-entropy-method}, which we used to develop the main results. Let $f \colon \cX^n \to \RR$ be a measurable function. We are concerned with concentration of the random variable $Z = f(X_1,X_2,\ldots, X_n)$. Let $X'_1, X'_2, \ldots, X'_n$ denote independent copies of $X_1, X_2, \ldots, X_n$, and recall that
\bals
Z^{(i)} = f(X_1,\ldots,X_{i-1},X'_i,X_{i+1},\ldots,X_n).
\eals
The upper variance for $Z$, $V_+(Z)$, is defined in (\ref{eq:V+-def}).
\begin{theorem}
{\rm (\citep[Theorem 5, Theorem 6]{Boucheron2003-concentration-entropy-method})}
\label{theorem:Boucheron2003-concentration}
Assume that there exist constants $a > 0$ and $b > 0$ such that
\bals
V_+(Z) \le aZ + b.
\eals
Then for any $\lambda \in (0,1/a)$,
\bal\label{eq:Boucheron2003-concentration-thm5-logE}
\log \Expect{}{\exp\pth{\lambda(Z-\Expect{}{Z})}}
\le \frac{\lambda^2}{1-a\lambda}\pth{a \Expect{}{Z} + b}.
\eal
Furthermore, for all $t > 0$,
%\bal\label{eq:Boucheron2003-concentration-thm5}
%\Prob{Z > \Expect{}{Z}+t} \le \exp\pth{\frac{-t^2}{4a \Expect{}{Z}+4b+2at}}.
%\eal
If
\bals
V_-(Z) \defeq \Expect{}{\sum\limits_{i=1}^n \pth{Z-Z^{(i)}}^2 \indict{Z<Z^{(i)}}
\mid X_1^n} \le v(Z)
\eals
holds for a nondecreasing function $v$, then for all $t > 0$,
\bals%\label{eq:Boucheron2003-concentration-thm6}
\Prob{Z < \Expect{}{Z}-t} \le \exp\pth{\frac{-t^2}{4 \Expect{}{v(Z)}}}.
\eals
\end{theorem}
%\begin{remark}
%While $a>0$ in the original Theorem 5 of \citep{Boucheron2003-concentration-entropy-method}, one can use the same proof of this theorem to show that (\ref{eq:Boucheron2003-concentration-thm5})
%holds for $a=0$ with $b > 0$, and in this case $\lambda \in (0,+\infty)$.
%\end{remark}

\begin{theorem}
({\citep[Theorem 2]{Boucheron2003-concentration-entropy-method}, the exponential version of the Efron-Stein inequality})
\label{theorem:Boucheron2003-concentration-thm2}
For all $\theta > 0$ and $\lambda \in (0,1/\theta)$,
\bal\label{eq:Boucheron2003-concentration-thm2}
\log \Expect{}{\exp\pth{\lambda\pth{Z - \Expect{}{Z}}}}
\le \frac{\lambda \theta}{1-\lambda \theta} \log\Expect{}{\exp\pth{\frac{\lambda V_+(Z)}{\theta}}}.
\eal
\end{theorem}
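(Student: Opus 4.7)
The plan is to prove the bound via the entropy method. Recall that $\Ent(Y) = \E[Y \log Y] - \E[Y]\log\E[Y]$ for nonnegative $Y$, and that the identity $\Ent(e^{\lambda Z}) = \lambda\E[Ze^{\lambda Z}] - \E[e^{\lambda Z}]\log\E[e^{\lambda Z}]$ links the entropy of $e^{\lambda Z}$ to the MGF of $Z$. The proof proceeds in three phases: a modified log-Sobolev inequality, an entropy-duality step, and a Herbst-type integration.

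First, I would apply the classical tensorization inequality $\Ent(F) \le \sum_i \E[\Ent^{(i)}(F)]$ to $F = e^{\lambda Z}$, where $\Ent^{(i)}$ denotes entropy under the conditional law of $X_i$ given the other variables. Each $\Ent^{(i)}(e^{\lambda Z})$ is then bounded by symmetrizing with the independent copy $X'_i$ (so that $Z^{(i)}$ enters naturally) and invoking the variational representation of entropy. The elementary inequality $(x-1)e^x + 1 \le x^2$ for $x \le 0$, applied with $x = -\lambda(Z - Z^{(i)})$ restricted to the event $\{Z \ge Z^{(i)}\}$, yields the modified log-Sobolev inequality $\Ent(e^{\lambda Z}) \le \lambda^2\, \E[V_+ e^{\lambda Z}]$.

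Next, I would invoke the Donsker--Varadhan duality $\E[UY] \le \Ent(Y) + \E[Y]\log\E[e^U]$ with $U = \lambda V_+/\theta$ and $Y = e^{\lambda Z}$ to re-express the right-hand side. Multiplying the resulting inequality by $\lambda\theta$ and combining with the modified log-Sobolev bound collects the $\Ent(e^{\lambda Z})$ terms on one side; the hypothesis $\lambda\theta < 1$ then allows division by $1-\lambda\theta$, producing
$$\Ent(e^{\lambda Z}) \le \frac{\lambda\theta}{1-\lambda\theta}\, \E[e^{\lambda Z}]\log\E[e^{\lambda V_+/\theta}].$$

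Finally, I would convert this entropy bound to the claimed MGF bound via a Herbst-type argument. Writing $\psi(\lambda) = \log\E[e^{\lambda Z}]$ and $A(\lambda) = \log\E[e^{\lambda V_+/\theta}]$, the previous display reads $\lambda\psi'(\lambda) - \psi(\lambda) \le \frac{\lambda\theta}{1-\lambda\theta}A(\lambda)$, equivalent to $(\psi/\lambda)'(\lambda) \le \frac{\theta A(\lambda)}{\lambda(1-\lambda\theta)}$. Integrating from $0$ to $\lambda$ with boundary value $\psi(\lambda)/\lambda \to \E Z$ as $\lambda \to 0^+$, then exploiting the log-convexity of the MGF (so that $A(s)/s$ is nondecreasing and $A(\lambda)/\lambda$ can be pulled out of the integrand), and closing with the elementary bound $-\log(1-x) \le x/(1-x)$ for $x \in (0,1)$, delivers $\psi(\lambda) - \lambda\E Z \le \frac{\lambda\theta}{1-\lambda\theta}A(\lambda)$, which is the claim. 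The main obstacle is the very first phase: establishing the modified log-Sobolev inequality with the exact constant $1$ and the asymmetric positive-part quantity $V_+$ (rather than a symmetric $\sum_i (Z - Z^{(i)})^2$); this requires sharp control of $\tau(x) = (x-1)e^x + 1$ on the negative axis, together with a carefully asymmetric symmetrization that retains only the coordinates where $Z \ge Z^{(i)}$.
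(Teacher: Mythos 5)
Your outline is correct and is essentially the original entropy-method proof of this result, which the paper itself does not prove but simply imports as \cite[Theorem 2]{Boucheron2003-concentration-entropy-method}: your first phase is exactly the symmetrized logarithmic Sobolev inequality the paper quotes separately as Proposition~\ref{proposition:logarithmic-sobolev-inequality} combined with the elementary quadratic bound on the negative axis, and the subsequent Donsker--Varadhan duality plus Herbst-type integration (using that $\log\E[e^{sV_+/\theta}]/s$ is nondecreasing and $-\log(1-x)\le x/(1-x)$) is the standard route to (\ref{eq:Boucheron2003-concentration-thm2}). The only cosmetic wobble is in the bookkeeping of the symmetrization term, where the function $\psi(x)=e^x-x-1$ evaluated at $-\lambda(Z-Z^{(i)})$ with prefactor $e^{\lambda Z}$ (equivalently your $\tau$ with prefactor $e^{\lambda Z^{(i)}}$ at positive argument) is what actually appears; either form yields $\Ent(e^{\lambda Z})\le \lambda^2\,\E\bth{V_+e^{\lambda Z}}$, so the argument goes through as you describe.
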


%\begin{proposition}
%\label{proposition:logarithmic-sobolev-inequality}
%{\rm (Logarithmic Sobolev inequality in \citep[Proposition 10]{Boucheron2003-concentration-entropy-method}),
%which is a variant proposed by \citep{Massart2000-Talagrand-empirical-process})}
%For all $\lambda \in \RR$,
%\bal\label{eq:logarithmic-sobolev-inequality-1}
%\lambda \Expect{}{Z \exp(\lambda Z)} &- \Expect{}{ \exp(\lambda Z)} \log{\Expect{}{Z \exp(\lambda Z)}} \nonumber \\
%&\le \sum\limits_{i=1}^n \Expect{}{\exp(\lambda Z)
%\psi\pth{-\lambda (Z-Z^{(i)})} \indict{Z > Z^{(i)}}}.
%\eal
%\end{proposition}

\begin{theorem}[Contraction property of inductive Rademacher complexity \citep{LedouxTal91-probability}]
\label{theorem:contraction-RC}
Suppose $g$ is a Lipschitz continuous with $\abth{g(x) - g(y)} \le L \abth{x-y}$. Then
\bals
\cfrakR^{(\textup{ind})}_u(g \circ \cH) \le L \cfrakR^{(\textup{ind})}_u(\cH),
\quad \cfrakR^{(\textup{ind})}_m(g \circ \cH) \le L \cfrakR^{(\textup{ind})}_m(\cH).
\eals
\end{theorem}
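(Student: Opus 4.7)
The plan is to follow the classical Ledoux--Talagrand contraction argument: reduce to a one-step (single-coordinate) contraction lemma and then iterate it across the $u$ (respectively $m$) Rademacher coordinates. Since $\bY^{(u)}$ and $\bY^{(m)}$ enter symmetrically, I will describe only the $u$-case; the $m$-case is identical after replacing $u$ by $m$ throughout.

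First, I would condition on $\bY^{(u)}$ so that $\bt(h) := (h(Y_1),\ldots,h(Y_u))$ may be regarded as the vector of coordinates indexing $h \in \cH$, and let $T := \set{\bt(h) \colon h \in \cH} \subseteq \RR^u$ denote the induced bounded subset. Under this conditioning, the target inequality reduces to
\bals
\Expect{\bsigma}{\sup_{\bt \in T}\, \frac 1u \sum_{i=1}^u \sigma_i g(t_i)} \;\le\; L\,\Expect{\bsigma}{\sup_{\bt \in T}\, \frac 1u \sum_{i=1}^u \sigma_i t_i}.
\eals
Taking the outer expectation over $\bY^{(u)}$ on both sides then yields the claimed bound on $\cfrakR^{(\textup{ind})}_u(g \circ \cH)$.

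Second, I would establish the single-coordinate contraction: for any bounded $S \subseteq \RR^2$ and any $L$-Lipschitz $\phi \colon \RR \to \RR$,
\bals
\Expect{\sigma}{\sup_{(a,b)\in S} \pth{a + \sigma \phi(b)}} \;\le\; \Expect{\sigma}{\sup_{(a,b)\in S}\pth{a + L \sigma b}}.
\eals
To prove this, I would expand $\E_{\sigma}$ as an average over $\sigma = \pm 1$ and rewrite the LHS as $\tfrac12 \sup_{(a_1,b_1),(a_2,b_2)\in S}\pth{a_1+a_2 + \phi(b_1) - \phi(b_2)}$. The Lipschitz bound $\phi(b_1) - \phi(b_2) \le L\abth{b_1 - b_2}$ then upper bounds this by $\tfrac12 \sup\pth{a_1 + a_2 + L\abth{b_1 - b_2}}$. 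The absolute value is removed using the symmetry of the joint supremum under the relabeling $(a_1,b_1) \leftrightarrow (a_2,b_2)$: we may assume $b_1 \ge b_2$ without loss of generality, obtaining $\tfrac12\sup\pth{a_1 + a_2 + L(b_1 - b_2)}$, which is exactly the RHS after re-expanding the average over $\sigma$.

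Third, I would iterate the single-coordinate contraction over $i = 1,2,\ldots,u$: at step $i$, condition on all $\set{\sigma_j}_{j \ne i}$ and apply the lemma with $b = t_i$, $\phi = g$, and $a = L\sum_{j < i}\sigma_j t_j + \sum_{j > i}\sigma_j g(t_j)$, thereby replacing $g(t_i)$ inside the supremum by $L t_i$. After $u$ such steps every occurrence of $g$ has been removed, and the factor $1/u$ is common to both sides, which completes the $u$-case; the $m$-case is handled identically. The main technical subtlety is the removal of the absolute value in the single-coordinate step, which depends on the symmetry of the joint supremum under swapping of the two dummy pairs; once that observation is in place, the rest is a routine iteration.
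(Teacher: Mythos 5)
Your proof is correct: the single-coordinate lemma, the removal of the absolute value via the swap symmetry of the joint supremum, and the coordinate-by-coordinate iteration (conditioning on the remaining Rademacher signs and on $\bY^{(u)}$ or $\bY^{(m)}$) constitute the standard contraction argument, and you correctly use the one-sided version that needs no assumption $g(0)=0$. The paper itself gives no proof — it imports this result by citing Ledoux and Talagrand — and your argument is essentially the classical proof behind that citation, so there is nothing to reconcile beyond noting that the normalizations $1/u$ and $1/m$ pass through the inequality as you state.
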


\begin{theorem}
[{\citep[Theorem 4]{Hoeffding1963-probability-inequalities},
\citep[Section D]{Gross-1001-2738}}]
\label{theorem:sample-w-wo-replacement-inequality}
Let $\set{X_i}_{i=1}^n$ and $\set{Y_i}_{i=1}^n$ be sampled from a population
$\set{c_i}_{i=1}^N \subseteq \cX \subseteq \RR^d$ without replacement and with
replacement respectively. Suppose $f$ is continuous and convex on $\cX$, then
\bal\label{eq:sample-w-wo-replacement-inequality}
\Expect{}{f\pth{\sum\limits_{i=1}^n X_i}}
\le \Expect{}{f\pth{\sum\limits_{i=1}^n Y_i}}.
\eal
\end{theorem}
\begin{remark}
As indicated in \citep{TolstikhinBK14-local-complexity-TRC},
while \citep[Theorem 4]{Hoeffding1963-probability-inequalities})
proves (\ref{eq:sample-w-wo-replacement-inequality}) only for real numbers,
 one can repeat the proof of
 \citep[Theorem 4]{Hoeffding1963-probability-inequalities})
 to show that (\ref{eq:sample-w-wo-replacement-inequality}) also holds for real vectors (e.g.,
see \citep[Section D]{Gross-1001-2738}).
\end{remark}
\begin{theorem}
[Dudley's integral entropy bound~\citep{dudley1999uniform}]
\label{theorem:dudley-integral-entropy-bound}
Let $\cF$ be a function class defined on $\cX$, and
$\set{x_i}_{i=1}^n \subseteq \cX$.
Let $\bsigma = \set{\sigma_i}_{i=1}^{n}$ be i.i.d. Rademacher variables.
Then there exists an absolute constant $C_0$ such that
\bal
\Expect{\bsigma}{\sup_{f\in \cF} \frac 1n \sum\limits_{i=1}^n \sigma_i f(x_i) } \le \frac{C_0}{\sqrt n} \int_0^{\frac T2} \sqrt{N(\cF,\norm{\cdot}{L_2(P_n)},\delta)}
\diff \delta,
\eal
where $\norm{f}{L_2(P_n)}^2 \defeq 1/n \cdot \sum_{i=1}^n f^2(x_i)$, and
$T \defeq \sup_{f,g \in \cF} \norm{f-g}{L_2(P_n)}$.
\end{theorem}
\begin{proposition}
[Concentration of a function class with finite VC-dimension, adapted from {\citep[Corollary 3.7]{bartlett2005}}]
\label{proposition:concentration-vc-class-ind}
Let $\cF$ be a class of $\set{0,1}$-valued functions defined on $\cX$ with VC-dimension $\dVC$. Let $\set{x_i}_{i=1}^n \subseteq \cX$ be a i.i.d. sample from a distribution $P$ over $\cX$. Then there exists an absolute positive constant $c$ such that for all $K_0>1$ and every $x > 0$, with probability at least $1-\exp(-x)$,
\bal\label{eq:concentration-vc-class-ind}
\Expect{P_n}{f(x)}  - \frac{K_0+1}{K_0}\Expect{x \sim P}{f(x)}
\le c K_0 \pth{\frac{\dVC \log (n/\dVC)}{n}+\frac xn}.
\eal
Here $P_n$ is the empirical distribution over $\set{x_i}_{i=1}^n$ with
$\Expect{P_n}{f(x)} = \frac{1}{n} \sum_{i=1}^n f(x_i)$.
\end{proposition}
\begin{proof}
(\ref{eq:concentration-vc-class-ind}) follows from \citep[Theorem 3.3, Corollary 3.7]{bartlett2005}.
\end{proof}

\section{Detailed Proofs}
\label{sec:proofs}

We present the detailed proofs for the results of this paper, including Theorem~\ref{theorem:TC-RC}, Theorem~\ref{theorem:main-inequality-TLC-nonnegative-func-class}, Corollary~\ref{corollary:concentration-gu}, Lemma~\ref{lemma:TLC-delta-ell-f}, Theorem~\ref{theorem:TLC-delta-star-ell-f}, Theorem~\ref{theorem:TLC-delta-ell-f-excess-risk-upper-bound-VC-dim}, Theorem~\ref{theorem:TLC-kernel}, all the lemmas in
Section~\ref{sec:roadmap-proofs}, and Theorem~\ref{theorem:concentration-tildeg-u-greater-m}.

\subsection{Proof of Theorem~\ref{theorem:TC-RC}}
\begin{proof}%[\textbf{Proof of Theorem~\ref{theorem:TC-RC}}]
We prove the first bound in (\ref{eq:TRC-RC}).
We  let $\bY^{(u)} = \set{Y_1,\ldots,Y_u}$ be $u$ independent random variables with each $Y_i$ for $i \in [u]$ sampled uniformly from $[n]$ with replacement. Let ${\bY^{(u)}}' = \bth{Y'_1,\ldots,Y'_u}$ be independent copies of $\bY^{(u)}$, and $\bsigma = \set{\sigma_i}_{i=1}^{\max\set{u,m}}$ be i.i.d. Rademacher variables.
Let $\cH_0 = \set{h^{(0)}_j}_{j \ge 1} $ be a countable dense subset of $\cH$ such that $\overline{\cH_0} = \cH$. We define $c_i \defeq 1/u \cdot \bth{h^{(0)}_j(i)-\cL_n(h^{(0)}_j)}_{j \in [M]} \in \RR^M$ for $i \in [n]$, and let $\set{Q_i}_{i \in [u]}$ and $\set{Q'_i}_{i \in [u]}$ be sampled from $\set{c_i}_{i \in [n]}$ without replacement and with replacement respectively.
Let $f(\bx) \defeq \max_{j \in [M]} \bx_j$ for $\bx \in \RR^M$
which is a continuous and convex function on $\RR^d$, then it follows from Theorem~\ref{theorem:sample-w-wo-replacement-inequality} that
 $\Expect{}{f\pth{\sum\limits_{i=1}^u Q_i}}
\le \Expect{}{f\pth{\sum\limits_{i=1}^u Q'_i}}$, which means that
\bals
\Expect{}{\max_{j \in [M]} \pth{\cL_u(h^{(0)}_j)-\cL_n(h^{(0)}_j)}} \le \Expect{\bY^{(u)}}{\max_{j \in [M]} \pth{\frac 1u \sum\limits_{i=1}^u h^{(0)}_j(Y_i)- \cL_n(h^{(0)}_j)}}.
\eals
 We note that both sequences,
\bals
\bigg\{\max_{j \in [M]}
\Big(\cL_u(h^{(0)}_j)-\cL_n(h^{(0)}_j)\Big)\bigg\}_{M \ge 1}
\textup{ and }
\bigg\{\max_{j \in [M]} \Big(\frac 1u \sum\limits_{i=1}^u h^{(0)}_j(Y_i)- \cL_n(h^{(0)}_j)\Big)\bigg\}_{M \ge 1},
\eals
are nondecreasing in terms of $M$. Letting $M \to \infty$, it then follows from the Levi's monotone convergence theorem and the fact that the first element of both sequences are integrable that
\bals
\Expect{}{\sup_{h \in \cH_0} \pth{\cL_u(h) - \cL_n(h)}} \le \Expect{\bY^{(u)}}{\sup_{h \in \cH_0} \pth{\frac 1u \sum\limits_{i=1}^u h(Y_i)- \cL_n(h)}}.
\eals
Because $\cH_0$ is dense in $\cH$, we have
\bal\label{eq:TC-RC-seg1}
\Expect{}{\sup_{h \in \cH} \pth{\cL_u(h) - \cL_n(h)}} \le \Expect{\bY^{(u)}}{\sup_{h \in \cH} \pth{\frac 1u \sum\limits_{i=1}^u h(Y_i)- \cL_n(h)}}.
\eal

As a result, it follows from the standard proof of the symmetry
inequality for the inductive Rademacher complexity that
\bal\label{eq:TC-RC-seg2}
\cfrakR^+_u(\cH) &= \Expect{}{\sup_{h \in \cH} \pth{\cL_u(h) - \cL_n(h)}} \nonumber \\
&\stackrel{\circled{1}}{\le} \Expect{\bY^{(u)}}{\sup_{h \in \cH} \pth{\frac 1u \sum\limits_{i=1}^u h(Y_i)- \cL_n(h)}} \nonumber \\
&\stackrel{\circled{2}}{=}\Expect{\bY^{(u)}}{\sup_{h \in \cH} \pth{\frac 1u \sum\limits_{i=1}^u h(Y_i)- \Expect{{\bY^{(u)}}'}{\frac 1u \sum\limits_{i=1}^u h(Y'_i)}}} \nonumber \\
&\stackrel{\circled{3}}{\le}  \Expect{\bY^{(u)},{\bY^{(u)}}'}{\frac 1u \sup_{h \in \cH} \pth{
 \sum\limits_{i=1}^u h(Y_i) -  \sum\limits_{i=1}^u h(Y'_i) }}
\nonumber \\
&\stackrel{\circled{4}}{=}\Expect{\bY^{(u)},{\bY^{(u)}}',\bsigma}{\frac 1u \sup_{h \in \cH} \pth{
\sum\limits_{i=1}^u \sigma_i \pth{h(Y_i)-h(Y'_i)}
 }} \nonumber \\
&\le \Expect{\bY^{(u)},\bsigma}{\frac 1u \sup_{h \in \cH}
\sum\limits_{i=1}^u \sigma_i h(Y_i)  } +
 \Expect{{\bY^{(u)}}',\bsigma}{\frac 1u \sup_{h \in \cH}
\sum\limits_{i=1}^u \sigma_i h(Y'_i)  } \nonumber \\
&=2\cfrakR^{(\textup{ind})}_u(\cH).
\eal
Here $\circled{1}$
follows from (\ref{eq:TC-RC-seg1}). $\circled{2}$ is due to $\Expect{{\bY^{(u)}}'}{1/u \cdot \sum\limits_{i=1}^u h(Y'_i)}= \cL_n(h)$. $\circled{3}$ is due to the Jensen's inequality, and $\circled{4}$ is due to the definition of the Rademacher variables. All the other bounds for $\cfrakR^-_u(\cH)$,
$\cfrakR^+_m(\cH)$, and  $\cfrakR^-_m(\cH)$ in (\ref{eq:TRC-RC}) can be proved in a similar manner.
%$\circled{3}$ follows from the contraction property in Theorem~\ref{theorem:contraction-RC}
%on the function class $\set{h^2 \mid h \in \cH}$.

\end{proof}

\subsection{Lemmas for the Proof of Theorem~\ref{theorem:concentration-g-m-greater-u}}
\label{sec:lemmas-concentration-g-m-greater-u}

We present the proof of Lemma~\ref{lemma:uniform-draw-diff} and all the other lemmas used in the proof of Theorem~\ref{theorem:concentration-g-m-greater-u}. Lemma~\ref{lemma:uniform-draw-diff} is copied as
Lemma~\ref{lemma:uniform-draw-diff-appendix} as follows.

\begin{lemma}
[Copy of Lemma~\ref{lemma:uniform-draw-diff}]
\label{lemma:uniform-draw-diff-appendix}
For any $h \in \cH$, there are four cases for $E(h,\bd,\bd^{(i)})$ for any $i \in [u]$.
\begin{itemize}[leftmargin=40pt]
\item[Case 1:] $E(h,\bd,\bd^{(i)}) = \pth{\frac 1u + \frac 1m} \pth{h(\bZ_{\bd}(i)) - h(\bZ_{\bd^{(i)}}(q(i)))}$,

if $d_i \neq d'_i$, $q(i) \le u, p(i) > u$,
\item[Case 2:] $E(h,\bd,\bd^{(i)}) = \pth{\frac 1u + \frac 1m} \pth{h(\bZ_{\bd}(p(i))) - h(\bZ_{\bd^{(i)}}(i))}$,

if $d_i \neq d'_i$, $p(i) \le u, q(i) > u$,
\item[Case 3:]  $E(h,\bd,\bd^{(i)}) = \pth{\frac 1u + \frac 1m} \pth{h(\bZ_{\bd}(i)) - h(\bZ_{\bd^{(i)}}(i))}$,

if $d_i \neq d'_i$, $p(i) >u, q(i) > u$,

\item[Case 4:]  $E(h,\bd,\bd^{(i)}) = 0$,

if $d_i = d'_i$ or $p(i),q(i) \le u$.
\end{itemize}
Here
\bals
q(i) &\defeq \min\set{i' \in [i+1,u] \colon \bZ_{\bd^{(i)}}(i') = i_0},
\\
p(i) &\defeq \min\set{i' \in [i+1,u]\colon \bZ_{\bd}(i') = i_0},
\eals
where $i_0 \defeq \bI(i) = \bI^{(i)}(i)$ is the $i$-th element of $\bI$ and $\bI^{(i)}$ right after the $(i-1)$-th
iteration of both instances of Algorithm~\ref{alg:randperm} which generate $\bZ_{\bd}$ and $\bZ_{\bd^{(i)}}$, respectively. In (\ref{eq:qi}) and (\ref{eq:pi}), we use the convention that the $\min$ over an empty set returns $+\infty$.
\end{lemma}
\begin{remark}
It follows from Claim~\ref{claim:singleton-pi-qi}, to be introduced later in the proof of Lemma~\ref{lemma:uniform-draw-diff}, that
$q(i),p(i)$ are either finite integers in $[2:u]$ or $\infty$,
so that $q(i) > u$ or $p(i) > u$  indicates that $q(i) = \infty$
or $p(i) = \infty$.
\end{remark}

\begin{remark}
We copy Lemma~\ref{lemma:zd-pi-qi-repeat-number}-Lemma~\ref{lemma:concentration-square-func-class-tu} in the main paper as
Lemma~\ref{lemma:zd-pi-qi-repeat-number-appendix}-Lemma~\ref{lemma:concentration-square-func-class-tu-appendix} in the following text.
\end{remark}

\begin{lemma}
[Copy of Lemma~\ref{lemma:zd-pi-qi-repeat-number}]
\label{lemma:zd-pi-qi-repeat-number-appendix}
Suppose  $m \ge u$ and $Q \in [2:u]$, and we define the following sets:
\bals
\Omega^{(1)} (Q) &\defeq \set{\bd \colon \textup{there exists
a subset } \cP \subseteq [u], \abth{\cP} \ge Q,
\textup{ s.t. } p(s) = p(t) \le u, \forall
s,t \in \cP} , \\
\Omega^{(2)}(Q) &\defeq \{\bd \colon \textup{there exists
a subset } \cP \subseteq [u], \abth{\cP} \ge Q,
\textup{ s.t. } q(s) \le u, \forall s \in \cP,
\nonumber \\
&\hspace{0.3in} \bZ_{\bd^{(s)}}(q(s)) = \bZ_{\bd^{(t)}}(q(t)), \forall
s,t \in \cP\} ,
\eals
where $q,p$ are defined in (\ref{eq:qi})-(\ref{eq:pi}). We then have
\bals
\Omega^{(1)}(Q) \subseteq \Omega(Q), \quad
\Omega^{(2)}(Q) \subseteq \Omega(Q).
\eals
Furthermore,
\bals
\Prob{\Omega(Q)} <\frac{u}{2^{Q-1}}.
\eals
In particular, when $Q=2$, $\Prob{\Omega(2)} \le u^2/m$.
\end{lemma}
\begin{remark}
We have
\bals
\Prob{\Omega^{(1)}(Q) \cup \Omega^{(2)}(Q)} \le
\Prob{\Omega(Q)} <\frac{u}{2^{Q-1}}.
\eals
\end{remark}

\begin{lemma}
[Copy of Lemma~\ref{lemma:zd-pi-upper-bound}]
\label{lemma:zd-pi-upper-bound-appendix}
Suppose $\bd \notin \Omega(Q)$ where $ \Omega(Q)$ is defined in (\ref{eq:omega-Q-def}) and $Q \in [2:u]$. Then for any $h \in \cH$, we have
\bals
\sum\limits_{i \in \cA_{h,2} } \pth{h(\bZ_{\bd}(p(i)))}^2 &\le (Q-1) \sum\limits_{i=1}^u \pth{h(\bZ_{\bd}(i))}^2.
\eals
\end{lemma}

\begin{lemma}
[Copy of Lemma~\ref{lemma:Zdi-random}]
\label{lemma:Zdi-random-appendix}
Suppose $\bd \notin \Omega(Q)$ where $ \Omega(Q)$ is defined in
(\ref{eq:omega-Q-def}) and $Q \in [2:u]$. Then for any $h \in \cH$, we have
\bals
\Expect{}{\sum\limits_{i =1}^u \pth{h(\bZ_{\bd^{(i)}}(i))}^2 \longmid \bd} &\le \frac{nu}{m} T_n(h), \\
 \Expect{}{\sum\limits_{i \in \cA_{h,1} } \pth{h(\bZ_{\bd^{(i)}}(q(i)))}^2
\longmid \bd  } &\le\frac{2(Q-1)nu}{m}T_n(h).
\eals
\end{lemma}

\begin{lemma}
[Copy of Lemma~\ref{lemma:concentration-square-func-class-tu}]
\label{lemma:concentration-square-func-class-tu-appendix}
Let $\cH'$ be a class of functions with ranges in $[0,H']$, and $t_u, \bar t_u$ are defined in (\ref{eq:tdu}) and (\ref{eq:def-bar-tdu}).
Suppose $\sup_{h \in \cH'} \cL_n(h) \le r$ for $r >\ 0$. Then
\bals
\log \Expect{\bd}{\exp\pth{\lambda\pth{\bar t_u(\bd,\cH') - \Expect{\bd}{\bar t_u(\bd,\cH')}}}} &\le \frac{QH'\lambda^2 \pth{ \Expect{\bd}{t_u(\bd,\cH')} + r}}{u- QH'\lambda}
\eals
holds for all $\lambda \in (0,u/(QH'))$, where the surrogate process
$\bar t_u$ is defined in (\ref{eq:def-bar-tdu}).
\end{lemma}

We now prove Lemma~\ref{lemma:uniform-draw-diff}, Lemma~\ref{lemma:zd-pi-qi-repeat-number}-Lemma~\ref{lemma:concentration-square-func-class-tu}, which are used for the proof of Theorem~\ref{theorem:concentration-g-m-greater-u}.

\subsubsection{Proof of Lemma~\ref{lemma:uniform-draw-diff}}
We need the following claim in the proof of
Lemma~\ref{lemma:uniform-draw-diff}.

\begin{claim}
\label{claim:singleton-pi-qi}
The set $\set{i' \in [i+1,u] \colon \bZ_{\bd^{(i)}}(i') =  i_0}$ and
the set
$\{i' \in [i+1,u]\colon \bZ_{\bd}(i') =  i_0\}$ are either
empty or they can have at most $1$ element.
\end{claim}
\begin{proof}
We prove this claim for the set
$\set{i' \in [i+1,u] \colon \bZ_{\bd^{(i)}}(i') = i_0}$, and the proof
for the set $\{i' \in [i+1,u]\colon \bZ_{\bd}(i') = i_0 \}$ follows a similar
argument.

If the set $\set{i' \in [i+1,u] \colon \bZ_{\bd^{(i)}}(i') = i_0}$
is not empty, and suppose that there are two different elements
$s,s' \in \set{i' \in [i+1,u] \colon \bZ_{\bd^{(i)}}(i') = i_0}$ with
$s < s'$. Because $\bZ_{\bd^{(i)}}(s) = i_0$, it indicates
that the element $i_0$ is never in the vector $\bI^{(i)}(s+1:n)$ after
the $s$-th iteration of Algorithm~\ref{alg:randperm}. It follows
that $\bZ_{\bd^{(i)}}(s') \neq  i_0$ for all $s' > s$, which contradicts
the definition of $s'$. This contradiction proves that if the set
$\set{i' \in [i+1,u] \colon \bZ_{\bd^{(i)}}(i') = i_0}$
is not empty, it can have at most $1$ element.
\end{proof}

\begin{proof}
[\textbf{Proof of Lemma~\ref{lemma:uniform-draw-diff}}]
We present this proof by running the two instances
of Algorithm~\ref{alg:randperm} introduced above.
Throughout this paper, when we refer to the $i$-th iteration,
it means that we run both instances of Algorithm~\ref{alg:randperm}
for $i$ iterations so that $\bZ_{\bd}(1:d)$ and
$\bZ_{\bd^{(i)}}(1:d)$ have been generated.

\noindent \textbf{Case 1.} When $d_i \neq d'_i$ and $q(i) \le u, p(i) > u$, we
aim to prove the claim that $\set{\bZ_{\bd}} = \bZ_{\bd}(i) \cup \bA$ and
$\set{\bZ_{\bd^{(i)}}}= \set{i_0} \cup \bA$ where $\bA
\subseteq [n]$ is a subset of size $u-1$. To prove this claim,
we first note that $\bZ_{\bd}(1:i-1) =
\bZ_{\bd^{(i)}}(1:i-1)$ according to
Algorithm~\ref{alg:randperm}.

We define $\cC_{r}(\bI,\bI^{(i)}) \subseteq [n]$ for $r \in [n-1]$
be the set of indices greater than $r$ where $\bI$ and
$\bI^{(i)}$  have different elements, that is,
\bals
\cC_{r}(\bI,\bI^{(i)})
\defeq \set{j \in [r+1:n] \colon \bI(j) \neq \bI^{(i)}(j)}.
\eals
It is noted that $\cC_{r}(\bI,\bI^{(i)})$ can change across
the  iterations of Algorithm~\ref{alg:randperm}.

We use $\ind(i_0)$ to denote the index of element $i_0$ in
$\bI$, and use $\ind^{(i)}(i_0)$ to denote the index of
element $i_0$ in $\bI^{(i)}$.  That is,
$\bI(\ind(i_0)) = \bI^{(i)}(\ind^{(i)}(i_0)) = i_0$.
It is also noted that $\ind(i_0)$ and $\ind^{(i)}(i_0)$
can vary across the iterations of
Algorithm~\ref{alg:randperm}. It can be verified by Algorithm~\ref{alg:randperm} that after the $i$-th
iteration and before the $q(i)$-th iteration,
$\cC_{i}(\bI,\bI^{(i)}) \subseteq
\set{\ind(i_0),\ind^{(i)}(i_0)}$, which is marked by the two red boxes in the
top part of Figure~\ref{fig:uniform-draw-diff-lemma-proof-qi}.
It is also noted that
$\cC_{i}(\bI,\bI^{(i)}) =
\set{\ind^{(i)}(i_0)}$ when
$\bd(i) = i$.

\begin{figure}[!tb]
\begin{center}
\includegraphics[width=0.7\textwidth]
{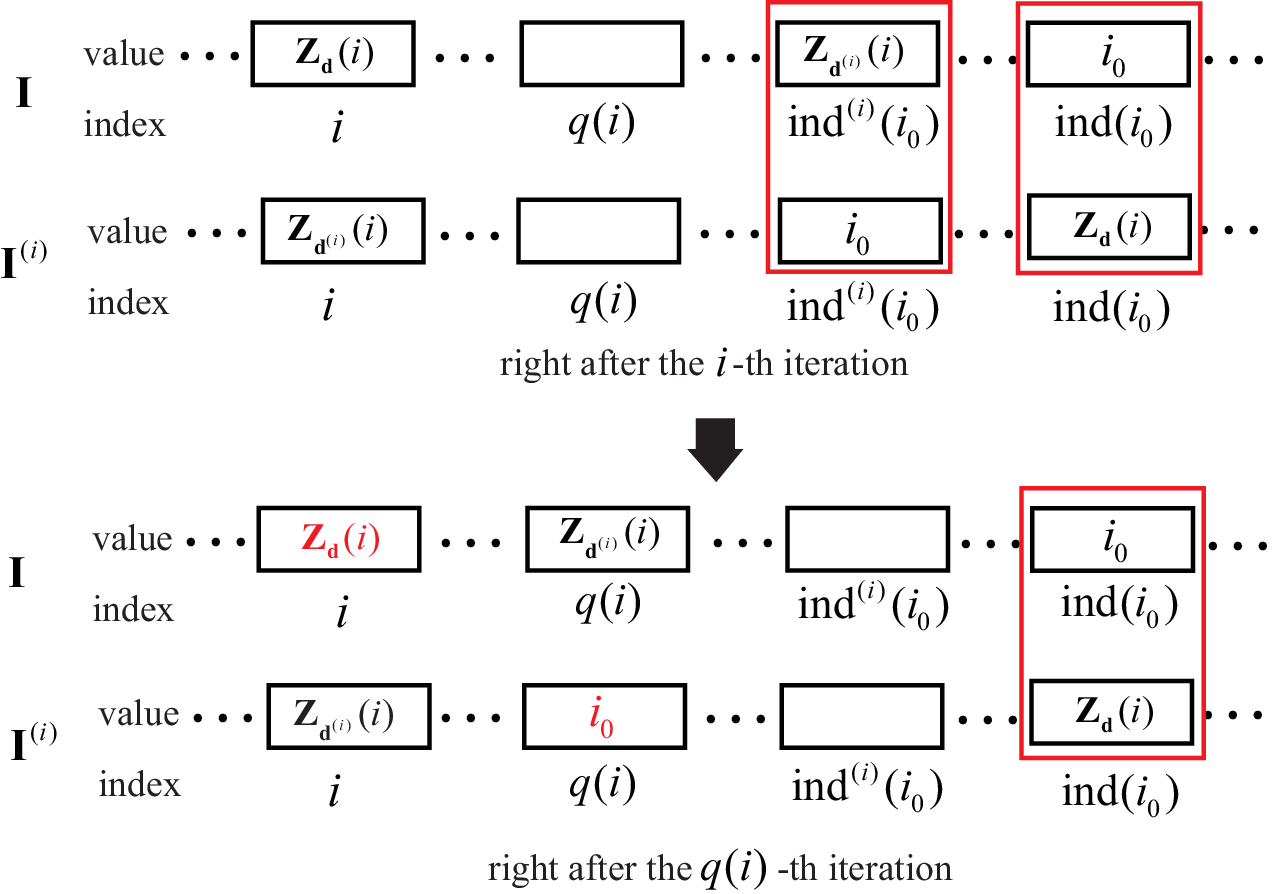}
\end{center}
\caption{Illustration of Case 1: $d_i \neq d'_i$, $q(i) \le u, p(i) > u$}
\label{fig:uniform-draw-diff-lemma-proof-qi}
\end{figure}

It can be verified that $\bZ_{\bd}(t) = \bI(\bd(t))$ and $\bZ_{\bd^{(i)}}(t) =\bI^{(i)}(\bd(t)) $ for every $t \in [u]$, where $\bI(\bd(t))$ and $\bI^{(i)}(\bd(t))$ refer to the elements of $\bI$ and $\bI^{(i)}$ with index $\bd(t)$ right after the $(t-1)$-th iteration of Algorithm~\ref{alg:randperm}.  We now show that
$\ind^{(i)}(i_0) \neq \bd(t)$
and $\ind(i_0) \neq \bd(t)$ for every $t \in [i+1\relcolon q(i)-1]$ right after the $(t-1)$-th iteration of Algorithm~\ref{alg:randperm}.
To see this, if $\ind^{(i)}(i_0) = \bd(t)$ for some $t \in [i+1\relcolon q(i)-1]$ right after the $(t-1)$-th iteration of Algorithm~\ref{alg:randperm}, then $\bZ_{\bd^{(i)}}(t) = i_0$, contradicting
Claim~\ref{claim:singleton-pi-qi}.
Moreover, if
$\ind(i_0) =  \bd(t)$ for some $t \in [i+1\relcolon q(i)-1]$ right after the $(t-1)$-th iteration of Algorithm~\ref{alg:randperm}, then
$\bZ_{\bd}(t) = i_0$ for some $t \in [i+1:q(i)-1]$, and then
it follows from Claim~\ref{claim:singleton-pi-qi} that
$p(i) = t\le u$, contradicting the given condition that
 $p(i) > u$.

As a result, we have $\bd(t) \notin \cC_{t}(\bI,\bI^{(i)})$  for every $t \in [i+1\relcolon q(i)-1]$ right after the $(t-1)$-th iteration of Algorithm~\ref{alg:randperm}. It follows that $\bI(\bd(t)) = \bI^{(i)}(\bd(t)) = \bZ_{\bd}(t) = \bZ_{\bd^{(i)}}(t)$ for every $t \in [i+1\relcolon q(i)-1]$, that is, $\bZ_{\bd}(i+1:q(i)-1) =
\bZ_{\bd^{(i)}}(i+1:q(i)-1)$.

Right after the
$q(i)$-th iteration of Algorithm~\ref{alg:randperm},
we have
$\bZ_{\bd^{(i)}}(q(i)) = i_0$ and
$\bZ_{\bd}(q(i)) = \bZ_{\bd^{(i)}}(i)$, which is
illustrated in the
bottom part of
Figure~\ref{fig:uniform-draw-diff-lemma-proof-qi}.
It can also be verified by Algorithm~\ref{alg:randperm} that
$\cC_{q(i)}(\bI,\bI^{(i)}) \subseteq
\set{\ind( i_0)}$ holds after the $q(i)$-th iteration,
which is marked by the
red box in the
bottom part of
Figure~\ref{fig:uniform-draw-diff-lemma-proof-qi}.
It is also noted that
$\cC_{q(i)}(\bI,\bI^{(i)}) = \emptyset$ when
$\bd(i) = i$.
Following the same argument above
using Claim~\ref{claim:singleton-pi-qi}, it can be verified
that $\ind( i_0) \neq \bd(t)$ for every $t \in [q(i)+1\relcolon u]$ right after the $(t-1)$-th iteration of Algorithm~\ref{alg:randperm}. Otherwise,
we have $\bZ_{\bd}(t) = i_0$ for
some $t\in [q(i)+1:u]$ so that $p(i)=t\le u$,
contradicting the given condition that
$p(i) > u$. It
then follows from the same argument above
that $\bZ_{\bd}(q(i)+1:u) =
\bZ_{\bd^{(i)}}(q(i)+1:u)$.

In summary, after running
Algorithm~\ref{alg:randperm} twice to generate
$\bZ_{\bd}$ and $\bZ_{\bd^{(i)}}$,
$\bZ_{\bd}$ and $\bZ_{\bd^{(i)}}$ can only differ
at the $i$-th element and the $q(i)$-th element.
Since $\bZ_{\bd}(q(i)) = \bZ_{\bd^{(i)}}(i)$, which
is also shown in the bottom part of
Figure~\ref{fig:uniform-draw-diff-lemma-proof-qi},
we have
$\bZ_{\bd} = \bZ_{\bd}(i) \cup \bA$ and
$\set{\bZ_{\bd^{(i)}}}= \set{i_0} \cup \bA$,
where the set $\bA = \bZ_{\bd^{(i)}}(i)
\cup \bZ_{\bd}(i+1:q(i)-1) \cup
\bZ_{\bd}(q(i)+1:u)$. It follows that
\bals
\cL_h(\set{\bZ_{\bd}}) -
\cL_h(\set{\bZ_{\bd^{(i)}}}) =
\frac 1u \pth{h(\bZ_{\bd}(i)) - h(i_0)}.
\eals
Similarly,
$ \cL_h(\overline{\bZ_{\bd^{(i)}}}) -\cL_h(\overline{\set{\bZ_{\bd}}})
=1/m \cdot \pth{h(\bZ_{\bd}(i)) - h(i_0)}$
since
$\cL_h(\overline{\bZ_{\bd^{(i)}}}) = 1/m
\cdot (n \cL_n(h) - u \cL_h({\bZ_{\bd^{(i)}}}))$
and $\cL_h(\overline{\set{\bZ_{\bd}}}) = 1/m
\cdot (n \cL_n(h) - u \cL_h({\bZ_{\bd}}))$. Noting that
$i_0 =\bZ_{\bd^{(i)}}(q(i))$, Case 1 is proved.

\begin{figure}[!tb]
\begin{center}
\includegraphics[width=0.7\textwidth]
{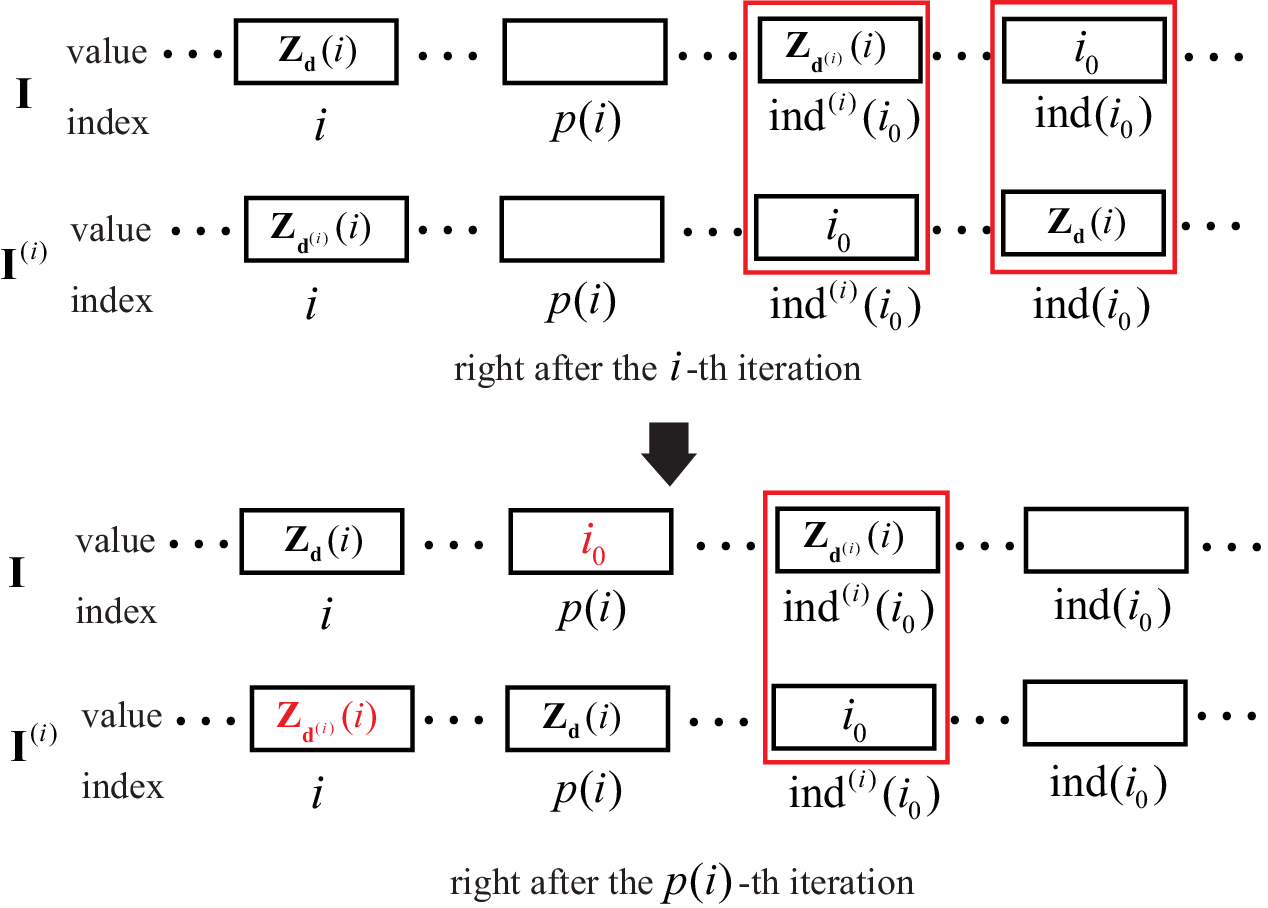}
\end{center}
\caption{Illustration of Case 2: $d_i \neq d'_i$, $p(i) \le u, q(i) > u$}
\label{fig:uniform-draw-diff-lemma-proof-pi}
\end{figure}

\noindent \textbf{Case 2.} By the symmetry between
$\bd$ and $\bd^{(i)}$ as they only differ at the
$i$-th element ($\bd(j) = \bd^{(i)}(j)$ for all $j \in [u] \setminus \set{i}$), we can apply the argument in Case 1
and swap $\pth{\bI,\bd,p(i)}$ and
$\pth{\bI^{(i)},\bd^{(i)},q(i)}$
to prove Case 2.
Figure~\ref{fig:uniform-draw-diff-lemma-proof-pi}
illustrates $\bI$ and $\bI^{(i)}$ after the $i$-th iteration
and the $p(i)$-th iteration of Algorithm~\ref{alg:randperm}.

\begin{figure}[!tb]
\begin{center}
\includegraphics[width=0.8\textwidth]
{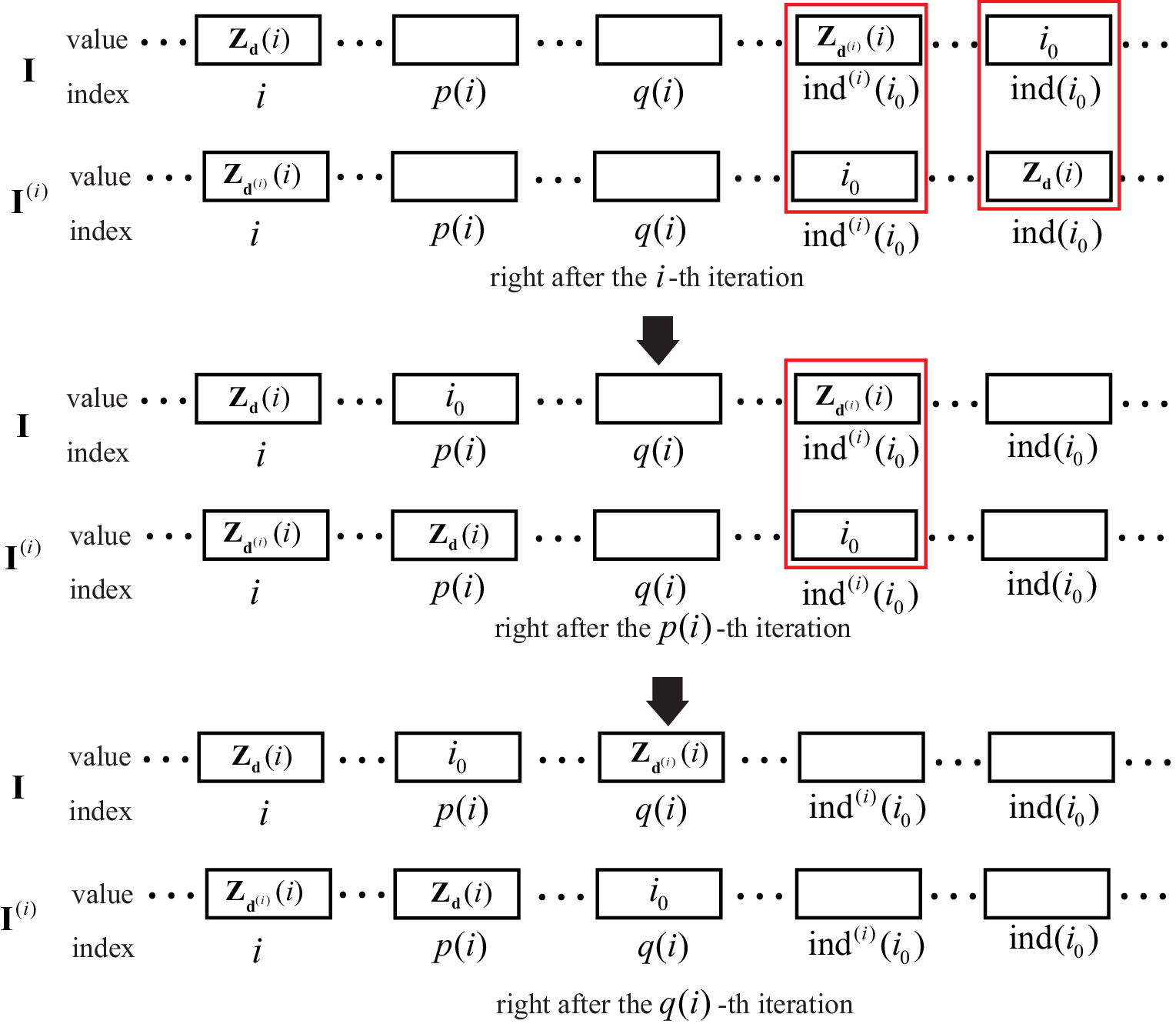}
\end{center}
\caption{Illustration of Case 4: $d_i = d'_i$ or $p(i),q(i) \le u$}
\label{fig:uniform-draw-diff-lemma-proof-pi-qi}
\end{figure}

\noindent \textbf{Case 4.} If $d_i = d'_i$,
we have $\bd = \bd^{(i)}$ so that
$\bZ_{\bd} = \bZ_{\bd^{(i)}}$ and
$E(h,\bd,\bd^{(i)}) = 0$. We now consider the case
when $p(i),q(i) \le u$ and $d_i \neq d'_i$.
Without loss of generality we assume that
$p(i) < q(i)$. The case that $p(i) \ge q(i)$ can be proved by
a similar argument. Figure~\ref{fig:uniform-draw-diff-lemma-proof-pi-qi}
illustrates $\bI$ and $\bI^{(i)}$ after the $i$-th iteration,
the $p(i)$-th iteration, and the $q(i)$-th iteration
of Algorithm~\ref{alg:randperm}.

By repeating the argument in Case 1, after the $i$-th iteration and before the $p(i)$-th iteration
 of Algorithm~\ref{alg:randperm}, we
have $\cC_{i}(\bI,\bI^{(i)}) \subseteq
\set{\ind(i_0),\ind^{(i)}(i_0)}$, and
$\ind(i_0) \neq \bd(t)$,
$\ind^{(i)}(i_0)\neq \bd(t)$ for every $t \in [i+1\relcolon p(i)-1]$ right after the $(t-1)$-th iteration of Algorithm~\ref{alg:randperm}. As a result,
$\bZ_{\bd}(i+1:p(i)-1) = \bZ_{\bd^{(i)}}(i+1:p(i)-1)$.

After the $p(i)$-th iteration and before the $q(i)$-th iteration,
it can be verified by Algorithm~\ref{alg:randperm} that $\cC_{p(i)}(\bI,\bI^{(i)}) = \set{\ind^{(i)}(i_0)}$. It
follows from Claim~\ref{claim:singleton-pi-qi} that
$\ind^{(i)}(i_0) \neq \bd(t)$ for every $t \in [p(i)+1\relcolon q(i)-1]$ right after the $(t-1)$-th iteration of Algorithm~\ref{alg:randperm}. As a result,
$\bZ_{\bd}(p(i)+1:q(i)-1) =
\bZ_{\bd^{(i)}}(p(i)+1:q(i)-1)$.

After the $q(i)$-th iteration,
it can be verified by Algorithm~\ref{alg:randperm} that $\cC_{q(i)}(\bI,\bI^{(i)}) = \emptyset$. As a result,
we have $\bZ_{\bd}(q(i)+1:u) =
\bZ_{\bd^{(i)}}(q(i)+1:u)$.

We have
\bals
\bZ_{\bd}(\set{i,p(i),q(i)})
= \bZ_{\bd^{(i)}}(\set{i,p(i),q(i)})
=\set{i_0,\bZ_{\bd}(i),\bZ_{\bd^{(i)}}(i)},
\eals
which is also illustrated in the bottom part of
Figure~\ref{fig:uniform-draw-diff-lemma-proof-pi-qi}.
Combining all the above arguments, we have
$\set{\bZ_{\bd}} = \set{\bZ_{\bd^{(i)}}}$,  so that $E(h,\bd,\bd^{(i)}) = 0$.

\noindent \textbf{Case 3.} Suppose
 $d_i \neq d'_i$, and $p(i) >u, q(i) > u$.
By repeating the argument in Case 1, we have
$\cC_{i}(\bI,\bI^{(i)}) \subseteq
\set{\ind(i_0),\ind^{(i)}(i_0)}$ after the $i$-th iteration of Algorithm~\ref{alg:randperm}. Since
$p(i) >u, q(i) > u$, we have
$\ind(i_0) \neq \bd(t)$ and
$\ind^{(i)}(i_0) \neq \bd(t)$ for every $t \in [i+1\relcolon u]$ right after the $(t-1)$-th iteration of Algorithm~\ref{alg:randperm}.
As a result,
$\bZ_{\bd}(i+1:u) = \bZ_{\bd^{(i)}}(i+1:u)$.
It follows that
$\set{\bZ_{\bd}} = \set{\bZ_{\bd}(i)} \cup
\set{\bZ_{\bd}(i+1:u)}$, and
$\set{\bZ_{\bd^{(i)}}} = \set{\bZ_{\bd^{(i)}}(i)} \cup
\set{\bZ_{\bd}(i+1:u)}$. We then have
$\cL_h(\set{\bZ_{\bd}}) -
\cL_h(\set{\bZ_{\bd^{(i)}}}) =
1/u \cdot \pth{h(\bZ_{\bd}(i)) - h(\bZ_{\bd^{(i)}}(i))}$
and $\cL_h(\overline{\bZ_{\bd^{(i)}}}) -\cL_h(\overline{\set{\bZ_{\bd}}})
=1/m \cdot \pth{h(\bZ_{\bd}(i)) - h(\bZ_{\bd^{(i)}}(i))}$,
which proves Case 3.

\end{proof}

\subsubsection{Proofs of Lemma~\ref{lemma:zd-pi-qi-repeat-number}-Lemma~\ref{lemma:concentration-square-func-class-tu}}

We then prove Lemma~\ref{lemma:zd-pi-qi-repeat-number}-Lemma~\ref{lemma:concentration-square-func-class-tu}.

\begin{proof}[Proof of Lemma~\ref{lemma:zd-pi-qi-repeat-number}]
We first prove (\ref{eq:zd-pi-qi-repeat-number-prob-Omega12}). To prove
the first part of (\ref{eq:zd-pi-qi-repeat-number-prob-Omega12}),
suppose that
for a given vector $\bd$, there exists a subset $\cP' \subseteq [u]$,
$\abth{\cP'} \ge Q$, such that  $p(s) = p(t) \le u$ for all $s,t \in \cP'$.
We denote by $\cP$ such set $\cP'$ with maximum cardinality. If there are multiple such sets of equal maximum cardinality, we pick any one of such sets as $\cP$.
Let $\cP = \set{j_1,\ldots,j_{Q'}}$ with $Q \le Q' \le u$ such that
$p(j_1) = \ldots = p(j_{Q'})$ and there exists $i' \in [n]$
such that
$\bZ_{\bd}(p(j_1)) = \ldots =\bZ_{\bd}(p(j_{Q'})) = i'$. According to Algorithm~\ref{alg:randperm} and the definition of $p$, this indicates that
$\bI(j_k) = i'$ right after the $(j_k-1)$-th iteration of
Algorithm~\ref{alg:randperm} for $k \in [Q']$, where $\bd$ is used in Algorithm~\ref{alg:randperm}.
We now have the following claim:
\begin{claim*}
$j_k = \bd(j_{k-1})$ for $k \in [2:Q']$.
\end{claim*}
\begin{figure}[!tb]
\begin{center}
\includegraphics[width=0.7\textwidth]
{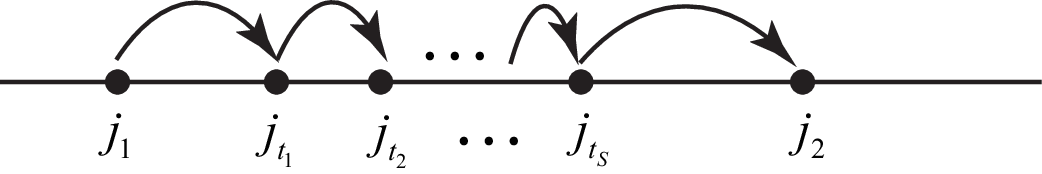}
\end{center}
\caption{Illustration of the chain in the proof of Lemma~\ref{lemma:zd-pi-qi-repeat-number}}
\label{fig:chain}
\end{figure}
We prove this claim by induction. First, suppose $j_2 \neq \bd(j_1)$.
To ensure that $\bI(j_2) = i'$ right after the $(j_2-1)$-th iteration of
Algorithm~\ref{alg:randperm}, there must exists $\set{j_{t_s}}_{s=1}^{S}$ with $S \ge 1$ such that $\bd(j_{t_{s-1}}) = j_{t_s}$ for $s \in [2:S]$, $j_{t_1} = \bd(j_1)$, $\bd(j_{t_S}) = j_2$. Right after the $j_1$-th iteration,
$\bI(j_{t_1}) = i'$,  right after the $j_{t_{s-1}}$-th iteration, $\bI(j_{t_s}) = i'$ for $s \in [2:S]$, and
right after the $j_{t_S}$-th iteration, $\bI(j_2) = i'$. Essentially, the element $i'$ is swapped to the location $j_{t_1}$, $j_{t_2}$, \ldots, $j_{t_S}$, and then $j_2$. This is illustrated in Figure~\ref{fig:chain}. Such argument indicates that
$\bZ_{\bd}(p(j_{t_s})) = i'$ for all $s \in [S]$, contradicting the maximum cardinality of the set $\cP$. This contradiction proves that
$j_2 = \bd(j_1)$.

Now suppose that $j_t = \bd(j_{t-1})$ for all $t \in [2:k-1]$ for $k \ge 3$, and then we can apply the same argument above to show that
$j_k = \bd(j_{k-1})$. As a result, the above claim is proved.

It follows from this claim and the fact that $j_k \in [u]$ for all $k \in [Q']$ that $\set{j_k}_{k=1}^{Q'}$ is a chain in $[u]$ associated with
$\bd$. Combining all the above arguments, we have
% \bals
% \Prob{\Omega^{(1)}} ] &\le
% \Prob{\bd(j_{k}) \le u \textup{ for all }k \in [Q-1],
% d(j_{Q}) > u \textup{ and } \bd(p(j_{Q})) = \bd(j_{Q})}
% \nonumber \\
% &+ \Prob{\bd(j_{k}) \le u \textup{ for all } k \in [Q]}.
% \eals
\bals
%\label{eq:zd-pi-qi-repeat-number-omega1-bound}
\Omega^{(1)}(Q) &\subseteq
\{\bd \colon \exists \cP' = \set{j_1,\ldots,j_{Q'}}
&\subseteq [u] \textup{ such that } \cP' \textup{ is a chain in } [u] \textup{ associated with } \bd\} \nonumber \\
&\subseteq \Omega(Q),
\eals
where $Q \le Q' \le u$, which proves the first part of
(\ref{eq:zd-pi-qi-repeat-number-prob-Omega12}).

The second part of (\ref{eq:zd-pi-qi-repeat-number-prob-Omega12}), can be verified by a similar argument. Suppose that
there exists a subset $\cP \subseteq [u]$,
$\abth{\cP} \ge Q$, such that $\bZ_{\bd^{(s)}}(q(s)) = \bZ_{\bd^{(t)}}(q(t))$
for all $s,t \in \cP$.
Let $\cP = \set{j_1,\ldots,j_{Q'}}$ with $Q' \ge Q$ be the set with maximum cardinality such that there exists $i'' \in [n]$, and
$\bZ_{\bd^{(j_1)}}(q(j_1)) = \ldots =
\bZ_{\bd^{(j_{Q'})}}(q(j_{Q'})) = i''$. According to Algorithm~\ref{alg:randperm}, this indicates that
$\bI^{(j_k)}(j_k) = i''$ right after the $(j_k-1)$-th iteration of
Algorithm~\ref{alg:randperm} for $k \in [Q']$, where $\bd^{(j_k)}$ is used in Algorithm~\ref{alg:randperm}.
Then it can be verified that $\set{j_1,\ldots,j_{Q'}}$ is either a chain in $[u]$ associated with $\bd$
such that
$j_k = \bd(j_{k-1}) \le u$ for $k \in [2:Q']$, or $\set{j_1,\ldots,j_{Q'}}$ is a subset of a chain in $[u]$ associated with $\bd$. As a result,
$\Omega^{(2)}(Q) \subseteq \Omega(Q)$,
which proves the second part of
(\ref{eq:zd-pi-qi-repeat-number-prob-Omega12}).

 We now prove (\ref{eq:omega-Q-prob-bound}).
For a given vector $\bd$, the chain $\set{j_1,\ldots,j_{Q'}}$ is decided by its first element $j_1$. Since $j_1 < j_2 < \ldots < j_{Q'} \le u$,
we have $j_1 \in [u-Q'+1]$. In addition, for each $j_k$ with $k \in [Q'-1]$, since
$\bd(j_k)$ is sampled uniformly from $[j_k:n]$ and
$\set{\bd(j_k)}_{k=1}^{Q'-1}$ are independent, for a given
$j_1 \in [u-Q'+1]$, we have
\bal\label{eq:zd-pi-qi-repeat-number-chain-prob}
&\Prob{\set{j_1,\ldots,j_{Q'}} \textup{ is a chain in } [u] \textup{ associated with } \bd}
\nonumber \\
&=\Prob{j_2 = \bd(j_1)  \le u} \prod_{k=3}^{Q'}\Prob{j_k \le u \mid j_{k-1} \le u,
j_{k-2} \le u, \ldots,j_1}
\nonumber \\
&=\Prob{\bd(j_1) \le u} \prod_{k=3}^{Q'}\Prob{\bd(j_{k-1}) \le u
 \mid j_{k-1} \le u}
\nonumber \\
&\stackrel{\circled{1}}{=}\prod_{k=2}^{Q'} \frac{u-j_{k-1}+1}{n-j_{k-1}+1}
=\prod_{k=2}^{Q'} \frac{u-j_{k-1}+1}{m+(u-j_{k-1}+1)} \le \frac 1{2^{Q'-1}}.
\eal
Because $\bd(j_{k-1})$ is uniformly distributed over $[j_{k-1}:n]$,
for all $k \in [2:Q']$,
we have
\bal\label{eq:zd-pi-qi-repeat-number-seg1}
\Prob{\bd(j_{k-1}) \le u}  =\frac{u-j_{k-1}+1}{n-j_{k-1}+1}
=\frac{u-j_{k-1}+1}{m+(u-j_{k-1}+1)} \le \min\set{\frac 12, \frac um}
\eal
where the  inequality is due to $m \ge u-j_{k-1}+1$ since $j_{k-1} \ge 1$ and $m \ge u$. It follows that $\circled{1}$ and (\ref{eq:zd-pi-qi-repeat-number-chain-prob}) hold. As a result, it follows from
the union bound over $j_1 \in [u-Q'+1]$ that
\bals
\Prob{\Omega(Q)} \le \frac {u-Q'+1}{2^{Q'-1}} < \frac{u}{2^{Q-1}},
\eals
which proves (\ref{eq:omega-Q-prob-bound}). When $Q=2$, it follows from (\ref{eq:zd-pi-qi-repeat-number-seg1})
that $\Prob{\Omega(2)} \le u^2/m$.
\end{proof}

\begin{proof}[Proof of Lemma~\ref{lemma:zd-pi-upper-bound}]

% We first show the property that all the elements in the set
% \bals
% \cP \defeq \set{p(i) \colon i \in [u], p(i) \le u}
% \eals
% are distinct. To see this, suppose there are $i,j \in [u]$, $i < j$, such that
% $p(i) = p(j) < +\infty$. By the definition of $p$, right after the
% $(i-1)$-th iteration of Algorithm~\ref{alg:randperm}, $\bI(i) = i_0$ and $p(i) = \min\set{i' \in [i+1,u]\colon \bZ_{\bd}(i') = i_0}$.
% Similarly, right after the
% $(j-1)$-th iteration of Algorithm~\ref{alg:randperm}, $\bI(j) = j_0$ and $p(j) = \min\set{i' \in [j+1,u]\colon \bZ_{\bd}(i') = j_0}$. It can
% be verified using Algorithm~\ref{alg:randperm} that
% %
% According to the definition of $p$ in (\ref{eq:pi}),
% $\bZ_{\bd}(p(i)) = i = \bZ_{\bd}(p(j)) = j$. This contradiction shows that all the elements in $\cP$ are distinct. We also have $\cP \subseteq [u]$, which proves (\ref{eq:zd-pi-upper-bound}).
First, since $\bd \notin \Omega(Q)$, it follows from Lemma~\ref{lemma:zd-pi-qi-repeat-number} that
$\bd \notin \Omega^{(1)}(Q) \cup \Omega^{(2)}(Q)$.

We denote by the unique values in
$\set{p(i) \colon i \in [u], p(i) \le u}$ as
$\set{i_1,i_2,\ldots,i_{u'}}$ with $u' \le u$, and let $\cP(i_k)
\defeq \set{i \in [u] \colon p(i) = i_k}$ for $k \in [u']$.  It follows from Lemma~\ref{lemma:zd-pi-qi-repeat-number} that
$\abth{\cP(i_k)} \le Q-1$ since $\bd \notin \Omega^{(1)}(Q)$. As a result,
we have
\bals
\sum\limits_{i \in \cA_{h,2} } \pth{h(\bZ_{\bd}(p(i)))}^2
&= \sum\limits_{k=1}^{u'} \abth{\cP(i_k)} \pth{h(\bZ_{\bd}(i_k))}^2
\nonumber \\
&\le (Q-1) \sum\limits_{k=1}^{u'} \pth{h(\bZ_{\bd}(i_k))}^2
\nonumber \\
&\le (Q-1) \sum\limits_{i=1}^{u} \pth{h(\bZ_{\bd}(i))}^2.
\eals
\end{proof}

\begin{proof}[Proof of Lemma~\ref{lemma:Zdi-random}]
For all $i \in [u] $, we have
\bals
\Expect{}{\pth{h(\bZ_{\bd^{(i)}}(i))}^2 \longmid \bd} &=
\frac{1}{n-i+1} \sum\limits_{j \notin \set{\bZ_{\bd}(1:i-1)}} h^2(j)
\nonumber \\
&\le \frac{n}{n-i+1} T_n(h).
\eals
We then have
\bals
\Expect{}{\sum\limits_{i =1}^u \pth{h(\bZ_{\bd^{(i)}}(i))}^2 \longmid \bd} &\le
\sum\limits_{i =1}^u \frac{n}{n-i+1} T_n(h)
\le \frac{nu}{m} T_n(h),
\eals
which proves (\ref{eq:Zdi-random-u-i}).

In order to prove (\ref{eq:Zdi-random-u-q}),
we first find an upper bound for
$\Prob{i \in \cA_{h,1} \mid \bd}$ for every $u \in [u]$. First,
\bals
\Prob{i \in \cA_{h,1} \mid \bd} \le \Prob{q(i) \le u \mid \bd}.
\eals
For a given $\bd$, we note that the event $q(i) \le u$ can only happen for the follow two cases. We denote by $i_0$ the element $\bI^{(i)}(i)$ right after the $(i-1)$-th iteration of
Algorithm~\ref{alg:randperm}.

\noindent Case 1: $\bd^{(i)}(i) \in [u+1,n]$ and
$q(i) \le u$. In this case,
$\bI(\bd^{(i)}(i)) = i_0$ after the $i$-th iteration and before the
$q(i)$-th iteration of Algorithm~\ref{alg:randperm}.
In order to ensure that there
exists $q(i) \le u$ such that $\bZ_{\bd^{(i)}}(q(i)) = i_0$ due to the
definition of $q(i)$, we must have $\bd(q(i)) = \bd^{(i)}(i)$ so that
$\bZ_{\bd^{(i)}}(q(i)) = \bI(\bd(q(i))) =\bI(\bd^{(i)}(i)) = i_0$ right after the $q(i)$-th iteration of Algorithm~\ref{alg:randperm}. Because
$\bd^{(i)}(i)$ is uniformly distributed over $[i:n]$, we have
\bals
\Prob{\textup{Case 1} \mid \bd} &\le \Prob{\exists t
\in [i+1:u] \textup{ s.t. } \bd^{(i)}(i) = \bd(t) \mid \bd} \nonumber \\
% &=\Prob{\bd^{(i)}(i) \in [u+1,n]} \cdot
% \Prob{\exists t
% \in [i+1:u] \textup{ s.t. } \bd^{(i)}(i) = \bd(t) \mid
% \bd^{(i)}(i) \in [u+1,n]} \nonumber \\
&\le \frac{u-i}{n-i+1} \le \frac{u}{m}.
\eals

\noindent Case 2: $\bd^{(i)}(i) \in [i:u]$ and
$q(i) \le u$. In this case, in order to ensure that there exists $q(i) \le u$ such that $\bZ_{\bd^{(i)}}(q(i)) = i_0$ due to the
definition of $q(i)$, then we have either (i) $\bd(q(i)) = \bd^{(i)}(i)$ following the argument for Case 1, or (ii) there is a chain $\set{j_1,\ldots,j_Q}$ in $[u]$
with $Q \in [u]$, such that $j_k = \bd(j_{k-1})$ for $k \in [2:Q]$,
$j_1 = \bd^{(i)}(i)$, and $\bd(q(i)) = \bd(j_Q)$. Moreover, in the latter case (ii) the following must hold according to
Algorithm~\ref{alg:randperm}: right after the $(j_{k-1}-1)$-th iteration of Algorithm~\ref{alg:randperm},  $\bI^{(i)}(j_{k-1}) = i_0$,
right after the $j_{k-1}$-th iteration of Algorithm~\ref{alg:randperm},
$\bI^{(i)}(j_k) = i_0$ for all $k \in [2:Q]$; after $j_Q$-th iteration
of Algorithm~\ref{alg:randperm}, $\bI^{(i)}(\bd(j_Q)) = i_0$, and after
$q(i)$-th iteration of Algorithm~\ref{alg:randperm},
$\bZ_{\bd^{(i)}}(q(i)) = i_0$.
It follows from the above argument that there exists such $\bd$ where
Case 2 happens, so that
\bals
\Prob{\textup{Case 2} \mid \bd} \le \Prob{\bd^{(i)}(i) \in [i:u] \mid \bd}
\le \frac{u-i+1}{n-i+1} \le \frac{u}{m}.
\eals
%Here we remark that one can derive a sharper bound for
%the probability of Case 2 by the joint distribution over $\bd$ and
%$\bd^{(i)}(i)$, since the probability that $\bd$
%has a chain is bounded by the proof of Lemma~\ref{lemma:zd-pi-qi-repeat-number}.
%On the other hand, the above bound for $\Prob{\textup{Case 2} \mid \bd}$
%suffices for the purpose of finding a sharp bound for the test-train process.

As a result, it follows by the union bound that
\bals
\Prob{i \in \cA_{h,1} \mid \bd} \le
\Prob{\textup{Case 1} \mid \bd}+\Prob{\textup{Case 2} \mid \bd}
\le \frac{2u}{m}.
\eals

Conditioned on the event that $q(i) \le u$ for all $i \in [u]$,
we denote by the unique values in
$\set{\bZ_{\bd^{(i)}}(q(i)) \colon i \in [u]}$ as
$\set{i_1,i_2,\ldots,i_{u'}}$ with $u' \le u$, and let $\cP(i_k)
\defeq \set{i \in [u] \colon \bZ_{\bd^{(i)}}(q(i)) = i_k}$ for
$k \in [u']$. It follows from Lemma~\ref{lemma:zd-pi-qi-repeat-number} that
$\abth{\cP(i_k)} \le Q-1$ due to $\bd \notin \Omega^{(2)}(Q)$.
It follows that given $\bd$, conditioned on the event
that $q(i) \le u$ for all $i \in [u]$,
\bal\label{eq:Zdi-random-seg1}
\sum\limits_{i =1}^{u} \pth{h(\bZ_{\bd^{(i)}}(q(i)))}^2
=\sum\limits_{k=1}^{u'} \abth{\cP(i_k)} h^2(i_k) \le (Q-1)\sum\limits_{i =1}^{n} h^2(i).
\eal
Furthermore, if $q(i) \le u$, then $h(\bZ_{\bd^{(i)}}(q(i)))$ is independent of $\set{\bd^{(j)}(j)}_{j=1}^u$ for every $i \in [u]$.
As a result, we have
\bals
\Expect{}{\sum\limits_{i \in \cA_{h,1}} \pth{h(\bZ_{\bd^{(i)}}(q(i)))}^2 \longmid \bd }
&\stackrel{\circled{1}}{=} \sum\limits_{i =1}^{u} \Expect{}{ \pth{h(\bZ_{\bd^{(i)}}(q(i)))}^2 \indict{i \in \cA_{h,1}}}
\nonumber \\
&\stackrel{\circled{2}}{\le} \frac{2u}{m} \sum\limits_{i =1}^{u}  \pth{h(\bZ_{\bd^{(i)}}(q(i)))}^2
\nonumber \\
& \stackrel{\circled{3}}{\le}  \frac{2(Q-1)u}{m}  \sum\limits_{i =1}^{n} h^2(i)
\le \frac{2(Q-1)nu}{m}T_n(h).
\eals
Here $h(\bZ_{\bd^{(i)}}(q(i))) \defeq 0$ if $q(i) = \infty$ in $\circled{1}$, and we note that $q(i) = \infty$ when $q(i) \le u$ does not hold.
$q(i) \le u$ for all $i \in [u]$ in $\circled{2}$, and
$\circled{3}$ follows from (\ref{eq:Zdi-random-seg1}), which completes
the proof of
second inequality of (\ref{eq:Zdi-random-u-q}).
\end{proof}

\begin{proof}[Proof of Lemma~\ref{lemma:concentration-square-func-class-tu}]

We  first derive the upper bound for
\bals
V_+(\bar t_u) = \Expect{}{\sum\limits_{i=1}^u \pth{\bar t_u(\bd) -
\bar t_u(\bd^{(i)})}^2\indict{\bar t_u(\bd) > \bar t_u(\bd^{(i)})} \longmid \bd}
\eals
for the following cases: $\bd \notin \Omega(Q)$ and $\bd \in \Omega(Q)$. We aim to prove that
\bal\label{eq:V+-tu-goal}
V_+(\bar t_u) \le \frac {QH'}{u} \bar t_u(\bd,\cH') + \frac {QH'r}{u}
\eal
holds for all $\bd$, and we prove this bound by analyzing the following two cases for $\bd$.

\noindent \textbf{The case that $\bd \notin \Omega(Q)$.} For a given $\bd \notin \Omega(Q)$,
we have $\bar t_u(\bd,\cH') = t_u(\bd,\cH')$. Without loss of generality, we assume the supremum in
$ t_u(\bd,\cH')$ be achieved by $h^* \in \cH'$. If this is not the case,
then we can find $h^* \in \cH'$ such that
$\abth{t_u(\bd,\cH') - (\cL_u(h^*) - \cL_n(h^*))} \le \eps$ for arbitrarily small $\eps > 0$, and we have the
same results claimed in this theorem by letting $\eps \to 0+$.
It  follows by repeating the argument in Lemma~\ref{lemma:uniform-draw-diff} that
\bals
\cL_{h^*}(\bZ_{\bd}) - \cL_{h^*}(\bZ_{\bd^{(i)}})
= \begin{cases}
1/u \cdot \pth{h^*(\bZ_{\bd}(i)) - h^*(\bZ_{\bd^{(i)}}(q(i)))}
& i \in \cA_{h^*,1},
\\
1/u \cdot \pth{h^*(\bZ_{\bd}(p(i))) - h^*(\bZ_{\bd^{(i)}}(i))}
& i \in \cA_{h^*,2}\\
1/u \cdot \pth{h^*(\bZ_{\bd}(i)) - h^*(\bZ_{\bd^{(i)}}(i))}\cdot & i \in \cA_{h^*,3}\\
0 & i \in \cA_{h^*,4},
\end{cases}
\eals
where the set $\cA_{h^*,\cdot}$ is defined in (\ref{eq:set-A}).
Because $h^*$ is nonnegative, and when
$\bar t_u(\bd) > \bar t_u(\bd^{(i)})$,
$t_u(\bd) - t_u(\bd^{(i)}) \le \cL_{h^*}(\bZ_{\bd}) - \cL_{h^*}(\bZ_{\bd^{(i)}})
$, we have
\bals
\pth{\bar t_u(\bd) - \bar t_u(\bd^{(i)})}
\indict{\bar t_u(\bd) > \bar t_u(\bd^{(i)})} \le
\frac 1u h^*(\bZ_{\bd}(i))
\eals
for all $i \in [u] \setminus \cA_{h^*,2}$.
Similarly, we have
\bals
\pth{\bar t_u(\bd) - \bar t_u(\bd^{(i)})}
\indict{\bar t_u(\bd) > \bar t_u(\bd^{(i)})}\le  1/u \cdot h^*(\bZ_{\bd}(p(i))) \eals
for all $i \in \cA_{h^*,2}$. As a result, we have
\bals%\label{eq:concentration-square-func-class-seg1}
V_+(\bar t_u) &= \Expect{}{\sum\limits_{i \in[u] \setminus \cA_{h^*,2}} \pth{\bar t_u(\bd) - \bar t_u(\bd^{(i)})}^2\indict{\bar t_u(\bd) >\bar t_u(\bd^{(i)})}
\longmid \bd}\nonumber \\
&+
\Expect{}{\sum\limits_{i \in \cA_{h^*,2}} \pth{\bar t_u(\bd) - \bar t(\bd^{(i)})}^2\indict{t_u(\bd) > t_u(\bd^{(i)})}\longmid \bd} \nonumber \\
&\le \frac 1{u^2} \sum\limits_{i \in[u] \setminus \cA_{h^*,2}} \pth{h^*(\bZ_{\bd}(i))}^2
+\frac 1{u^2} \sum\limits_{i \in \cA_{h^*,2}} \pth{h^*(\bZ_{\bd}(p(i)))}^2 \nonumber \\
&\stackrel{\circled{1}}{\le} \frac {QH'}{u^2} \sum\limits_{i=1}^u
h^*(\bZ_{\bd}(i)) \nonumber \\
&=  \frac {QH'}{u} \pth{\cL_{h^*}(\bZ_{\bd}) - \cL_n(h^*)} +
\frac {QH'}{u}\cL_n(h^*)
\nonumber \\
&\le \frac {QH'}{u} \bar t_u(\bd,\cH') + \frac {QH'r}{u},
\eals
where $\circled{1}$ follow from Lemma~\ref{lemma:zd-pi-upper-bound} since
$\bd \notin \Omega(Q)$ and the range of $h^*$ is $[0,H']$, and the last inequality follows from
$\cL_n(h^*) \le r$. So (\ref{eq:V+-tu-goal}) is proved for
$\bd \notin \Omega(Q)$.

\noindent \textbf{The case that $\bd \in \Omega(Q)$.} When
$\bd \in \Omega(Q)$,
$\bar t_u(\bd,\cH') = -\sup_{h \in \cH'} \cL_n(h)$. For all
$\bd$, we have $\bar t_u(\bd^{(i)}) \ge -\sup_{h \in \cH'} \cL_n(h)=
\bar t_u(\bd,\cH') $ by the definition of the surrogate process
$\bar t_u$. As a result,
$\pth{\bar t_u(\bd) - \bar t_u(\bd^{(i)})}^2\indict{\bar t_u(\bd) >\bar t_u(\bd^{(i)})}$ is always $0$, and $V_+(\bar t_u) =0$. We note that the RHS of
(\ref{eq:V+-tu-goal}) is nonnegative, so (\ref{eq:V+-tu-goal}) is proved for $\bd \in \Omega(Q)$.

Now by (\ref{eq:Boucheron2003-concentration-thm5-logE}) in Theorem~\ref{theorem:Boucheron2003-concentration} and the above inequality, we conclude that the following inequality
\bals
&\log \Expect{\bd}{\exp\pth{\lambda\pth{\bar t_u(\bd,\cH^2) - \Expect{\bd}{\bar t_u(\bd,\cH^2)}}}}\\
&\quad \le \frac{\lambda^2}{1- (QH'\lambda)/u} \pth{\frac {QH'}{u} \Expect{\bd}{\bar t_u(\bd,\cH')} + \frac {QH'r}{u}}
\eals
holds for all $\lambda \in (0,u/(QH'))$. The above inequality combined with the fact that $\bar t_u(\bd,\cH') \le t_u(\bd,\cH')$ holds for all $\bd$
 proves
(\ref{eq:concentration-square-func-class-tu}).

\end{proof}

\subsection{Proof of Theorem~\ref{theorem:concentration-tildeg-u-greater-m}}

In order to apply the exponential version of the Efron-Stein inequality \citep[Theorem 2]{Boucheron2003-concentration-entropy-method}, we let $\tbd' = [\td'_1,\ldots,\td'_m]$ be independent copies of $\tbd$, and $\tbd^{(i)} = [\td_1,\ldots,\td_{i-1},\td'_i, \td_{i+1},\ldots,\td_m]$ for every $i \in [m]$.

%We note that $\bZ_{\tbd}$ is the first $m$ elements of a permutation of $[n]$ which is chosen uniformly at random, and $\bZ_{\bd}$ is the first $u$ elements of a permutation of $[n]$ which is chosen uniformly at random.
%Either $\set{\bZ_{\tbd}}$ or $\set{\bZ_{\bd}}$ specifies a partition of $[n]$, that is,
%$[n] = \set{\bZ_{\tbd}} \cup \overline{\set{\bZ_{\tbd}}}$ and
%$[n] = \set{\bZ_{\bd}} \cup \overline{\set{\bZ_{\bd}}}$.

For every $h \in \cH$, we define the following change of the test-train loss,
\bal\label{eq:tE-def}
\tE(h,\tbd,\tbd^{(i)}) \defeq
\cL_h(\set{\bZ_{\tbd}})-\cL_h(\overline{\set{\bZ_{\tbd}}}) - \cL_h(\bZ_{\tbd^{(i)}}) +  \cL_h(\overline{\bZ_{\tbd^{(i)}}}).
\eal
Similar to the four cases in Lemma~\ref{lemma:uniform-draw-diff}, it follows by repeating the argument in the proof of Lemma~\ref{lemma:uniform-draw-diff} that we have the following four cases for $\tE$ as stated in the following lemma.

\begin{lemma}
\label{lemma:uniform-draw-diff-tE}
For any $h \in \cH$, there are four cases for the value of $\tE(h,\bd,\bd^{(i)})$ for $i \in [m]$.
\begin{itemize}[leftmargin=40pt]
\item[Case 1:] $\tE(h,\tbd,\tbd^{(i)}) = \pth{\frac 1u + \frac 1m}  \pth{h(\bZ_{\tbd}(i)) - h(\bZ_{\tbd^{(i)}}(\tilde q(i)))}$,

if $\td_i \neq \td'_i$, $\tilde q(i) \le m, \tilde p(i) > m$,
\item[Case 2:] $\tE(h,\tbd,\tbd^{(i)}) = \pth{\frac 1u + \frac 1m} \pth{h(\bZ_{\tbd}(\tilde p(i))) - h(\bZ_{\tbd^{(i)}}(i))}$,

if $\td_i \neq \td'_i$, $\tilde p(i) \le m, \tilde q(i) > m$,
\item[Case 3:]  $\tE(h,\tbd,\tbd^{(i)}) = \pth{\frac 1u + \frac 1m} \pth{h(\bZ_{\tbd}(i)) - h(\bZ_{\tbd^{(i)}}(i))}$,

if $\td_i \neq \td'_i$, $\tilde p(i) >m, \tilde q(i) > m$,

\item[Case 4:]  $\tE(h,\tbd,\tbd^{(i)}) = 0$,

if $\td_i = \td'_i$ or $\tilde p(i),\tilde q(i) \le m$,
\end{itemize}
where
\bal
\tilde q(i) &\defeq \min\set{i' \in [i+1,m] \colon \bZ_{\tbd^{(i)}}(i') = i}, \label{eq:tqi}
\\
\tilde p(i) &\defeq \min\set{i' \in [i+1,m]\colon \bZ_{\tbd}(i') = i}, \label{eq:tpi}
\eal
and similar to (\ref{eq:qi})-(\ref{eq:pi}) in Lemma~\ref{lemma:uniform-draw-diff}, we use the convention that the $\min$ over an empty set returns $+\infty$.
\end{lemma}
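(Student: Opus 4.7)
The plan is to transcribe the proof of Lemma~\ref{lemma:uniform-draw-diff} line by line under the substitution $u \mapsto m$ and the swap of the roles of the ``selected'' prefix and its complement. Since the RANDPERM procedure in Algorithm~\ref{alg:randperm} depends on its input only via the stopping index, the combinatorial structure is identical to that of the previous lemma; in particular, the quantities $\tilde p(i), \tilde q(i)$ in (\ref{eq:tpi})--(\ref{eq:tqi}) play exactly the same bookkeeping role as $p(i), q(i)$ in (\ref{eq:qi})--(\ref{eq:pi}), just with the prefix length $u$ replaced by $m$.

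First I would establish the combinatorial invariant that $\bZ_{\tbd}$ and $\bZ_{\tbd^{(i)}}$ agree in all but at most one coordinate, so that $\set{\bZ_{\tbd}}$ and $\set{\bZ_{\tbd^{(i)}}}$ differ in at most one element, and, by passing to complements in $[n]$, so do $\overline{\bZ_{\tbd}}$ and $\overline{\bZ_{\tbd^{(i)}}}$. The reason is that for $j \neq i$ the two executions of RANDPERM use the same $\td_j$ and hence perform identical swaps on the working array $\bI$, while the single deviation at iteration $i$ modifies at most one pair of entries of $\bI$.

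Next I would analyze the four cases. For Case 4, either $\td_i = \td'_i$ (so the two executions are literally identical) or both $\tilde p(i),\tilde q(i) \le m$ (so the deviation merely permutes two elements that both remain in the training prefix), and in either subcase $\set{\bZ_{\tbd}} = \set{\bZ_{\tbd^{(i)}}}$ gives $\tE = 0$. For Cases 1--3 the unique pair of differing elements must straddle the training/test boundary; writing $a$ for the element that leaves the training set (and hence enters the test set) and $b$ for the one that enters the training set, a direct computation from definition (\ref{eq:tE-def}) gives
\[
\tE(h,\tbd,\tbd^{(i)}) = \frac{1}{m}\bigl(h(a)-h(b)\bigr) + \frac{1}{u}\bigl(h(a)-h(b)\bigr) = \left(\frac{1}{m}+\frac{1}{u}\right)\bigl(h(a)-h(b)\bigr),
\]
and one identifies $(a,b)$ in each subcase using (\ref{eq:tpi})--(\ref{eq:tqi}): Case 1 has $a = \bZ_{\tbd}(i)$ and $b = \bZ_{\tbd^{(i)}}(\tilde q(i))$, Case 2 has $a = \bZ_{\tbd}(\tilde p(i))$ and $b = \bZ_{\tbd^{(i)}}(i)$, and Case 3 has $a = \bZ_{\tbd}(i)$ and $b = \bZ_{\tbd^{(i)}}(i)$.

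The main obstacle is purely bookkeeping: one must check in each of Cases 1--3 that the pair $(a,b)$ identified above is indeed the unique differing pair, which amounts to tracing the effect of the single swap at iteration $i$ on the rest of the execution via the definitions of $\tilde p(i)$ and $\tilde q(i)$. This was already carried out in full for the test-prefix version in Lemma~\ref{lemma:uniform-draw-diff}, and the argument transfers verbatim under the $u \leftrightarrow m$ symmetry, so no new ideas are required.
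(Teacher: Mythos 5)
Your proposal is correct and matches the paper's own treatment exactly: the paper's proof of Lemma~\ref{lemma:uniform-draw-diff-tE} consists of a single sentence deferring to "repeating the argument in the proof of Lemma~\ref{lemma:uniform-draw-diff}," which is precisely your transcription under $u \leftrightarrow m$ and prefix/complement swap. Your explicit computation of $\tE$ via the unique straddling pair $(a,b)$ and the identification of $(a,b)$ in Cases 1--3 from (\ref{eq:tpi})--(\ref{eq:tqi}) are all consistent with the statement and with the combinatorial bookkeeping already established in Lemma~\ref{lemma:uniform-draw-diff}.
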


\noindent \textbf{Useful Definitions.} We need the following definitions
before the proof of Theorem~\ref{theorem:concentration-tildeg-u-greater-m}.
These definitions are ``symmetric'' to the definitions in
Section~\ref{sec:proof-concentration-test-train-process-m-greater-u} for the proof
of Theorem~\ref{theorem:concentration-g-m-greater-u}, and they are expressed in terms of $\tbd$ instead of $\bd$.

For any $h \in \cH$, define
\bal\label{eq:set-tilde-A}
\tilde \cA_{h,p} = \set{i \in [m] \colon \tbd \textup{ and } \tbd^{(i)}
\textup{ satisfy Case $p$ in Lemma~\ref{lemma:uniform-draw-diff-tE}}}, \quad p \in [4].
\eal

For a function class $H'$, we define
\bal
t_m(\tbd,\cH') &\defeq \sup_{h \in \cH'} \pth{\cL_h(\set{\bZ_{\tbd}}) - \cL_n(h)} \label{eq:tdm}.
\eal

\begin{definition}
Define
\bal\label{eq:omega-tQ-def}
\tOmega(Q) \defeq
\set{\tbd \colon \textup{there exists a chain } \set{j_1,\ldots,j_{Q'}}
\textup{ in } [m] \textup{ associated with } \tbd, Q' \in [Q:m]}
\eal
as the subset of $\tbd$ for which the length of the longest chain is not less than $Q$.
\end{definition}

We also define a surrogate process $\bar t_m(\tbd,\cH')$ for a
class of functions $\cH'$ with ranges in $[0,H']$ ($H' > 0$) as
\bal
\label{eq:def-bar-tdm}
\bar t_m(\tbd,\cH') \defeq
\begin{cases}
t_m(\tbd,\cH') & \tbd \notin \tOmega(Q),\\
-\sup_{h \in \cH'} \cL_n(h) & \tbd \in \tOmega(Q).
\end{cases}
\eal
We are then ready to present the proof of Theorem~\ref{theorem:concentration-tildeg-u-greater-m}.

\begin{proof}[\textbf{Proof of Theorem~\ref{theorem:concentration-tildeg-u-greater-m}}]
In order to apply the exponential version of the Efron-Stein inequality \citep[Theorem 2]{Boucheron2003-concentration-entropy-method}, we let $\tbd' = [\td'_1,\ldots,\td'_m]$ be independent copies of $\tbd$, and $\tbd^{(i)} = [\td_1,\ldots,\td_{i-1},\td'_i, \td_{i+1},\ldots,\td_m]$ for every $i \in [m]$.
We first define the following empirical process,
\bal\label{eq:tg-def}
\tg(\tbd) \defeq \sup_{h \in \cH}
\pth{\cL_h(\overline{\set{\bZ_{\tbd}}}) -\cL_h(\set{\bZ_{\tbd}})}.
\eal
We note that
$\set{\bZ_{\bd}}$ and $\set{\overline{\bZ_{\tbd}}}$ have
the same distribution, because they are sets of size $u$ sampled uniformly from $[n]$ without replacement, and $\set{\overline{\set{\bZ_{\bd}}}}$ and $\set{\bZ_{\tbd}}$ have the same distribution, as they are sets of size $m$ sampled uniformly from $[n]$ without replacement. That is,
$\set{\bZ_{\bd}} \overset{\textup{dist}}{=} \set{\overline{\bZ_{\tbd}}}$,
$\set{\overline{\set{\bZ_{\bd}}}} \overset{\textup{dist}}{=} \set{\bZ_{\tbd}}$, and we have
\bal\label{eq:equiv-tildeg-g}
\Expect{\tbd}{\tg(\tbd)} = \Expect{\set{\bZ_{\tbd}}}{\tg(\tbd)} = \Expect{\set{\bZ_{\bd}}}{g(\bd)}
=\Expect{\bd}{g(\bd)}.
\eal
Furthermore, since $\bZ_{\bd}$ are the first $u$ elements of a uniformly distributed permutation of $[n]$ and
$\bZ_{\tbd}$ are the first $m$ elements of a uniformly distributed permutation of $[n]$, an event happens with certain
probability over $\tbd$ happens with the same probability over $\bd$, and vice versa.

We then define the surrogate empirical process $\overline{\tg}(\tbd)$ as
\bal
\label{eq:concentration-g-u-greater-m-surrogate-g}
\overline{\tg}(\tbd)  \defeq
\begin{cases}
\tg(\tbd) & \tbd \notin \Omega(Q) \\
-2H_0  & \tbd \in \Omega(Q).
\end{cases}
\eal

We will derive concentration inequality for the surrogate process
$\overline{\tg}$ using
Theorem~\ref{theorem:Boucheron2003-concentration-thm2}. We first derive the upper bound for $V_+(\overline{\tg})$,
\bal
\label{eq:concentration-g-m-greater-u-V+-surrogate-g}
V_+(\overline{\tg} \mid \tbd) = \Expect{}{\sum\limits_{i=1}^m \pth{\overline{\tg}(\bd) - \overline{\tg}(\bd^{(i)})}^2
\indict{\overline{\tg}(\bd) > \overline{\tg}(\bd^{(i)})}\longmid \tbd  },
\eal
for the following two cases: $\tbd \notin \tOmega(Q)$
and $\tbd \in \tOmega(Q)$.

\noindent \textbf{The case that $\tbd \notin \tOmega(Q)$.}
Similar to the first part of the proof of
Theorem~\ref{theorem:concentration-g-m-greater-u}, for a given $\tbd$, without loss of generality, let the supremum in $\overline{\tg}(\tbd) = \tg(\tbd)$ be
achieved by $h^* \in \cH$. That is, $\overline{\tg}(\tbd) = \cL_{h^*}(\overline{\set{\bZ_{\tbd}}}) -\cL_{h^*}(\bZ_{\tbd})$.
We have
\bal\label{eq:concentration-tildeg-u-greater-m-seg1-pre}
0 &\le \pth{\overline{\tg}(\tbd) - \overline{\tg}(\tbd^{(i)})}\indict{\overline{\tg}(\tbd) > \overline{\tg}(\tbd^{(i)})} \nonumber \\
&\le \pth{ \cL_{h^*}(\overline{\set{\bZ_{\tbd}}}) -\cL_{h^*}(\bZ_{\tbd})
-\cL_{h^*}(\overline{\bZ_{\tbd^{(i)}}})+\cL_{h^*}(\bZ_{\tbd^{(i)}})}\indict{\overline{\tg}(\tbd) > \overline{\tg}(\tbd^{(i)})}
\nonumber \\
&=-\tE(h^*,\tbd,\tbd^{(i)}) \indict{\overline{\tg}(\tbd) > \overline{\tg}(\tbd^{(i)})},
\eal
which follows from the fact that when $\overline{\tg}(\tbd) - \overline{\tg}(\tbd^{(i)}) > 0$, $-\tE(h^*,\tbd,\tbd^{(i)}) =\cL_{h^*}(\overline{\set{\bZ_{\tbd}}})
-\cL_{h^*}(\bZ_{\tbd})-\cL_{h^*}(\overline{\bZ_{\tbd^{(i)}}})+
\cL_{h^*}(\bZ_{\tbd^{(i)}})\ge \overline{\tg}(\tbd) - \overline{\tg}(\tbd^{(i)}) > 0$.
We then have
\bal\label{eq:concentration-tildeg-u-greater-m-seg1}
&\sum\limits_{i=1}^m \pth{\overline{\tg}(\tbd) - \overline{\tg}(\tbd^{(i)})}^2\indict{\overline{\tg}(\tbd)> \overline{\tg}(\tbd^{(i)})}
\nonumber \\
&\le \sum\limits_{i=1}^m \tE^2(h^*,\tbd,\tbd^{(i)})\indict{\overline{\tg}(\tbd) > \overline{\tg}(\tbd^{(i)})}
\nonumber \\
&= \sum\limits_{i \in \tilde \cA_{h,1}} \pth{\overline{\tg}(\tbd) - \overline{\tg}(\tbd^{(i)})}^2\indict{\overline{\tg}(\tbd)> \overline{\tg}(\tbd^{(i)})} +
\sum\limits_{i \in \tilde \cA_{h,2}} \pth{\overline{\tg}(\tbd) - \overline{\tg}(\tbd^{(i)})}^2\indict{\overline{\tg}(\tbd)> \overline{\tg}(\tbd^{(i)})} +
\nonumber \\
&\phantom{=}+ \sum\limits_{i \in \tilde \cA_{h,3}} \pth{\overline{\tg}(\tbd) - \overline{\tg}(\tbd^{(i)})}^2\indict{\overline{\tg}(\tbd)> \overline{\tg}(\tbd^{(i)})}
\nonumber \\
&\stackrel{\circled{1}}{\le} 2\pth{\frac 1u + \frac 1m}^2 \sum\limits_{i \in \tilde \cA_{h,1}}
\pth{\pth{h^*(\bZ_{\tbd}(i))}^2 + \pth{h^*(\bZ_{\tbd^{(i)}}(\tilde q(i)))}^2 }
\nonumber \\
&\phantom{=}+2\pth{\frac 1u + \frac 1m}^2 \sum\limits_{i \in \tilde \cA_{h,2}}
\pth{\pth{h^*(\bZ_{\tbd}(\tilde p(i)))}^2 + \pth{h^*(\bZ_{\tbd^{(i)}}(i))}^2 }
\nonumber \\
&\phantom{=}+2\pth{\frac 1u + \frac 1m}^2 \sum\limits_{i \in \tilde \cA_{h,3}}
\pth{\pth{h^*(\bZ_{\tbd}(i))}^2 + \pth{h^*(\bZ_{\tbd^{(i)}}(i))}^2 }  \nonumber \\
&\stackrel{\circled{2}}{\le} 2Q\pth{\frac 1u + \frac 1m}^2 \sum\limits_{i=1}^m \pth{h^*(\bZ_{\tbd}(i))}^2 + R_2
\nonumber \\
&\stackrel{\circled{3}}{\le} \frac{8Q}{m^2} \sum\limits_{i=1}^m
\pth{\pth{h^*(\bZ_{\tbd}(i))}^2 - T_n(h^*)} +\frac{8Q}{m} T_n(h^*)+R_2.
\eal
Here $\circled{1}$ follows from (\ref{eq:concentration-tildeg-u-greater-m-seg1-pre}), Lemma~\ref{lemma:uniform-draw-diff-tE}, and the Cauchy inequality that $(a-b)^2 \le 2(a^2 + b^2)$ for all $a,b \in \RR$. $\tilde q$ and $\tilde p$ are defined in (\ref{eq:tqi})-(\ref{eq:tpi}). $R_2$ in $\circled{2}$ is defined as
\bals
R_2 \defeq 2\pth{\frac 1u + \frac 1m}^2
\sum\limits_{i\in \tilde \cA_{h,1}} \pth{h^*(\bZ_{\tbd^{(i)}}(\tilde q(i)))}^2 + 2\pth{\frac 1u + \frac 1m}^2  \sum\limits_{i\in \tilde \cA_{h,2} \cup \tilde \cA_{h,3}} \pth{h^*(\bZ_{\tbd^{(i)}}(i))}^2.
\eals
$\circled{2}$ follows from Lemma~\ref{lemma:ztd-pi-upper-bound}.
It follows from Lemma~\ref{lemma:Ztdi-random} that
\bal\label{eq:concentration-tildeg-u-greater-m-seg1-post}
&\Expect{}{R_2 \longmid \tbd} \nonumber \\
&\le 2\pth{\frac 1u + \frac 1m}^2
\pth{\Expect{}{\sum\limits_{i \in \tilde \cA_{h,1}} \pth{h^*(\bZ_{\tbd^{(i)}}(\tilde q(i)))}^2 \longmid \tbd}
+
\Expect{}{\sum\limits_{i = 1}^m \pth{h^*(\bZ_{\tbd^{(i)}}(i))}^2 \longmid \tbd}} \nonumber \\
&\le  \frac{4(Q-1)nm}{u}\pth{\frac 1u + \frac 1m}^2T_n(h^*) + \frac{2nm}{u}\pth{\frac 1u + \frac 1m}^2 T_n({h^*}^2) \nonumber \\
&= \frac{2(2Q-1)nm}{u}\pth{\frac 1u + \frac 1m}^2T_n(h^*)  \le\frac{16(2Q-1)}{m} T_n(h^*),
\eal
where the last inequality follow from $u \ge m$ so that $u \ge n/2$.

It follows from (\ref{eq:concentration-tildeg-u-greater-m-seg1}) and  (\ref{eq:concentration-tildeg-u-greater-m-seg1-post})
that
\bals
V_+(\overline{\tg} \mid \tbd)&\defeq \Expect{}{\sum\limits_{i=1}^m \pth{\overline{\tg}(\tbd) - \overline{\tg}(\tbd^{(i)})}^2
\indict{\overline{\tg}(\tbd) > \overline{\tg}(\tbd^{(i)})} \longmid \tbd  } \nonumber \\
&\le \frac{8Q}{m^2} \sum\limits_{i=1}^m
\pth{\pth{h^*(\bZ_{\tbd}(i))}^2 - T_n(h^*)} +\frac{8(5Q-2)}{m} T_n(h^*)
\nonumber \\
&\le\frac{8Q}{m} t_m(\tbd,\cH^2) +\frac{8(5Q-2)r}{m},
\eals
where $t_m(\tbd,\cH^2) = 1/m\cdot \sup_{h \in \cH} \sum\limits_{i=1}^m
\pth{h^2(\bZ_{\tbd}(i)) - T_n(h)}$ with $\cH^2 = \set{h^2 \mid h \in \cH}$,  and the last inequality follows from
$r \ge \sup_{h \in \cH} T_n(h)$.

\noindent \textbf{The case that $\tbd \in \tOmega(Q)$.} In this case,
$\overline{\tg}(\bd) =-2H_0$. Because $\overline{\tg}(\tbd^{(i)}) \ge -2H_0
= \overline{\tg}(\tbd)$ for any $\tbd \in \tOmega(Q)$, we have
$\pth{\overline{\tg}(\tbd) - \overline{\tg}(\tbd^{(i)})}^2\indict{\overline{\tg}(\tbd)
> \overline{\tg}(\tbd^{(i)})} = 0$ for all $\tbd \in \tOmega(Q)$ and $i \in [m]$,
so that $V_+(\overline{\tg}\mid \tbd) =0$ for all $\tbd \in
\tOmega(Q)$.

Combining the above two cases, similar to
(\ref{eq:concentration-g-m-greater-u-V+-bound}), we have
\bal
\label{eq:concentration-tildeg-u-greater-m-V+-bound}
V_+(\overline{\tg}\mid \tbd) \le
\frac{8Q}{m} \bar t_m(\tbd,\cH^2) +\frac{8(5Q-2)r}{m},
\eal
where $\bar t_m$ is defined in (\ref{eq:def-bar-tdm}).
Now by repeating the proof of Theorem~\ref{theorem:concentration-g-m-greater-u}, we have a result similar to (\ref{eq:concentration-g-m-greater-u-seg4}) by replacing $u$ with $m$. First, for all $\theta > 0$ and $\lambda \in (0,1/\theta)$, we have
\bal\label{eq:concentration-tildeg-u-greater-m-seg2}
\log \Expect{\tbd}{\exp\pth{\lambda\pth{Z - \Expect{}{Z}}}}
\le \frac{\lambda \theta}{1-\lambda \theta} \log\Expect{\tbd}{\exp\pth{\frac{\lambda V_+(\overline{\tg} \mid \tbd)}{\theta}}}
\eal
with $Z =\overline{\tg}(\tbd) $. With $\theta = \frac {QH_0^2}{m}$, it follows from (\ref{eq:concentration-tildeg-u-greater-m-V+-bound}) that
\bal\label{eq:concentration-tildeg-u-greater-m-seg3}
&\log\Expect{\tbd}{\exp\pth{\frac{\lambda V_+(\overline{\tg})}{\theta}}} =
\frac{8(5Q-2)\lambda r}{m\theta} +\log \Expect{\bd}{\exp\pth{\frac{8Q\lambda}{m\theta} \bar t_m(\tbd,\cH^2) }} \nonumber \\
&\le \frac{8Q\lambda}{m\theta}
\pth{ \Expect{\bd}{\bar t_m(\bd,\cH^2)} + 5r} + \frac{8Q}{m\theta}\log \Expect{\tbd}{\exp\pth{\lambda \pth{\bar t_m(\tbd,\cH^2) - \Expect{}{\bar t_m(\tbd,\cH^2)}} }} \nonumber \\
&\stackrel{\circled{1}}{\le} \frac{8Q\lambda}{m\theta} \pth{ \Expect{\tbd}{t_m(\tbd,\cH^2)}
 + 5r}
+\frac{8Q}{m\theta} \cdot \frac{\lambda^2\theta}{1- \lambda \theta} \pth{ \Expect{\tbd}{t_m(\tbd,\cH^2)} + r} \nonumber \\
&\le \frac{8Q\lambda}{m\theta} \cdot \frac{1}{1- \lambda \theta} \pth{ \Expect{\tbd}{t_m(\tbd,\cH^2)}  + 3r}.
\eal
Here $\circled{1}$ follows from (\ref{eq:concentration-square-func-class-tm}) in Lemma~\ref{lemma:concentration-square-func-class-tm} with  $\cH' = \cH^2$ and $H' = H_0^2$, and $\bar t_m(\tbd,\cH') \le t_m(\tbd,\cH')$ with such $\cH'$.

It follows from (\ref{eq:concentration-tildeg-u-greater-m-seg2}), (\ref{eq:concentration-tildeg-u-greater-m-seg3}), and the Markov's inequality that
\bals
\Prob{Z - \Expect{}{Z} \ge t}
\le \exp\pth{- \lambda t +\frac{\lambda^2 C}{\pth{1- \lambda \theta}^2}  } &\le
\exp\pth{- \lambda t + \frac{\lambda^2 C}{1- 2\lambda \theta}  } \\
&\le \exp\pth{-\frac{t^2}{4C+4\theta t} },
\eals
where $C \defeq 8Q/m \cdot \pth{\Expect{\tbd}{t_m(\tbd,\cH^2)} + 5r}$, $\lambda \in (0,1/(2\theta))$ in the second last inequality, and the last inequality follows by taking the infimum over $\lambda \in (0,1/(2\theta))$. As a result, combined with (\ref{eq:td-RC}) in Lemma~\ref{lemma:td-gd-RC} we have
\bal\label{eq:concentration-tildeg-u-greater-m-seg5}
\overline{\tg}(\tbd) \le \Expect{\tbd}{\overline{\tg}(\tbd)} +
4\theta x + 4\sqrt{\frac{10Qrx}{m}}
+ 2\sqrt{2} \inf_{\alpha > 0} \pth{\frac{\Expect{\tbd}{t_m(\tbd,\cH^2)}}{\alpha} + \frac{\alpha Qx}{m}},
\eal
due to the fact that $\Expect{\tbd}{t_m(\tbd,\cH^2)} = \cfrakR^+_m(\cH^2)$.

Since $\tg(\tbd) \ge -2H_0 = \overline{\tg} (\tbd)$ when $\tbd \in \tOmega(Q)$, we have
\bal\label{eq:concentration-tildeg-u-greater-m-EZ-bound}
\Expect{}{Z} = \Expect{\tbd}{\overline{\tg}(\tbd)} \le \Expect{\tbd}{\tg(\tbd)}.
\eal

It follows from Lemma~\ref{lemma:ztd-pi-qi-repeat-number} that
$\Prob{\tOmega(Q)} < m/2^{Q-1}$ and $Z = \overline{\tg}  = \tg$ when
$\tbd \notin \tOmega(Q)$. It then follows from the union bound, (\ref{eq:concentration-tildeg-u-greater-m-seg5}),
and (\ref{eq:concentration-tildeg-u-greater-m-EZ-bound}) that
with probability at least $1-\exp(-x)-m/2^{Q-1}$ over $\tilde \bd$,
\bal\label{eq:concentration-g-u-greater-m-td}
\tg(\tbd) \le \Expect{\tbd}{\tg(\tbd)} +
4\theta x + 4\sqrt{\frac{10Qrx}{m}}
+ 2\sqrt{2} \inf_{\alpha > 0} \pth{\frac{\cfrakR^+_m(\cH^2)}{\alpha} + \frac{\alpha Qx}{m}}.
\eal

(\ref{eq:concentration-g-u-greater-m}) then follows from (\ref{eq:equiv-tildeg-g}),
(\ref{eq:concentration-g-u-greater-m-td}), and the relation between the probabilistic result over $\tbd$ and that over $\bd$ discussed
at the beginning of this proof.
\end{proof}

\subsection{Lemmas for the Proof of
Theorem~\ref{theorem:concentration-tildeg-u-greater-m}}
\label{sec:lemmas-concentration-tildeg-u-greater-m}
We present the lemmas used in the proof of Theorem~\ref{theorem:concentration-tildeg-u-greater-m}, along with the proofs of these lemmas.

First, we have the following lemmas, Lemma~\ref{lemma:ztd-pi-qi-repeat-number}, Lemma~\ref{lemma:ztd-pi-upper-bound}, and Lemma~\ref{lemma:Ztdi-random} for sampling with $\tbd$ instead of $\bd$ , which are the counterparts corresponding to Lemma~\ref{lemma:zd-pi-qi-repeat-number}, Lemma~\ref{lemma:zd-pi-upper-bound}, and Lemma~\ref{lemma:Zdi-random},  respectively.

\begin{lemma}\label{lemma:ztd-pi-qi-repeat-number}
Suppose  $u \ge m$ and $Q \in [2:m]$, and we define the following sets:
\bal
\tOmega^{(1)} (Q) &\defeq \set{\tbd \colon \textup{there exists
a subset } \cP \subseteq [u], \abth{\cP} \ge Q,
\textup{ s.t. } \tilde p(s) = \tilde p(t) \le u, \forall
s,t \in \cP} \label{eq:ztd-pi-repeat-set}, \\
\tOmega^{(2)}(Q) &\defeq \{\tbd \colon \textup{there exists
a subset } \cP \subseteq [u], \abth{\cP} \ge Q,
\textup{ s.t. } \tilde q(s) \le u, \forall s \in \cP,
\nonumber \\
&\hspace{0.3in} \bZ_{\tbd^{(s)}}(\tilde q(s)) = \bZ_{\tbd^{(t)}}(\tilde q(t)), \forall
s,t \in \cP\} \label{eq:ztd-qi-repeat-set},
\eal
where $\tilde q,\tilde p$ are defined in (\ref{eq:tqi})-(\ref{eq:tpi}). We then have
\bal\label{eq:tomega-Q-prob-bound}
\Prob{\tOmega(Q)} <\frac{m}{2^{Q-1}},
\eal
and
\bal\label{eq:ztd-pi-qi-repeat-number-prob-Omega12}
\tOmega^{(1)}(Q) \subseteq \tOmega(Q), \quad
\tOmega^{(2)}(Q) \subseteq \tOmega(Q).
\eal
In particular, when $Q=2$, $\Prob{\Omega(2)} \le m^2/u$.
\end{lemma}
\begin{proof}
This lemma can be proved by applying the argument for proof of
Lemma~\ref{lemma:zd-pi-qi-repeat-number} to $\tbd$ and $\bZ_{\tbd}$
\end{proof}

\begin{lemma}\label{lemma:ztd-pi-upper-bound}
Suppose $\tbd \notin \tOmega(Q)$ where $\tOmega(Q)$ is defined in
(\ref{eq:omega-tQ-def}) and $Q \in [2:m]$. Then For any $h \in \cH$, we have
\bal
\sum\limits_{i \in \tilde \cA_{h,2} } \pth{h(\bZ_{\tbd}(\tilde p(i)))}^2 &\le (Q-1) \sum\limits_{i=1}^m \pth{h(\bZ_{\tbd}(i))}^2. \label{eq:ztd-pi-upper-bound}
\eal
\end{lemma}
\begin{proof}
(\ref{eq:ztd-pi-upper-bound}) can be proved by applying the argument for proof of (\ref{eq:zd-pi-upper-bound}) in Lemma~\ref{lemma:zd-pi-upper-bound} to $\tbd$ and $\bZ_{\tbd}$.
\end{proof}

\begin{lemma}
\label{lemma:Ztdi-random}
Suppose $\tbd \notin \tOmega(Q)$ where $ \tOmega(Q)$ is defined in
(\ref{eq:omega-tQ-def}) and $Q \in [2:m]$. Then for any $h \in \cH$, we have
\bal
\Expect{}{\sum\limits_{i =1}^m \pth{h(\bZ_{\tbd^{(i)}}(i))}^2 \longmid \bd} &\le \frac{nm}{u} T_n(h), \label{eq:Ztdi-random-m-i} \\
 \Expect{}{\sum\limits_{i \in \tilde \cA_{h,1} }
 \pth{h(\bZ_{\tbd^{(i)}}(\tilde q(i)))}^2
\longmid \bd  } &\le\frac{2(Q-1)nm}{u}T_n(h) \label{eq:Ztdi-random-m-q}.
\eal
\end{lemma}
\begin{proof}
(\ref{eq:Ztdi-random-m-i}) and (\ref{eq:Ztdi-random-m-q}) can be
 proved by applying the argument for the proof of
 (\ref{eq:Zdi-random-u-i}) and (\ref{eq:Zdi-random-u-q}) to
 $\tbd$ and $\bZ_{\tbd^{(i)}}$.
\end{proof}

\begin{lemma}
\label{lemma:concentration-square-func-class-tm}
Let $\cH'$ be a class of functions with ranges in $[0,H']$, and $t_m, \bar t_m$ are defined in (\ref{eq:tdm}) and (\ref{eq:def-bar-tdm}).
Suppose $\sup_{h \in \cH'} \cL_n(h) \le r$ for $r >\ 0$. Then
\bal\label{eq:concentration-square-func-class-tm}
\log \Expect{\tbd}{\exp\pth{\lambda\pth{\bar t_m(\tbd,\cH') - \Expect{\tbd}{\bar t_m(\tbd,\cH')}}}} \le \frac{QH'\lambda^2 \pth{ \Expect{\tbd}{t_m(\tbd,\cH')} + r}}{m- QH'\lambda}
\eal
holds for all $\lambda \in (0,m/(QH'))$.
\end{lemma}
\begin{proof}
(\ref{eq:concentration-square-func-class-tm}) is proved by applying the argument for the proof of
(\ref{eq:concentration-square-func-class-tu}) in Lemma~\ref{lemma:concentration-square-func-class-tu} to $t_m$.
\end{proof}

\begin{lemma}
\label{lemma:td-gd-RC}
\bal\label{eq:td-RC}
\Expect{\bd}{t_u(\bd,\cH')}  = \cfrakR^+_u(\cH'), \quad \Expect{\tbd}{t_m(\tbd,\cH')} = \cfrakR^+_m(\cH').
\eal
Moreover, for $g(\bd)$ defined in (\ref{eq:def-g}), we have
\bal\label{eq:gd-RC}
\Expect{\bd}{g(\bd)} \le \cfrakR^+_u(\cH) + \cfrakR^-_m(\cH).
\eal
\end{lemma}
\begin{proof}

$\Expect{\bd}{t_u(\bd)}  = \cfrakR^+_u(\cH')$ follows from the definition of Transductive Complexity in (\ref{eq:TC-def}). We have
\bals
\Expect{\tbd}{t_m(\tbd,\cH')} &= \Expect{\tbd}{\sup_{h \in \cH'} \pth{\cL_h(\set{\bZ_{\tbd}}) - \cL_n(h)}} \\
&=\Expect{\bd}{\sup_{h \in \cH'} \pth{\cL_h(\overline{\set{\bZ_{\bd}}}) - \cL_n(h)}} = \cfrakR^+_m(\cH').
\eals
where the second last equality is due to the fact that
$\set{\bZ_{\tbd}}$ and  $\overline{\set{\bZ_{\bd}}}$ have the same distribution, that is, they are sets of size $m$ sampled uniformly from $[n]$ without replacement, which proves (\ref{eq:td-RC}).

We now prove (\ref{eq:gd-RC}). We first have
\bals
\Expect{\bd}{g(\bd)} &=
\Expect{\bd}{\sup_{h \in \cH} \pth{\cL_u(h)-\cL_h(\overline{\set{\bZ_{\bd}}}) }}  \nonumber \\
&=\Expect{\bd}{\sup_{h \in \cH} \pth{\cL_u(h)-\cL_n(h)+\cL_n(h)-\cL_h(\overline{\set{\bZ_{\bd}}}) }}  \nonumber \\
&\stackrel{\circled{1}}{\le} \underbrace{\Expect{\bd}{\sup_{h \in \cH} \pth{\cL_u(h)-\cL_n(h) }}}_{\cfrakR^+_u(\cH)} + \underbrace{\Expect{\bd}{\sup_{h \in \cH} \pth{\cL_n(h)-\cL_h(\overline{\set{\bZ_{\bd}}}) }}}_{\cfrakR^-_m(\cH)} \\
&=\cfrakR^+_u(\cH)+ \cfrakR^-_m(\cH)
\eals
Here $\circled{1}$ follows from the sub-additivity of supremum.

\end{proof}

\subsection{Special Case of Theorem~\ref{theorem:main-inequality-TLC}}
\label{sec:special-case-prelim-version-appendix}
We consider the special case where $Q=2$ in the proofs of Theorem~\ref{theorem:concentration-g-m-greater-u}
and Theorem~\ref{theorem:concentration-tildeg-u-greater-m}. It follows from the arguments in these two proofs that,
when $m \ge u$, it follows from Lemma~\ref{lemma:zd-pi-qi-repeat-number} that $\Prob{\Omega(Q)} = \Prob{\Omega(2)} \le u^2/m$.
So that when $m \gg u^2$, it follows from the last part of proof of Theorem~\ref{theorem:concentration-g-m-greater-u} that
with probability at least $1-\exp(-x) - u^2/m$ over $\bd$,
\bal\label{eq:concentration-g-m-greater-u-square}
g(\bd) \le &\Expect{\bd}{g(\bd)} + 8\sqrt{\frac{5rx}{u}}
+2\sqrt{2} \inf_{\alpha > 0} \pth{\frac{\cfrakR^+_u(\cH^2) }{\alpha} + \frac{2\alpha x}{u}}  +  \frac {8H_0^2x}{u},
\eal
where (\ref{eq:concentration-g-m-greater-u-square}) follows from
(\ref{eq:concentration-g-m-greater-u-seg4}) in the proof of Theorem~\ref{theorem:concentration-g-m-greater-u} with $Q=2$. Similarly, when
$u \gg m^2$, it follows from the last part of proof of Theorem~\ref{theorem:concentration-tildeg-u-greater-m} that
with probability at least $1-\exp(-x) - m^2/u$ over $\bd$,
\bal\label{eq:concentration-g-u-greater-m-suqare}
g(\bd) \le &\Expect{\bd}{g(\bd)} + 8\sqrt{\frac{5rx}{m}}
+2\sqrt{2} \inf_{\alpha > 0} \pth{\frac{\cfrakR^+_u(\cH^2) }{\alpha} + \frac{2\alpha x}{m}}  +  \frac {8H_0^2x}{m},
\eal
where (\ref{eq:concentration-g-u-greater-m-suqare}) follows from
(\ref{eq:concentration-g-u-greater-m-td}) in the proof of Theorem~\ref{theorem:concentration-tildeg-u-greater-m} with $Q=2$.
(\ref{eq:concentration-g-m-greater-u-square}) and (\ref{eq:concentration-g-m-greater-u-square}) lead to the main result of \citep{yang2025a-concentration-sampling-without-replacement}, that is, its Theorem 3.1.

\subsection{Proof of Theorem~\ref{theorem:main-inequality-TLC-nonnegative-func-class}
and Corollary~\ref{corollary:concentration-gu}}

The proof of Theorem~\ref{theorem:main-inequality-TLC-nonnegative-func-class}
mostly follows the same argument in the proof
 of Proof of Theorem~\ref{theorem:main-inequality-TLC}, with minor changes in the upper bound for the upper variance.
\begin{proof}
[\textbf{Proof of Theorem~\ref{theorem:main-inequality-TLC-nonnegative-func-class} }]
% We first let $\cH = \set{\ell_f \colon f \in \cF}$, then derive the
% concentration inequality for
% \bals
% g(\bd) = \sup_{h \in \cH} \pth{\cL_u(h) - \cL_m(h)}
% =\sup_{f \in \cF} \pth{\cL_u(\ell_f) - \cL_m(\ell_f)}.
% \eals

We first consider the case that $m \ge u$, and show how the use the arguments in the proof of Theorem~\ref{theorem:concentration-g-m-greater-u} to
prove that with probability at least $1-\exp(-x) - \delta$ over $\set{\bZ}$,
\bal
\label{eq:concentration-gd-TLC-nonnegative-func-class-m-greater-u}
g(\bd) \le &\Expect{\bd}{g(\bd)} + 2\sqrt{\frac{H_0\log_2 (4u/\delta)rx}{u}}
+\frac {4H_0\log_2 (4u/\delta)x}{u}
\nonumber \\
&+  \inf_{\alpha > 0} \pth{\frac{\cfrakR^+_u(\cH)}{\alpha} + \frac{\alpha H_0\log_2 (4u/\delta)x}{u}}.
\eal
First, the concentration inequality for $g$ is studied through
the surrogate process $\barg$ defined below:
\bals
\barg(\bd) =
\begin{cases}
g(\bd) & \bd \notin \Omega(Q) \\
-H_0  & \bd \in \Omega(Q).
\end{cases}
\eals
We note that the surrogate process above is slightly different from the surrogate
process $\barg$ defined in (\ref{eq:concentration-g-m-greater-u-surrogate-g}) in the sense that it takes the value of $-H_0$ instead of $-2H_0$ when $\bd \in \Omega(Q)$.
Then following (\ref{eq:concentration-g-m-greater-u-seg1}) in the proof of Theorem~\ref{theorem:concentration-g-m-greater-u} and
noting that $0 \le h(i) \le L_0$ for all $h \in \cH$ and $i \in [n]$
for the case that $\bd \notin \Omega(Q)$, and using the argument for
the case that $\bd \in \Omega(Q)$ in the
proof of Theorem~\ref{theorem:concentration-g-m-greater-u},
we have the following upper bound for $V_+(\barg \mid \bd)$ for all $\bd$:
\bal
\label{eq:concentration-g-m-greater-u-V+-barg-bound-nonnegative-func-class}
V_+(\barg \mid \bd)
\le \frac{H_0Q}{u} \bar t_u(\bd,\cH) +\frac{H_0Qr}{u}.
\eal
Now similar to
(\ref{eq:concentration-g-m-greater-u-seg3})
in the proof of Theorem~\ref{theorem:concentration-g-m-greater-u},
with $Z =\barg(\bd) $ and $\theta = \frac {H_0Q}{u}$,
it follows from (\ref{eq:concentration-g-m-greater-u-V+-barg-bound-nonnegative-func-class}) that
\bals
&\log\Expect{\bd}{\exp\pth{\frac{\lambda V_+(\barg \mid \bd)}{\theta}}} =
\frac{H_0Q\lambda r}{u\theta} +\log \Expect{\bd}{\exp\pth{\frac{H_0Q\lambda}{u\theta} \bar t_u(\bd,\cH) }} \nonumber \\
&\le \lambda
\pth{ \Expect{\bd}{\bar t_u(\bd,\cH)} + r} + \log \Expect{\tbd}{\exp\pth{\lambda \pth{\bar t_u(\tbd,\cH) - \Expect{}{\bar t_u(\bd,\cH)}} }} \nonumber \\
&\stackrel{\circled{1}}{\le} \lambda \pth{ \Expect{\bd}{t_u(\bd,\cH)}
 + r}
+ \frac{\lambda^2\theta}{1- \lambda \theta} \pth{ \Expect{\bd}{t_u(\bd,\cH)} + r} \nonumber \\
&= \frac{\lambda}{1- \lambda \theta} \pth{ \Expect{\bd}{t_u(\bd,\cH)}  + r},
\eals
where
$\circled{1}$ follows from (\ref{eq:concentration-square-func-class-tu}) in Lemma~\ref{lemma:concentration-square-func-class-tu} with
$\cH' = \cH$ and $H' = H_0$, and $\bar t_u(\bd,\cH) \le t_u(\bd,\cH)$.
Now by repeating
the same  argument after
(\ref{eq:concentration-g-m-greater-u-seg3})
in the proof of Theorem~\ref{theorem:concentration-g-m-greater-u}
with $C = H_0Q/u \cdot\pth{ \Expect{\bd}{t_u(\bd,\cH)} + r}$,
we have
\bal
\label{eq:concentration-g-m-greater-u-nonnegative-func-class-seg1}
\barg(\bd) \le \Expect{\bd}{\barg(\bd)} +
4\theta x + 2\sqrt{\frac{H_0Qrx}{u}}
+  \inf_{\alpha > 0} \pth{\frac{\cfrakR^+_u(\cH)}{\alpha} + \frac{\alpha H_0Qx}{u}}.
\eal
(\ref{eq:concentration-gd-TLC-nonnegative-func-class-m-greater-u})
then follows from
(\ref{eq:concentration-g-m-greater-u-nonnegative-func-class-seg1}),
combined with the facts that $\Prob{\Omega(Q)} < u/2^{Q-1}$ due to Lemma~\ref{lemma:zd-pi-qi-repeat-number},  $Z = \barg  = g$ when $\bd \notin \Omega(Q)$, and the union bound.

We then consider the case that $u \ge m$. In this case, following the strategy used in the proof of Theorem~\ref{theorem:concentration-tildeg-u-greater-m}, the concentration
inequality for $g$ is studied through the following process defined in
(\ref{eq:tg-def}) in terms of
$\tbd$:
\bals
\tg(\tbd) = \sup_{h \in \cH} \pth{\cL_h(\overline{\set{\bZ_{\tbd}}})
- \cL_h(\set{\bZ_{\tbd}})}.
\eals
We can then apply changes similar to those above
to the arguments in the proof of
Theorem~\ref{theorem:concentration-tildeg-u-greater-m} to
show that with probability at least $1-\exp(-x) - m/2^{Q-1}$ over $\tilde \bd$,
\bal
\label{eq:concentration-gd-TLC-nonnegative-func-class-u-greater-m}
\tg(\tbd) \le \Expect{\tbd}{\tg(\tbd)} +
\frac {4H_0Qx}{m} + 2\sqrt{\frac{H_0Qrx}{m}}
+  \inf_{\alpha > 0} \pth{\frac{\cfrakR^+_m(\cH)}{\alpha} + \frac{\alpha H_0Qx}{m}}.
\eal
(\ref{eq:concentration-gd-TLC-nonnegative-func-class})
then follows from
(\ref{eq:concentration-gd-TLC-nonnegative-func-class-m-greater-u})
and (\ref{eq:concentration-gd-TLC-nonnegative-func-class-u-greater-m}), due to the relation between the probabilistic result over $\tbd$ and that over $\bd$ discussed in the proof of Theorem~\ref{theorem:concentration-tildeg-u-greater-m}.

\end{proof}

\begin{proof}[\textbf{\textup{Proof of Corollary~\ref{corollary:concentration-gu}} }]
We have $g^+_u(\bd)  = \frac mn g(\bd)$, therefore,
\bals
g^+_u(\bd) - \Expect{\bd}{g^+_u(\bd)} = \frac mn \pth{g(\bd) - \Expect{\bd}{g(\bd)}},
\eals
where $g$ is defined in (\ref{eq:def-g}). Then the bound for
$g^+_u - \Expect{}{g^+_u}$ in (\ref{eq:concentration-gu})
follows from (\ref{eq:concentration-gd-TLC}) in
Theorem~\ref{theorem:main-inequality-TLC}.

We further have
\bal\label{eq:concentration-gu--gd}
g^-_u(\bd) =
\frac mn \sup_{h \in \cH}
\pth{\cL_h(\overline{\set{\bZ_{\bd}}})-\cL_h(\set{\bZ_{\bd}})}
\overset{\textup{dist}}{=}\frac mn \sup_{h \in \cH} \pth{\cL_h(\set{\bZ_{\tbd}})-\cL_h(\overline{\set{\bZ_{\tbd}}}) },
\eal
where $\overset{\textup{dist}}{=}$ indicates that the random variables on both sides follow the same distribution, since both
$\set{\overline{\set{\bZ_{\bd}}}}$ and $\set{\bZ_{\tbd}}$ are both subsets
of size $m$ sampled uniformly from $[n]$ without replacement.
Taking $\set{\bbx_i}_{i \in \overline{\bZ_{\tbd}}}$ as the training features and $\set{\bbx_i}_{i \in \bZ_{\tbd}}$ as the test features, then we can
repeat the proof of Theorem~\ref{theorem:main-inequality-TLC}  to obtain the following concentration inequality: for every $x > 0$ and $\delta \in (0,1)$, with probability at least $1-\exp(-x) - \delta$ over $\tbd$,
\bal\label{eq:concentration-gd-u-m-swapped}
&\sup_{h \in \cH} \pth{\cL_h(\set{\bZ_{\tbd}})-\cL_h(\overline{\set{\bZ_{\tbd}}})} -
\Expect{\tbd}{\sup_{h \in \cH} \pth{\cL_h(\set{\bZ_{\tbd}})-\cL_h(\overline{\set{\bZ_{\tbd}}})}} \nonumber \\
&\le  4\sqrt{\frac{10rx}
{N_{u,m,\delta}}} +2\sqrt{2} \inf_{\alpha > 0}
\pth{\frac{\cfrakR^+_{\min\set{u,m}}(\cH^2) }{\alpha} +
\frac{\alpha x}{N_{u,m,\delta}}}
+  \frac {4 H_0^2x}{N_{u,m,\delta}}.
\eal
Then the bound for
$g^-_u - \Expect{}{g^-_u}$ in (\ref{eq:concentration-gu})
follows from (\ref{eq:concentration-gu--gd}) and
(\ref{eq:concentration-gd-u-m-swapped}).
\end{proof}

\subsection{Proof of Lemma~\ref{lemma:TLC-delta-ell-f} and
Theorem~\ref{theorem:TLC-delta-star-ell-f} }
\begin{proof}[\textbf{\textup{Proof of Lemma~\ref{lemma:TLC-delta-ell-f}}}]
%We first note that $\Delta_{\cF}$ is symmetric. To see this, if $h = \ell_{f_1}-\ell_{f_2} %\in \Delta_{\cF}$ with $f_1,f_2 \in \cF$, then $-h = \ell_{f_2}-\ell_{f_1} %\in \cF$.

We only need to prove that $T_n(h) \le \tT_n(h)$. For any $h \in \cH$, by the definition of $\tT_n(h)$ in (\ref{eq:tTn-def}), for every $\eps > 0$, there exist $f_1,f_2 \in \cF$ such that $h = \ell_{f_1}-\ell_{f_2}$, and
$2B \cL_n(\ell_{f_1} - \ell_{f^*_n}) + 2B \cL_n(\ell_{f_2} - \ell_{f^*_n})
< \tT_n(h) + \eps$. Therefore,
\bals
T_n(h) &= \cL_n\pth{(\ell_{f_1}-\ell_{f_2})^2} \\
&\le 2 T_n\pth{\ell_{f_1} - \ell_{f^*_n}} + 2 T_n\pth{\ell_{f_2} - \ell_{f^*_n}} \\
&\le 2B \cL_n(\ell_{f_1} - \ell_{f^*_n}) + 2B \cL_n(\ell_{f_2} - \ell_{f^*_n})
< \tT_n(h) +\eps,
\eals
where the first  inequality follows from the Cauchy-Schwarz inequality, and the second inequality is due to Assumption~\ref{assumption:main}(2).
It follows that $T_n(h) \le \tT_n(h)$.
As a result, we can apply Theorem~\ref{theorem:TLC} with $\tT_n(\cdot)$ defined in this corollary.
\end{proof}
% \begin{remark}
% We remark that $\Delta_{\cF}$ is also synmmetric. To see this, let $h = % \ell_{f_1}-\ell_{f_2} \in \Delta_{\cF}$ with $f_1,f_2 \in \cF$. Then $-h % = \ell_{f_2} - \ell_{f_1} \in \Delta_{\cF}$, so $\Delta_{\cF}$ is symmetric.
% \end{remark}
%
%
%

\begin{lemma}\label{lemma:concentration-gu-gm-Delta*}
Suppose Assumption~\ref{assumption:main} holds. Furthermore, let $\psi^*_{u,m}$ be a sub-root function and $r^*$ is the fixed point of $\psi^*_{u,m}$.
%Let $\psi_m$ be another sub-root function and $r_m^*$ is the fixed point %of $\psi_m$.
Assume that for all $r \ge r^*$,
\bal
\psi^*_{u,m} (r) \ge  \max\Bigg\{&\Expect{}{\sup_{h \colon h \in\Delta^*_{\cF},B \cL_n(h) \le r} R^-_{u} h}, \Expect{}{\sup_{h \colon h \in\Delta^*_{\cF},B \cL_n(h) \le r} R^-_{m} h}, \nonumber \\
&\Expect{}{\sup_{h \colon h \in\Delta^*_{\cF},B \cL_n(h) \le r} R^+_{\min\set{u,m}} h^2}\Bigg\}, \label{eq:TLC-cond-um-delta-star-ell-f}
\eal
% and for all $r \ge r_m^*$,
% \bal
% \psi_m(r) \ge \max \Bigg\{&\Expect{}{\sup_{h \colon h \in\Delta^*_{\cF},B % \cL_n(h) \le r} R^+_{m,\bd} h},  \nonumber \\
% &\Expect{}{\sup_{h \colon h \in\Delta^*_{\cF},B \cL_n(h) \le r} R^-_{m,\bd} % h, \Expect{}{\sup_{h \colon h \in \Delta^*_{\cF},B \cL_n(h) \le r} R^+_{m,\bd} % h^2}} \Bigg\}. \label{eq:TLC-cond-m-delta-star-ell-f}
% \eal
Then for any fixed constant $K_0 > 1$, there exists an absolute positive constant $\hat c_2$ depending on $K_0,L_0$ such that for every $x > 0$
and every $\delta \in (0,1)$, with probability at least $1-\exp(-x)-\delta$ over $\set{\bZ}$,
\bal\label{eq:concentration-gm-Delta*-fixedpoint}
\cL_n(h) - \cL_m(h)\le\frac{B\cL_n(h)}{K_0} + \hat c_2\pth{r^* + \frac{x}{\min\set{u,m}}}, \quad \forall h \in \Delta^*_{\cF}.
\eal
Similarly, with probability at least $1-\exp(-x)-\delta$ over $\set{\bZ}$,
\bal\label{eq:concentration-gu--Delta*-fixedpoint}
\cL_n(h) - \cL_u(h) \le\frac{B\cL_n(h)}{K_0} + \hat c_2\pth{r^* +
\frac{x}{\min\set{u,m}}},  \quad \forall h \in \Delta^*_{\cF}.
\eal
\end{lemma}
\begin{proof}
We let $\cH = \Delta^*_{\cF}$, and define
\bal
g^-_m(\bd) &\defeq \sup_{h \in \cH} \pth{\cL_n(h) - \cL_m(h)}
= \sup_{h \in \cH} \pth{\cL_n(h) - \cL_h(\overline{\set{\bZ_{\bd}}})}. \label{eq:gm-def}
\eal

It can be verified that
\bals
g^-_m(\bd) = \frac un\sup_{h \in \cH} \pth{\cL_u(h)-\cL_m(h) } = \frac un g(\bd),
\eals
%where the first equality is due to the fact that $\set{\bZ_{\bd}}$ and $\overline{\set{\bZ_{\tbd}}}$ %are both sets of size $u$ sampled uniformly from $[n]$ without replacement. %Here $\overset{\textup{dist}}{=}$ indicates the random variables on both %sides follow the same distribution.
As a result, $\Expect{\bd}{g^-_m(\bd)} =u/n \cdot \Expect{\bd}{g(\bd)}$ and
\bals
g^-_m(\bd) - \Expect{\bd}{g^-_m(\bd)} = \frac un \pth{g(\bd) - \Expect{\bd}{g(\bd)}},
\eals
and it follows from (\ref{eq:concentration-gd-TLC}) in Theorem~\ref{theorem:main-inequality-TLC}
 that with probability at least $1-\exp(-x)-\delta$ over $\set{\bZ}$,
\bal\label{eq:concentration-gu-gm-seg1}
%\resizebox{1\hsize}{!}{%
&g^-_m(\bd) - \Expect{\bd}{g^-_m(\bd)}
\nonumber \\
&\le  4\sqrt{\frac{10rx}
{N_{u,m,\delta}}} +  2\sqrt{2} \inf_{\alpha > 0} \pth{\frac{\bth{\cfrakR^+_{\min\set{u,m}}(\cH^2)}_+}{\alpha} + \frac{\alpha x}{N_{u,m,\delta}}}+\frac {4H_0^2x}{N_{u,m,\delta}},
\eal
where $\bth{\cdot}_+ = \max\set{\cdot,0}$. It follows from
(\ref{eq:concentration-gu}) in Corollary~\ref{corollary:concentration-gu}
that with probability at least $1-\exp(-x)-\delta$ over $\set{\bZ}$,
\bal\label{eq:concentration-gu-gm-seg2}
&g^-_u(\bd) - \Expect{\bd}{g^-_u(\bd)}
\nonumber \\
&\le  4\sqrt{\frac{10rx}
{N_{u,m,\delta}}} +  2\sqrt{2} \inf_{\alpha > 0} \pth{\frac{\bth{\cfrakR^+_{\min\set{u,m}}(\cH^2)}_+}{\alpha} + \frac{\alpha x}{N_{u,m,\delta}}}+\frac {4H_0^2x}{N_{u,m,\delta}}.
\eal
%$\sup_{h \colon h \in\Delta^*_{\cF}} R^-_{u,\bd} h \ge 0$ and $\sup_{h \colon h \in\Delta^*_{\cF},} R^-_{m} h \ge 0$
We note that $\sup_{h \colon h \in\Delta^*_{\cF}} R^+_{\min\set{u,m}} h^2 \ge 0$ holds because $0 \in \Delta^*_{\cF} $, so that the $\bth{\cdot}_+$ notations in (\ref{eq:concentration-gu-gm-seg1}) and (\ref{eq:concentration-gu-gm-seg2}) can be removed.

Furthermore, we have $\Expect{\bd}{g^-_m(\bd)} =\cfrakR^-_m(\cH)$ and
$\Expect{\bd}{g^-_u(\bd)} =\cfrakR^-_u(\cH)$.
For any $h \in \cH$ such that $h = \ell_f - \ell_{f^*_n}$ with $f \in \cF$, we set $\tT_n(h) = B\cL_n(h)$. It follows from Assumption~\ref{assumption:main}(2) that
$T_n(h) \le \tT_n(h)$ for all $h \in \Delta^*_{\cF}$.

With $\psi^*_{u.m}$ given in this lemma,
by repeating the proof of Theorem~\ref{theorem:TLC}
to (\ref{eq:concentration-gu-gm-seg1}) and (\ref{eq:concentration-gu-gm-seg2}) with the fixed constant $K_0 > 1$, we have that for all $h \in \cH = \Delta^*_{\cF}$, with high probability,
\bals
\cL_n(h) - \cL_m(h) &\le \frac{B\cL_n(h)}{K_0} + \hat c_2 \pth{r^* + \frac{x}{N_{u,m,\delta}}}, \\
\cL_n(h) - \cL_u(h) &\le \frac{B\cL_n(h)}{K_0} + \hat c_2 \pth{r^* + \frac{x}{N_{u,m,\delta}}},
\eals
where $\hat c_2$ is a positive constant depending on $K_0$ and $L_0$.

\end{proof}

\begin{proof}[\textbf{\textup{Proof of Theorem~\ref{theorem:TLC-delta-star-ell-f}}}]
Let $\cH =  \Delta^*_{\cF}$. It follows from
 (\ref{eq:concentration-gu--Delta*-fixedpoint}) in
 Lemma~\ref{lemma:concentration-gu-gm-Delta*} that with high probability, for all $h \in \cH$,
\bals
\cL_n(h) - \cL_u(h) &\le \frac{B\cL_n(h)}{K_0} + \hat c_2\pth{r^* + \frac{x}{N_{u,m,\delta}}}
\eals
holds for a fixed constant $K_0 > 1$, and $\hat c_2$ depends on $K_0$ and $L_0$. We set $h = \ell_{\hat f_{\bd,u}} - \ell_{f^*_n}$ in the above inequality, and note that
$\cL_u(h) = \cL_{\ell_{\hat f_{u}}}(\bZ_{\bd}) - \cL_{\ell_{f^*_n}}(\bZ_{\bd}) \le 0$ due to the optimality of $\hat f_{u}$. Let $K_0 >B$, then the first upper bound in (\ref{eq:TLC-bound-delta-star-ell-f}) is proved by
the above inequality with constant $c_2 = \hat c_2/\pth{1-B/K_0}$.

Moreover, it follows from (\ref{eq:concentration-gm-Delta*-fixedpoint}) in Lemma~\ref{lemma:concentration-gu-gm-Delta*}
that with high probability,
for all $h \in \cH$,
\bals
\cL_n(h) - \cL_m(h)  \le\frac{B\cL_n(h)}{K_0} + \hat c_2\pth{r^* + \frac{x}{N_{u,m,\delta}}}.
\eals
We set $h = \ell_{\hat f_{m}} - \ell_{f^*_n}$ in the above inequality, and note that
$\cL_m(h) = \cL_m(\ell_{\hat f_{m}}) - \cL_m(f^*_n) \le 0$ due to the optimality of $\hat f_{m}$. Let $K_0 > B$, then the second upper bound in (\ref{eq:TLC-bound-delta-star-ell-f}) is proved by the above inequality with the same constant $c_2 = \hat c_2/\pth{1- B/K_0}$.

\end{proof}

\subsection{Proof of Theorem~\ref{theorem:TLC-delta-ell-f-excess-risk-upper-bound-VC-dim}}

We need the definition of $\eps$-net and covering number presented as follows, which are necessary for the proof of
Theorem~\ref{theorem:TLC-delta-ell-f-excess-risk-upper-bound-VC-dim}.
\begin{definition}\label{def:eps-net}
($\eps$-net) Let $(X, d)$ be a metric space and let $ \eps > 0$. A subset $N_{\eps}(X,d)$ is called an $\eps$-net of $X$ if for every point $x \in X$, there exists some point $y \in N_{\eps}(X,d)$ such that $d(x,y) \le \eps$. The minimal cardinality of an $\eps$-net of $X$, if finite, is
denoted by $N(X, d, \eps)$ and is called the covering number of $X$
at scale $\eps$.
\end{definition}

\begin{proof}
[\textbf{Proof of
Theorem~\ref{theorem:TLC-delta-ell-f-excess-risk-upper-bound-VC-dim}}]

% It follows from Assumption~\ref{assumption:Lipschitz-loss-Tn-f-Ln-ellf} that for all $h \in \Delta^*_{\cF}$, $T_n(h) \le B'L^2 \cL_n(h)$. To see this, let $h = \ell_{f_1}-\ell_{f^*}$ with $f_1,f_2 \in \cF$. Then
% $T_n(h) = T_n(\ell_{f_1}-\ell_{f^*_n}) \le L^2 T_n (f_1-f^*) \le B'L^2 \cL_n(\ell_{f_1}-\ell_{f^*}) = B'L^2 \cL_n(h)$. This inequality indicates that  Assumption~\ref{assumption:main} (2) holds with $B =  B'L^2$. As a result,  Assumption~\ref{assumption:main} holds.

% Let $\cF-f^*$ denote the function class $\set{f - f^* \colon f \in \cF}$. Because the VC-dimension of $\cF$ is $\dVC$, it can be verified that the VC-dimension of $\cF - f^*$ and $\set{f^2 \colon f \in \cF - f^*}$ are also $\dVC$.

% Let $h = \ell_{f_1}-\ell_{f_2} \in \Delta_{\cF}$ with $f_1,f_2 \in \cF$, and $\tT_n(h) \le r$. By the definition of $\tT_n$, there exist $f_1,f_2 \in \cF$ such that $h = \ell_{f_1}-\ell_{f_2}$ and $2B \cL_n(\ell_{f_1} - \ell_{f^*}) + 2B \cL_n(\ell_{f_2} - \ell_{f^*}) \le r'$ for arbitrary $r' > r$. For simplicity of notations we set $r' = 1.1r$. Let $\bsigma = \set{\sigma_i}_{i=1}^{\max\set{u,m}}$ be i.i.d. Rademacher variables. For $r \ge 0$ we have

We apply Theorem~\ref{theorem:TLC-nonnegative-func-class} to prove
this theorem.

First, we define the function class
\bals
\cH = \set{\ell_f \colon f \in \cF}
\eals
with $\ell_f(i) = (f(\bbx_i)-y_i)^2 = \indict{f(\bbx_i) \neq y_i}$ for all $i \in [n]$.
It follows from Theorem~\ref{theorem:TC-RC} that if
$\psi_{u,m}(r)$ is a sub-root function which  satisfies
\bal\label{eq:psi-u-m-goal}
\psi_{u,m}(r) \ge 2\max\set{
\Expect{}{\sup_{h \colon h \in \cH,\cL_n(h) \le r} R^{(\textup{ind})}_{\bsigma,\bY^{(u)}} h},
\Expect{}{\sup_{h \colon h \in \cH,\cL_n(h) \le r} R^{(\textup{ind})}_{\bsigma,\bY^{(m)}} h}}
\eal
where both $\bY^{(u)} = \set{Y_1,\ldots,Y_u}$ and
$\bY^{(m)} = \set{Y_1,\ldots,Y_m}$ are sampled uniformly and independently from $[n]$ with replacement, then such
$\psi_{u,m}$ meets the condition (\ref{eq:TLC-cond-psi-nonnegative}) for
Theorem~\ref{theorem:TLC-nonnegative-func-class}.
To this end, we now derive the upper bounds for
$\Expect{}{\sup_{h \colon h \in \cH,\cL_n(h) \le r} R^{(\textup{ind})}_{\bsigma,\bY^{(u)}} h}$ and
$\Expect{}{\sup_{h \colon h \in \cH,\cL_n(h) \le r} R^{(\textup{ind})}_{\bsigma,\bY^{(m)}} h}$.

Let $\cF-f^*$ denote the function class $\set{f - f^* \colon f \in \cF}$. Because the VC-dimension of $\cF$ is $\dVC$, it can be verified that the VC-dimension of
the function class $\set{f^2 \colon f \in \cF - f^*}$ is also $\dVC$.
As a result, it  follows from Proposition~\ref{proposition:concentration-vc-class-ind}
with $K_0=1$
that for an absolute positive constant $c > 0$, for every $x' > 0$, with probability at least $1-\exp(-x')$ over $\bY^{(u)}$,
\bal\label{eq:TLC-delta-ell-f-excess-risk-upper-bound-VC-dim-seg1}
\frac{1}{u}\sum\limits_{i=1}^u f^2(Y_i)  - 2
T_n(f)
\le c \pth{\frac{\dVC \log (u/\dVC)}{u}+\frac {x'}u},
\quad \forall f \in\cF - f^*.
\eal
We define
\bals
\cA \defeq \set{\bY^{(u)} \colon
\textup{ (\ref{eq:TLC-delta-ell-f-excess-risk-upper-bound-VC-dim-seg1}) holds}}
\eals
be the set of the values of $\bY^{(u)}$ such that
(\ref{eq:TLC-delta-ell-f-excess-risk-upper-bound-VC-dim-seg1}) holds, so we have $\Prob{\overline{\cA}} \le \exp(-x')$.

We then derive the upper bound for
$\Expect{}{\sup_{h \colon h \in \cH,\cL_n(h) \le r} R^{(\textup{ind})}_{\bsigma,\bY^{(u)}} h}$ as follows.

\bal\label{eq:TLC-delta-ell-f-excess-risk-upper-bound-VC-dim-u}
&\Expect{}{\sup_{h \colon h \in \cH, \cL_n(h) \le r} R^{(\textup{ind})}_{\bsigma,\bY^{(u)}} h} =
\Expect{\bY^{(u)},\bsigma}{\sup_{h \colon h \in \cH, \cL_n(h) \le r} R^{(\textup{ind})}_{\bsigma,\bY^{(u)}}\ell_{f}}
\nonumber \\
&\stackrel{\circled{1}}{\le} 2\Expect{\bY^{(u)},\bsigma}{\sup_{f \in \cF-f^* \colon  T_n(f) \le r} R^{(\textup{ind})}_{\bsigma,\bY^{(u)}} f} \nonumber \\
&\stackrel{\circled{2}}{=}
2\Expect{\bY^{(u)} \in \cA,\bsigma}{\sup_{f \in \cF-f^* \colon  T_n(f) \le r
}
R^{(\textup{ind})}_{\bsigma,\bY^{(u)}} f}  \nonumber \\
&\phantom{=}+
2\Expect{\bY^{(u)} \in \overline{\cA},\bsigma}{\sup_{f \in \cF-f^* \colon  T_n(f) \le r}
R^{(\textup{ind})}_{\bsigma,\bY^{(u)}} f}
\nonumber \\
&\stackrel{\circled{3}}{\le}
2\Expect{\bY^{(u)} \in \cA,\bsigma}{\sup_{f \in \cF-f^* \colon
\norm{f}{L^2(P_u(\bY^{(u)}))}^2 \le2r+ C(u,\dVC,x') } R^{(\textup{ind})}_{\bsigma,\bY^{(u)}} f}  \nonumber \\
&\phantom{=}+
2\exp(-x') \sup_{\bY^{(u)} \in \overline{\cA} } \Expect{\bsigma}{\sup_{f \in \cF-f^* \colon  T_n(f) \le r}
R^{(\textup{ind})}_{\bsigma,\bY^{(u)}} f} \nonumber \\
&\stackrel{\circled{4}}{\le} \frac{2C_0}{\sqrt u}
\Expect{\bY^{(u)}}{
\int_{0}^{\min\set{\sqrt{2 r + C(u,\dVC,x')},1/2}} \sqrt{\log N \pth{\cF-f^*,\norm{\cdot}{L^2(P_u(\bY^{(u)}))},\delta} } \diff \delta } \nonumber \\
&+ \frac{2C_0}{\sqrt u} \exp(-x') \int_{0}^{1/2} \sqrt{\log N \pth{\cF-f^*,\norm{\cdot}{L^2(P_u(\bY^{(u)}))},\delta} } \diff \delta\nonumber \\
&\stackrel{\circled{5}}{\le} C'_1  \sqrt{\frac{\dVC \log (u/\dVC)}{u}} \sqrt{r}
+C'_2 \frac{\dVC \pth{\log (ue/\dVC)}}{u}.
\eal
Here $C'_0,C'_1, C'_2$ are absolute positive constants.
$\circled{1}$ is due to the contraction property of inductive Rademacher complexity
in Theorem~\ref{theorem:contraction-RC} and the fact that the loss function $\ell_f$ is $2$-Lipschitz continuous as a function of $f-f^*$. In  $\circled{2}$, we have the definition
$\Expect{\bY^{(u)} \in \cA,\bsigma}{\cdot} \defeq
\Expect{\bsigma}{\cdot \times \indict{\bY^{(u)} \in \cA}}$.
When $\bY^{(u)} \in \cA$ and $T_n(f) \le r$, we have
\bals
\frac{1}{u}\sum\limits_{i=1}^u f^2(Y_i) \le 2T_n(f)+
\underbrace{c \pth{\frac{\dVC \log (u/\dVC)}{u}+\frac {x'}u}}_{\defeq C(u,\dVC,x')}
\le 2r+ C(u,\dVC,x'),
\eals
so that $\circled{3}$ holds together with $\Prob{\overline{\cA}} \le \exp(-x')$.
It follows from the Dudley's integral entropy bound in
Theorem~\ref{theorem:dudley-integral-entropy-bound} that for all $\bY^{(u)}$, we have
\bals
&\Expect{\bsigma}{\sup_{f \in \cF-f^* \colon
\norm{f}{L^2(P_u(\bY^{(u)}))}^2 \le2r+ C(u,\dVC,x') } R^{(\textup{ind})}_{\bsigma,\bY^{(u)}} f} \nonumber \\
&\le\frac{C_0}{\sqrt u}
\int_{0}^{\min\set{\sqrt{2 r + C(u,\dVC,x')},1/2}} \sqrt{\log N \pth{\cF-f^*,\norm{\cdot}{L^2(P_u(\bY^{(u)}))},\eps} } \diff \eps,
\eals
so that $\circled{4}$ holds, where $\norm{\cdot}{L^2(P_u(\bY^{(u)}))}$ in $\circled{4}$ is defined by
$\norm{f}{L^2(P_u(\bY^{(u)}))}^2 = 1/u \cdot \sum_{i=1}^u f^2(Y_i)$, and
$P_u(\bY^{(u)})$ is the probability measure for the empirical distribution over $\bY^{(u)}$ with
$\Expect{P_u(\bY^{(u)})}{f} = 1/u \cdot \sum_{i=1}^u f(Y_i)$.
Furthermore, it can be verified that the VC-dimension of
the function class $\set{\abth{f} \colon f \in \cF - f^*}$ is $\dVC$. It then
follows from {\citep[Theorem 2.6.7]{van1996weak}} that for any probability measure $Q$,
\bals
N \pth{\cF-f^*,\norm{\cdot}{L^2(Q)},\eps}
\lesssim (\dVC+1) (16e)^{\dVC+1}
\pth{\frac{1}{\eps}}^{2\dVC},
\eals
so that
$\log N \pth{\cF-f^*,\norm{\cdot}{L^2(Q)},\eps}
\lesssim \dVC \log(1/\eps)$ for any $\eps \in (0,1/2]$. Let $Q = P_u(\bY^{(u)})$, by plugging the above upper bound for
$\log N \pth{\cF-f^*,\norm{\cdot}{L^2(P_u(\bY^{(u)}))},\eps}$ with $r' \defeq \min\set{2 r + C(u,\dVC,x'),1/4}$, we have
\bal\label{eq:TLC-delta-ell-f-excess-risk-upper-bound-VC-dim-seg2}
&\int_{0}^{\sqrt{r'}} \sqrt{\log N \pth{\cF-f^*,\norm{\cdot}{L^2(P_u(\bY^{(u)}))},\eps} } \diff \eps \nonumber \\
&\lsim \sqrt{\dVC} \int_{0}^{r'}
\sqrt{\log{\frac1{\eps}} }
\diff \eps \nonumber \\
&\stackrel{\circled{6}}{\lsim}
\sqrt{\dVC} \sqrt{\int_{0}^{r'} \log{\frac1{\eps}}
\diff \eps} \nonumber \\
&\stackrel{\circled{7}}{\lsim} \sqrt{{\dVC}\pth{r'\log{\frac{1}{r'}}+r'}}
\nonumber \\
&\stackrel{\circled{8}}{\lsim}
\sqrt{{\dVC}\pth{r'\log{\frac u{\dVC e}} + \frac \dVC u+r'}}
\nonumber \\
&\stackrel{\circled{9}}{\asymp} \sqrt{\dVC \log{\frac u{\dVC }}} \pth{\sqrt r+\sqrt{C(u,\dVC,x')}}
+ \frac {\dVC }{\sqrt u}.
\eal
where $\circled{6}$ follows from the Jensen's inequality, and
$\circled{7}$ follows from integration by parts. Since
$r'\log(1/{r'})$ is concave as a function of $r'$,
%apply concavity at point \dVC/u
we have
$r'\log(1/{r'}) \le r'\log(u/\dVC e)  + \dVC /u$, so that
$\circled{8}$ holds. $\circled{9}$ follows by plugging
$r'$ in $\circled{8}$.

Similarly, it can also be verified that
\bal\label{eq:TLC-delta-ell-f-excess-risk-upper-bound-VC-dim-seg3}
&\int_{0}^{1/2} \sqrt{\log N \pth{\cF-f^*,\norm{\cdot}{L^2(P_u(\bY^{(u)}))},\eps} } \diff \eps \lsim  \sqrt{\dVC}.
\eal
 $\circled{5}$ then
follows by plugging upper bounds in
(\ref{eq:TLC-delta-ell-f-excess-risk-upper-bound-VC-dim-seg2})
and
(\ref{eq:TLC-delta-ell-f-excess-risk-upper-bound-VC-dim-seg3})
in $\circled{4}$ with
\bals
x' = \dVC \log (u/\dVC),
\eals
and $C(u,\dVC,x') \asymp \dVC \log{\frac u{\dVC }} /u$ with such $x'$.

Now repeating the argument for
(\ref{eq:TLC-delta-ell-f-excess-risk-upper-bound-VC-dim-u}) and
replacing $u$ with $m$,
we have
\bal\label{eq:TLC-delta-ell-f-excess-risk-upper-bound-VC-dim-m}
\Expect{}{\sup_{h \in \cH \colon \cL_n(h) \le r} R^{(\textup{ind})}_{\bsigma,\bY^{(m)}} h}
\le C'_1  \sqrt{\frac{\dVC \log (m/\dVC)}{m}} \sqrt{r}
+C'_2 \frac{\dVC \pth{\log (me/\dVC)}}{m}.
\eal
It then follows from the fact that $u \ge m$,
(\ref{eq:TLC-delta-ell-f-excess-risk-upper-bound-VC-dim-u}),
(\ref{eq:TLC-delta-ell-f-excess-risk-upper-bound-VC-dim-m}),
and (\ref{eq:psi-u-m-goal}) that we can set
\bals
\psi_{u,m}(r)
=2C'_1  \sqrt{\frac{\dVC \log  (m/\dVC)}{m}} \sqrt{r}
+2C'_2 \frac{\dVC \log  (me/\dVC)}{m}.
\eals
For any $r \ge 0$ such that
$r \le r_{u,m}$, we have $r \le \psi_{u,m}(r)$. By solving for $r$
in this inequality, we have
\bal\label{eq:TLC-delta-ell-f-excess-risk-upper-bound-VC-dim-seg3-post}
r \lsim  \frac{\dVC \log   (me/\dVC)}{m}.
\eal
It follows from
(\ref{eq:TLC-bound-g-upper-bound-nonnegative-func-class})
and (\ref{eq:TLC-delta-ell-f-excess-risk-upper-bound-VC-dim-seg3-post})
that with probability at least $1-\exp(-x)-\delta$, for every $h \in \cH$,
\bal\label{eq:TLC-delta-ell-f-excess-risk-upper-bound-VC-dim-seg4}
\cL_u(h)
&\le \frac{K_0+1}{K_0-1}\cL_m(h) + c''_0 \frac{\dVC \log   (me/\dVC)}{m}
+ \frac{c'_1x}{N_{u,m,\delta}} \nonumber \\
&=\frac{K_0+1}{K_0-1}\cL_m(h) + c''_0 \frac{\dVC \log   (me/\dVC)}{m}
+c'_1 \frac{\pth{\log_2 (4m/\delta)}x}{m},
\eal
where $c''_0$ is an absolute positive constant.
We set $h= \ell_f$ to the empirical minimizer
$h = \ell_{\hat f_m}$ which minimizes the training loss
$\cL_m(h)$ in (\ref{eq:TLC-delta-ell-f-excess-risk-upper-bound-VC-dim-seg4}), then
$0 \le \cL_m(\ell_{\hat f_m}) \le \cL_m(\ell_{f^*}) = 0$
so that $\cL_m(\ell_{\hat f_m}) = 0$. (\ref{eq:TLC-delta-ell-f-excess-risk-upper-bound-VC-dim})
then follows from (\ref{eq:TLC-delta-ell-f-excess-risk-upper-bound-VC-dim-seg4}).

%where $N_{u,m,\delta}$ is defined in (\ref{eq:N-um-delta-def}).

\end{proof}

\subsection{Proof of Theorem~\ref{theorem:TLC-kernel}: TLC Excess Risk Bound for Transductive Kernel Learning}
\label{sec:TKL-proof}

Before presenting the proof of Theorem~\ref{theorem:TLC-kernel}, we introduce more background on the space $\cH_{\bX_n}$ and $\cH_K$.
We recall that $K \colon \cX \times \cX \to \RR$ is a positive definite kernel defined on the compact set $\cX \times \cX$ with $\cX \subseteq \RR^d$. When $K$ is also continuous  on the compact set $\cX \times \cX$, the integral operator $T_K \colon L^2(\cX,\mu) \to L^2(\cX,\mu^{(P)}), \pth{T_K f}(\bx) \defeq \int_{\cX} K(\bx,\bx') f(\bx') \diff \mu^{(P)}(\bx')$ is a positive, self-adjoint, and compact operator on $\cX$, where $\mu^{(P)}$ is the probability measure of the unknown data distribution $P$ supported on the compact set $\cX$. We let $\max_{\bx \in \cX} K(\bx,\bx) \defeq \tau_0^2 = \Theta(1) < \infty$. By spectral theorem, there is a countable orthonormal basis $\set{e_j}_{j \ge 1} \subseteq L^2(\cX,\mu)$ and $\set{\lambda_j}_{j \ge 1}$ with $\lambda_1 \ge \lambda_2 \ge \ldots > 0$ such that $\lambda_j$ is the eigenvalue with $e_j$ being the corresponding eigenfunction. That is, $T e_j = \lambda_j e_j, j \ge 1$. It is well known that $\set{v_j = \sqrt {\lambda_j} e_j}_{j\ \ge 1}$ is an orthonormal basis of $\cH_K$.
%Let $\set{\mu^{(\ell)}}_{\ell \ge 1}$ be the distinct eigenvalues
%of the integral operator $L_K$ associated with $T_K$.
%and let $m_{\ell}$ be the be the sum of multiplicity of
%the eigenvalue $\set{\mu^{(\ell')}}_{\ell'=1}^{\ell}$.
%That is, $m_{\ell'} - m_{\ell'-1}$ is the multiplicity
%of $\mu^{(\ell')}$ with $m_{-1} = 0$.
We note that $\cH_{K}(\mu)$ can also be specified by
\bals
\cH_{K}(\mu) = \set{f \in \cH_{K}(\mu)
\longmid f = \sum\limits_{j =1}^{\infty}  \beta_j v_j,
\sum_{j=1}^{\infty} \beta_j^2 \le \mu^2}.
\eals

We let the gram matrix of $K$ over the full sample be $\bK \in \RR^{n \times n}, \bK_{ij} = K(\bbx_i,\bbx_j)$ for all $i,j \in [n]$, and $\bK_n \defeq \bK/n$. %Throughout this paper, we assume that $K$ is strictly positive definite, so that $\bK$ is always positive definite with fixed $\bX_n$. When $\bX_n$ are sampled at random, we assume $\bK$ is positive definite with probability $1$. Such strictly positive kernel widely exists in machine learning, such as the Neural Tangent Kernel (NTK) \citep{JacotHG18-NTK,du2018gradient-gd-dnns}. the full sample $\bX_n = \set{\bbx_i}_{i=1}^n$ are drawn i.i.d. according to an underlying data distribution $P$, and $P$ is the distribution of $\bbx_i$ with probability measure $\mu^{(P)}$.
Then $\hat \lambda_1 \ge \hat \lambda_2 \ldots \ge \hat \lambda_n \ge 0$ are the eigenvalues of $\bK_n$,
and $\hat \lambda_1 \le \tr{\bK_n} \le \tau_0^2$.

In TKL, we introduce the operator $\hat T_n \colon \cH_K \to \cH_K$ which is important for our analysis. $\hat T_n$ is defined by
\bals
\hat T_n g \defeq \frac 1n \sum_{i=1}^n K(\cdot,\bbx_i) g(\bbx_i), \quad g \in \cH_K.
\eals
The first $n$ eigenvalues of $T_n$ are
$\set{\hat \lambda_i}_{i=1}^n$,
and all the other eigenvalues of $T_n$ are $0$.
By spectral theorem, all the normalized eigenfunctions, denoted by $\set{{\Phi}^{(k)}}_{k = 1}^n$ with ${\Phi}^{(k)} = 1/{\sqrt{n \hat \lambda_k}} \cdot \sum_{j=1}^n
K(\cdot,\bbx_j) \bth{\bU^{(k)}}_j$, is an orthonormal  basis of $\cH_{\bX_n}$, where $\bU^{(k)}$ is the $k$-th eigenvector of $\bK_n$ corresponding to the eigenvalue $\hat \lambda_k$ for all $k \in [n]$. Since $\cH_{\bX_n} \subseteq \cH_{K}$, we can complete $\set{{\Phi}^{(k)}}_{k = 1}^n$ so that
$\set{{\Phi}^{(k)}}_{k \ge 1}$ is an orthonormal basis of the RKHS $\cH_{K}$.

We also introduce the definition about convex and symmetric sets below as well as
Lemma~\ref{lemma:TLC-kernel-ind}, which lay the foundation of the proof of Theorem~\ref{theorem:TLC-kernel}.
\begin{definition}[Convex and Symmetric Set]
\label{def:convex-symmetric-space}
A set $X$ is convex if $\alpha X+(1-\alpha)X \subseteq X$ for all $\alpha \in [0,1]$. $X$ is symmetric if $-X \subseteq X$.
\end{definition}
\begin{lemma}\label{lemma:TLC-kernel-ind}
Let $\cF= \cH_{\bX_n}(\mu)$. For every $r > 0$,
\bal\label{eq:TLC-kernel-u-ind}
& \Expect{\bY^{(u)},\bsigma}{\sup_{f\in \cF \colon T_n (f)  \le r} R^{(\textup{ind})}_{\bsigma,\bY^{(u)}}f}\le {\tilde \varphi_u}(r),
\eal%
where
\bal\label{eq:varphi-TLC-kernel-u-ind}
{\tilde \varphi_u}(r) \defeq
\min_{Q \colon 0 \le Q \le n} \pth{  \sqrt{\frac {rQ}u} + \mu
\sqrt{\frac{\sum\limits_{q = Q+1}^{n}\hat \lambda_q}{u}}}.
\eal
Similarly, for every $r > 0$,
\bal\label{eq:TLC-kernel-m-ind}
& \Expect{\bY^{(m)},\bsigma}{\sup_{f\in \cF \colon T_n (f)  \le r} R^{(\textup{ind})}_{\bsigma,\bY^{(m)}}f} \le {\tilde \varphi_m}(r),
\eal%
where
\bal\label{eq:varphi-TLC-kernel-m-ind}
{\tilde \varphi_m}(r) \defeq
\min_{Q \colon 0 \le Q \le n } \pth{  \sqrt{\frac {rQ}m} + \mu
\sqrt{\frac{\sum\limits_{q = Q+1}^{n}\hat \lambda_q}{m}}}.
\eal

\end{lemma}
%\begin{remark}
%\end{remark}
\begin{proof}

We have
\bal\label{eq:lemma-TLC-empirical-NN-seg1}
R^{(\textup{ind})}_{\bsigma,\bY^{(u)}}f = \frac 1u \sum\limits_{i=1}^u {\sigma_i}{f(\bbx_{Y_i})} =
\iprod{f}
{\frac 1u \sum\limits_{i=1}^u {\sigma_i}{K(\cdot,\bbx_{Y_i})}}_{\cH_K}.
\eal

Because $\set{{\Phi}^{(k)}}_{k \ge 1}$ is an orthonormal basis of $\cH_K$, for any $0 \le Q \le n$, we further express the first term
on the RHS of (\ref{eq:lemma-TLC-empirical-NN-seg1}) as
\bal\label{eq:lemma-TLC-empirical-NN-seg2}
\iprod{f}{\frac 1u \sum\limits_{i=1}^u {\sigma_i}{K(\cdot,\bbx_{Y_i})}}_{\cH_K}
&=\iprod{\sum\limits_{q=1}^{Q} \sqrt{\hat \lambda_q}  \iprod{f}
{\Phi_q}_{\cH_K}\Phi_q }
{v^{(Q)}(\bY^{(u)},\bsigma)}_{\cH_K} \nonumber \\
&+\iprod{\bar f}
{{\bar v}^{(Q)}(\bY^{(u)},\bsigma)}_{\cH_K},
\eal
where
\bals
\bar f &\defeq f - \sum\limits_{q=1}^{Q}   \iprod{f}
{\Phi_q}_{\cH_K}\Phi_q, \\
v^{(Q)}(\bY^{(u)},\bsigma) &\defeq \frac 1u\sum\limits_{q=1}^{Q} \frac{1}{\sqrt{\hat \lambda_q}}\iprod{\sum\limits_{i=1}^u {\sigma_i}{K(\cdot,\bbx_{Y_i})}}
{\Phi_q}_{\cH_K}\Phi_q, \\
{\bar v}^{(Q)}(\bY^{(u)},\bsigma) &\defeq
\frac 1u \sum\limits_{q = Q+1}^{n} \iprod{\sum\limits_{i=1}^u {\sigma_i}{K(\cdot,\bbx_{Y_i})}}{\Phi_q}_{\cH_K}\Phi_q.
\eals
Define the operator $\hat T_n \colon \cH_K \to \cH_K$ by $\hat T_n f = 1/n \cdot \sum_{i=1}^n K(\cdot,\bbx_{i}) f(\bbx_{i})$ for any $f \in \cH_K$.
It can be verified that $\Phi_q$ is the eigenfunction of $\hat T_n$ with the corresponding eigenvalue $\hat \lambda_q$ for $q \in [n]$.
We have
\bals
\iprod{\hat T_n f}{f}_{\cH_K} = \iprod{\frac 1n \sum_{i=1}^n K(\cdot,\bbx_{i}) f(\bbx_{i})}{f}_{\cH_K} = T_n({f)}.
\eals
As a result,
\bal\label{eq:lemma-TLC-empirical-NN-seg3}
\norm{\sum\limits_{q=1}^Q \sqrt{\hat \lambda_q}  \iprod{f}
{\Phi_q}\Phi_q }{\cH_{K}}^2
=\sum\limits_{q=1}^Q \hat \lambda_q
\iprod{f}{\Phi_q}_{\cH_K}^2 \le \sum\limits_{q = 1}^n \hat \lambda_q
\iprod{f}{\Phi_q}_{\cH_K}^2 = \iprod{\hat T_n f}{f}_{\cH_K}
=T_n (f) \le r,
\eal
which holds for all $f$ such that $T_n (f) \le r$.

Combining (\ref{eq:lemma-TLC-empirical-NN-seg1})-(\ref{eq:lemma-TLC-empirical-NN-seg3}), we have
\bal\label{eq:lemma-TLC-empirical-NN-seg4}
&\Expect{\bY^{(u)},\bsigma}{\sup_{f\in \cF \colon T_n (f)  \le r} R^{(\textup{ind})}_{\bsigma,\bY^{(u)}}f}\nonumber \\
&\stackrel{\circled{1}}{\le} \sup_{f\in \cF \colon T_n (f)  \le r}
\norm{\sum\limits_{q=1}^Q \sqrt{\hat \lambda_q}  \iprod{f}
{\Phi_q}_{\cH_K}\Phi_q}{\cH_K}
\cdot
\Expect{\bY^{(u)},\bsigma}{\norm{v^{(Q)}(\bY^{(u)},\bsigma)}{\cH_K}} \nonumber \\
&\phantom{=}+ \norm{\bar f}{\cH_K} \cdot
\Expect{\bY^{(u)},\bsigma}{\norm{{\bar v}^{(Q)}(\bY^{(u)},\bsigma)}{\cH_K}}
\nonumber \\
&\le \sqrt{r} \Expect{\bY^{(u)},\bsigma}{\norm{v^{(Q)}(\bY^{(u)},\bsigma)}{\cH_K}}
+ \mu \Expect{\bY^{(u)},\bsigma}{\norm{{\bar v}^{(Q)}(\bY^{(u)},\bsigma)}{\cH_K}}.
\eal
where $\circled{1}$ is due to the Cauchy-Schwarz inequality.

We have
\bal\label{eq:lemma-TLC-empirical-NN-seg5}
&\frac 1u \Expect{\bY^{(u)},\bsigma}{\iprod{
\sum\limits_{i=1}^u {\sigma_i}{K(\cdot,\bbx_{Y_i})}}{\Phi_q}_{\cH_K}^2} \stackrel{\circled{1}}{=}\frac 1u \Expect{\bY^{(u)}}{\sum\limits_{i=1}^u
\iprod{K(\cdot,\bbx_{Y_i})}{\Phi_q}_{\cH_K}^2}  \nonumber \\
&= \frac 1u \Expect{\bY^{(u)}}{\sum\limits_{i=1}^u
\Phi_q(\bbx_{Y_i})^2}
=\frac 1n \sum\limits_{i=1}^n \Phi^2_q(\bbx_i)  = \iprod{\hat T_n\Phi_q}{\Phi_q} = \hat \lambda_q.
\eal
Here $\circled{1}$ is due to the fact that $\Expect{}{\sigma_i} = 0$ for all $i \in [n]$.
It follows from (\ref{eq:lemma-TLC-empirical-NN-seg5}) that
\bal\label{eq:lemma-TLC-empirical-NN-seg6}
&\Expect{\bY^{(u)},\bsigma}{\norm{v^{(Q)}(\bY^{(u)},\bsigma)}{\cH_K}}
= \frac 1{\sqrt u} \Expect{\bY^{(u)},\bsigma}{\sqrt{\frac 1u \sum\limits_{q=1}^Q \frac{1}{\hat \lambda_q} \iprod{\sum\limits_{i=1}^u {\sigma_i}{K(\cdot,\bbx_{Y_i})}}{\Phi_q}_{\cH_K}^2} } \nonumber \\
&\stackrel{\circled{1}}{\le} \frac 1 {\sqrt u} \sqrt{ \frac 1u
\Expect{\bY^{(u)},\bsigma}{\sum\limits_{q=1}^Q \frac{1}{\hat \lambda_q}\iprod{
\sum\limits_{i=1}^u {\sigma_i}{K(\cdot,\bbx_{Y_i})}}{\Phi_q}_{\cH_K}^2}} \stackrel{\circled{2}}{=}\sqrt{\frac Qu},
\eal
where $\circled{1}$ is due to the Jensen's inequality, $\circled{2}$ follows from (\ref{eq:lemma-TLC-empirical-NN-seg5}). Similarly, we have
\bal\label{eq:lemma-TLC-empirical-NN-seg7}
&\Expect{\bY^{(u)},\bsigma}{\norm{{\bar v}^{(Q)}(\bY^{(u)},\bsigma)}{\cH_K}} = \frac 1{\sqrt u}\Expect{\bY^{(u)},\bsigma}{\sqrt{\frac 1u \sum\limits_{q = Q+1}^{n} \iprod{
\sum\limits_{i=1}^u {\sigma_i}{K(\cdot,\bbx_{Y_i})}}{\Phi_q}_{\cH_K}^2}} \nonumber \\
&\le \frac 1{\sqrt u} \sqrt{ \frac 1u
\Expect{\bY^{(u)},\bsigma}{\sum\limits_{q = Q+1}^{n} \iprod{\sum\limits_{i=1}^u
{\sigma_i}{K(\cdot,\bbx_{Y_i})}}{\Phi_q}_{\cH_K}^2 }}
=\sqrt{\frac{\sum\limits_{q = Q+1}^{n}\hat \lambda_q}{u}}.
\eal
It follows from (\ref{eq:lemma-TLC-empirical-NN-seg4}), (\ref{eq:lemma-TLC-empirical-NN-seg6}),
 and (\ref{eq:lemma-TLC-empirical-NN-seg7}) that
\bals%\label{eq:lemma-TLC-empirical-NN-seg8}
\Expect{\bd,\bsigma}{\sup_{f\in \cF \colon T_n (f)  \le r}\iprod{f}
{\frac 1u \sum\limits_{i=1}^u {\sigma_i}{K(\cdot,\bbx_{Y_i})}} }
\le \min_{Q \colon 0 \le Q \le n} \pth{\sqrt{\frac {rQ}u} +
\mu
\sqrt{\frac{\sum\limits_{q = Q+1}^{n}\hat \lambda_q}{u}}},
\eals
which completes the proof of (\ref{eq:TLC-kernel-u-ind}). (\ref{eq:TLC-kernel-m-ind}) can be proved by a similar argument.
\end{proof}

\begin{proof}
[\textbf{\textup{Proof of Theorem~\ref{theorem:TLC-kernel}}}]
It follows from Assumption~\ref{assumption:Lipschitz-loss-Tn-f-Ln-ellf} that for all $h \in \Delta^*_{\cF}$, $T_n(h) \le B'L^2 \cL_n(h)$. To see this, let $h = \ell_{f_1}-\ell_{f^*_n}$ with $f_1,f_2 \in \cF$. Then
$T_n(h) = T_n(\ell_{f_1}-\ell_{f^*_n}) \le L^2 T_n (f_1-f^*_n) \le B'L^2 \cL_n(\ell_{f_1}-\ell_{f^*_n}) = B'L^2 \cL_n(h)$. This inequality indicates that  Assumption~\ref{assumption:main} (2) holds with $B =  B'L^2$. As a result,  Assumption~\ref{assumption:main} holds.

We now apply Theorem~\ref{theorem:TLC}, Lemma~\ref{lemma:TLC-delta-ell-f}, and Corollary~\ref{corollary:TLC-delta-ell-f-ind-rc}
with the function class $\cF = \cH_{\bX_n}(\mu)$ and $\tT_n(\cdot)$ defined in (\ref{eq:tTn-def})
to prove
that for every $x > 0$ and every $\delta \in (0,1)$, with probability at least $1-\exp(-x)-\delta$ over $\set{\bZ}$,
%(\ref{eq:TLC-kernel-general-loss})
\bal\label{eq:TLC-kernel-general-loss}
\cL_u(h) &\le \cL_m(h) + \frac {2L^2B'\pth{\cL_n(\ell_{f_1} - \ell_{f^*_n}) + \cL_n(\ell_{f_2} - \ell_{f^*_n})}}{K_0}  \nonumber \\
&\phantom{=}+c'_3\min_{0 \le Q \le n} r(u,m,Q) + \frac{c_1x}{N_{u,m,\delta}},
\quad \forall h \in \Delta_{\cH_{\bX_n}(\mu)},
\eal
where $c'_3$ is an absolute positive number depending on $c_0,L_0,L,\mu$, and
\bals
r(u,m,Q) = Q \pth{\frac{1}{u} + \frac{1}{m}} + \sqrt{\frac{\sum\limits_{q = Q+1}^{n}\hat \lambda_q}{u}}
+ \sqrt{\frac{\sum\limits_{q = Q+1}^{n}\hat \lambda_q}{m}}.
\eals

Let $h = \ell_{f_1}-\ell_{f_2} \in \Delta_{\cF}$ with $f_1,f_2 \in \cF$, and $\tT_n(h) \le r$. By the definition of $\tT_n$, there exist $f_1,f_2 \in \cF$ such that $h = \ell_{f_1}-\ell_{f_2}$ and $2B \cL_n(\ell_{f_1} - \ell_{f^*_n}) + 2B \cL_n(\ell_{f_2} - \ell_{f^*_n}) \le r'$ for arbitrary $r' > r$. For simplicity of notations we set $r' = 1.1r$.
Let $\bsigma = \set{\sigma_i}_{i=1}^{\max\set{u,m}}$ be i.i.d. Rademacher variables. For $r \ge 0$ we have
\bal\label{eq:TLC-kernel-seg1}
&2\Expect{\bd}{\sup_{h \colon h \in \Delta_{\cF},\tT_n(h) \le r} R^{(\textup{ind})}_{\bsigma,\bY^{(u)}}h } \nonumber \\
&\stackrel{\circled{1}}{\le} 2\Expect{\bY^{(u)},\bsigma}{\sup_{f_1,f_2 \in \cF \colon 2B \cL_n(\ell_{f_1} - \ell_{f^*_n}) + 2B \cL_n(\ell_{f_2} - \ell_{f^*_n}) \le r'} R^{(\textup{ind})}_{\bsigma,\bY^{(u)}}(\ell_{f_1}-\ell_{f_2})} \nonumber \\
&\le 2\Expect{\bY^{(u)},\bsigma}{\sup_{f_1\in \cF \colon  \cL_n(\ell_{f_1} - \ell_{f^*_n}) \le 1.1r/2B} R^{(\textup{ind})}_{\bsigma,\bY^{(u)}} \pth{\ell_{f_1}- \ell_{f^*_n}}} \nonumber \\
&\quad\quad+ 2\Expect{\bY^{(u)},\bsigma }{\sup_{f_2 \in \cF \colon  \cL_n(\ell_{f_2} - \ell_{f^*_n}) \le 1.1r/2B} R^{(\textup{ind})}_{\bsigma,\bY^{(u)}}\pth{\ell_{f_2}- \ell_{f^*_n}}} \nonumber \\
&\stackrel{\circled{2}}{\le} 4L\Expect{\bY^{(u)},\bsigma}{\sup_{f\in \cF \colon
T_n\pth{f - f^*_n} \le rB_1/2B} R^{(\textup{ind})}_{\bsigma,\bY^{(u)}} \pth{f- f^*_n}}  \nonumber \\
&\stackrel{\circled{3}}{\le} 8L\Expect{\bY^{(u)},\bsigma}{\sup_{f\in \cF \colon T_n (f)  \le rB_1/8B} R^{(\textup{ind})}_{\bsigma,\bY^{(u)}}f} \stackrel{\circled{4}}{\le}8L {\tilde \varphi_u}\pth{\frac {rB_1}{8B}} = 8L {\tilde \varphi_u}\pth{\frac {1.1r}{8L^2}}.
\eal
Here $\circled{1}$ is due to the definition of $\tT_n$. $\circled{2}$ is due to the contraction property in Theorem~\ref{theorem:contraction-RC} and the fact that the loss function $\ell(\cdot,\cdot)$ is $L$-Lipschitz continuous, and $B_1$ is a positive constant such that $B_1 = 1.1B'$.
$\circled{3}$ follows from the fact that $(f-f^*_n)/2 \in \cF$ because $\cF$ is symmetric and convex. ${\tilde \varphi_u}$ in $\circled{4}$ is defined in (\ref{eq:varphi-TLC-kernel-u-ind}) in Lemma~\ref{lemma:TLC-kernel-ind}.

It follows from (\ref{eq:TLC-kernel-seg1}) that
\bals
2\Expect{\bY^{(u)},\bsigma }{\sup_{h \colon h \in \Delta_{\cF},\tT_n(h) \le r} R^{(\textup{ind})}_{\bsigma,\bY^{(u)}}h }\le  8L {\tilde \varphi_u}\pth{\frac {rB_1}{8B}}.
\eals
By a similar argument and noting that $\pth{\ell_{f_1}-\ell_{f_2}}^2 \le 2 \pth{(\ell_{f_1} - \ell_{f^*_n})^2 + (\ell_{f_2} - \ell_{f^*_n})^2}$, we have
\bal\label{eq:TLC-kernel-seg1-post}
&2\Expect{\bY^{(u)},\bsigma }{\sup_{h \colon h \in \Delta_{\cF},\tT_n(h) \le r} R^{(\textup{ind})}_{\bsigma,\bY^{(u)}}h^2 } \nonumber \\
&\le 8 \Expect{\bY^{(u)},\bsigma}{\sup_{f\in \cF \colon T_n\pth{f - f^*} \le rB_1/2B} R^{(\textup{ind})}_{\bsigma,\bY^{(u)}}\pth{\ell_{f_1} - \ell_{f^*_n}}^2}
\nonumber \\
&\stackrel{\circled{1}}{\le} 16L_0 \Expect{\bY^{(u)},\bsigma}{\sup_{f\in \cF \colon T_n\pth{f - f^*} \le rB_1/2B} R^{(\textup{ind})}_{\bsigma,\bY^{(u)}}\pth{\ell_{f_1} - \ell_{f^*_n}}} \nonumber \\
&\le 32L_0 L{\tilde \varphi_u}\pth{\frac {rB_1}{8B}} = 32L_0 L {\tilde \varphi_u}\pth{\frac {1.1r}{8L^2}},
\eal
where $\circled{1}$ is due to the contraction property in Theorem~\ref{theorem:contraction-RC} and $0 \le \ell_f(i) \le L_0$ for all $f \in \cF$ and $i \in [n]$.
Define $\varphi_u(r) \defeq \max\set{8L {\tilde \varphi_u}\pth{\frac {1.1r}{8L^2}},32L_0 L {\tilde \varphi_u}\pth{\frac {1.1r}{8L^2}}} =  L' {\tilde \varphi_u}\pth{\frac {1.1r}{8L^2}}$ with $L' \defeq \max\set{8L,32L_0 L}$. It can be verified that $\varphi_u$ is a sub-root function by checking the definition of the sub-root function.

Similarly, we have
\bals
2\Expect{\bY^{(m)},\bsigma }{\sup_{h \colon h \in \Delta_{\cF},\tT_n(h) \le r} R^{(\textup{ind})}_{\bsigma,\bY^{(m)}}h } &\le 8L {\tilde \varphi_m}\pth{\frac {rB_1}{8B}}
= 8L {\tilde \varphi_m} \pth{\frac {1.1r}{8L^2}}, \\
2\Expect{\bY^{(m)},\bsigma }{\sup_{h \colon h \in \Delta_{\cF},\tT_n(h) \le r} R^{(\textup{ind})}_{\bsigma,\bY^{(m)}} h^2 }
&\le 32L_0  L{\tilde \varphi_m}\pth{\frac {rB_1}{8B}} = 32L_0 L{\tilde \varphi_m} \pth{\frac {1.1r}{8L^2}},
\eals
and $\varphi_m(r) \defeq \max\set{8L {\tilde \varphi_m} \pth{\frac {1.1r}{8L^2}}, 32L_0 L{\tilde \varphi_m} \pth{\frac {1.1r}{8L^2}}} = L'{\tilde \varphi_m}\pth{\frac {1.1r}{8L^2}}$ is also a sub-root function. Let $r_u,r_m$ be the fixed point of  $\varphi_u$ and $\varphi_m$ respectively. We define $\varphi_{u,m}(r) \defeq \varphi_u(r) + \varphi_m(r)$,
then $\varphi_{u,m}$ satisfies condition (\ref{eq:TLC-cond-psi-general}) in Theorem~\ref{theorem:TLC} according to Corollary~\ref{corollary:TLC-delta-ell-f-ind-rc}.
$\varphi_{u,m}$ is also a sub-root function. Let $r_{u,m}$ be the fixed point
of $\varphi_{u,m}$. Since both $\varphi_u$ and $\varphi_m$ are sub-root
functions, we have $r_{u,m} \ge \max\set{r_u,r_m}$, and
\bals
r_{u,m} = \varphi(r_{u,m})
= \varphi_u(r_{u,m}) + \varphi_m(r_{u,m}) \ge \varphi_u(r_u) +  \varphi_m(r_m)
= r_u + r_m.
\eals
It then follows from Theorem~\ref{theorem:TLC} and Lemma~\ref{lemma:TLC-delta-ell-f} that, for all $h \in \Delta_{\cF}$, we have
\bal\label{eq:TLC-kernel-seg2-pre}
\cL_u(h) \le \cL_h(\overline{\set{\bZ_{\bd}}}) + \frac {2B}{K_0} \pth{\cL_n(\ell_{f_1} - \ell_{f^*_n}) + \cL_n(\ell_{f_2} - \ell_{f^*_n})} +c_0r_{u,m} + \frac{c_1x}{N_{u,m,\delta}}.
\eal
Let $0 \le r \le r_{u,m}$. Then it follows from \citep[Lemma 3.2]{bartlett2005} that
$0 \le r \le \varphi_{u,m}(r)$. Therefore, by the definition of ${\tilde \varphi_u}$ in (\ref{eq:varphi-TLC-kernel-u-ind}) and ${\tilde \varphi_m}$ in (\ref{eq:varphi-TLC-kernel-m-ind}),
for every $0 \le Q \le n$ we have

\bals
\frac{r}{ L' } \le \sqrt{\frac {1.1r Q}{8L^2u}} +
\sqrt{\frac {1.1rQ}{8L^2m}} +
\mu \sqrt{\frac{\sum\limits_{q = Q+1}^{n}\hat \lambda_q}{u}}
+
\mu \sqrt{\frac{\sum\limits_{q = Q+1}^{n}\hat \lambda_q}{m}}.
\eals
Solving the above quadratic inequality for $r'$, we have
\bal\label{eq:TLC-kernel-seg2}
r\le \hat c_3 Q \pth{\frac{1}{u} + \frac{1}{m}} + \hat c_3
\pth{
 \sqrt{\frac{\sum\limits_{q = Q+1}^{n}\hat \lambda_q}{u}}
+
 \sqrt{\frac{\sum\limits_{q = Q+1}^{n}\hat \lambda_q}{m}}}
 = \hat c_3 r(u,m,Q),
\eal
where $\hat c_3$ is a positive constants depending on $L_0,L,\mu$.
(\ref{eq:TLC-kernel-seg2}) holds for every $0 \le Q \le n$,
so it follows from
(\ref{eq:TLC-kernel-seg2-pre})
and (\ref{eq:TLC-kernel-seg2})
that (\ref{eq:TLC-kernel-general-loss}) holds with $c'_3 = c_0\hat c_3$.

We then apply
Theorem~\ref{theorem:TLC-delta-star-ell-f} and (\ref{eq:TLC-kernel-general-loss}) to prove (\ref{eq:TLC-kernel-excess-loss}). With $h = \ell_{\hat f_{\bd,m}} - \ell_{\hat f_{\bd,u}}$, then we can set $f_1 = \hat f_{\bd,m}$  and $f_2 = \hat f_{\bd,u}$ in (\ref{eq:TLC-kernel-general-loss}).
We now find the sub-root function $\psi^*_{u,m}$ satisfying (\ref{eq:TLC-cond-um-delta-star-ell-f-psi-star})  in Theorem~\ref{theorem:TLC-delta-star-ell-f}. Applying Theorem~\ref{theorem:TC-RC}, we need to find the sub-root function $\psi^*_{u,m}$ such that
\bals
\psi^*_{u,m} (r) \ge  2\max\Bigg\{&\Expect{}{\sup_{h \colon h \in\Delta^*_{\cF},B \cL_n(h) \le r} R^{(\textup{ind})}_{\bsigma,\bY^{(u)}}h}, \Expect{}{\sup_{h \colon h \in\Delta^*_{\cF},B \cL_n(h) \le r} R^{(\textup{ind})}_{\bsigma,\bY^{(m)}}h}, \nonumber \\
&\Expect{}{\sup_{h \colon h \in\Delta^*_{\cF},B \cL_n(h) \le r}
R^{(\textup{ind})}_{\bsigma,\bY^{(\min\set{u,m})}}h^2}\Bigg\},
\eals
By repeating the argument in (\ref{eq:TLC-kernel-seg1}) and (\ref{eq:TLC-kernel-seg1-post}),
we have
\bals
\psi^*_{u,m}(r) = \Theta\pth{\varphi_{u,m}(r) }.
\eals
Let $r^*$ be the fixed point of $\psi^*_{u,m}$. Any $r \le r^*$ satisfies $r \le\psi^*_{u,m}(r)$ so that $r \le \Theta\pth{ \min_{0 \le Q \le n} r(u,m,Q) }$. As a result, it follows from (\ref{eq:TLC-bound-delta-star-ell-f}) in Theorem~\ref{theorem:TLC-delta-star-ell-f}
that,
with probability at least
$1-2\exp(-x)-2\delta$,
\bal
\cL_n(\ell_{\hat f_{\bd,u}} - \ell_{f^*_n}) &\le  c_2 \pth{\Theta\pth{ \min_{0 \le Q \le n} r(u,m,Q) } + \frac{x}{N_{u,m,\delta}}}, \label{eq:TLC-kernel-seg4} \\
\cL_n(\ell_{\hat f_{\bd,m}} - \ell_{f^*_n}) &\le  c_2 \pth{\Theta\pth{ \min_{0 \le Q \le n} r(u,m,Q) }+ \frac{x}{N_{u,m,\delta}}}. \label{eq:TLC-kernel-seg5}
\eal
We note that $\cL_{ \ell_{\hat f_{\bd,m}} - \ell_{\hat f_{\bd,u}}}(\overline{\set{\bZ_{\bd}}}) \le 0$ due to the optimality of $\hat f_{\bd,m}$. Applying the upper bounds in (\ref{eq:TLC-kernel-seg4}) and (\ref{eq:TLC-kernel-seg5}) to (\ref{eq:TLC-kernel-general-loss}) proves (\ref{eq:TLC-kernel-excess-loss}).

\end{proof}

\end{appendix}

%% if your bibliography is in bibtex format, uncomment commands:
\bibliographystyle{imsart-number} % Style BST file (imsart-number.bst or imsart-nameyear.bst)
\bibliography{ref}       % Bibliography file (usually '*.bib')

\end{document}